\documentclass[11pt]{article}

\newif\ifpreprint   \preprinttrue
\newif\ifbiblatex   \biblatexfalse

\usepackage{amsthm,amsmath,amssymb,mathtools}
\usepackage{graphicx,booktabs}
\usepackage[dvipsnames]{xcolor}
\usepackage{microtype}

\ifpreprint
  \usepackage[margin=1in]{geometry}
  \usepackage{authblk}
  \usepackage{newtxtext,newtxmath}   
\else
  \providecommand{\affil}[2][]{}     
\fi

\ifbiblatex
\else
  \usepackage{natbib}
\fi

\usepackage{bm}
\usepackage{mathrsfs}
\usepackage{tikz}
\usepackage{tabularx}

\newcommand{\dv}[3][]{\ifx\\#1\\\frac{\mathrm{d}#2}{\mathrm{d}#3}\else\frac{\mathrm{d}^{#1}#2}{\mathrm{d}#3^{#1}}\fi}

\newtheorem{theorem}{Theorem}[section]
\newtheorem{lemma}[theorem]{Lemma}
\newtheorem{proposition}[theorem]{Proposition}
\newtheorem{corollary}[theorem]{Corollary}

\theoremstyle{definition}
\newtheorem{definition}[theorem]{Definition}
\newtheorem{example}[theorem]{Example}
\theoremstyle{remark}
\newtheorem{remark}[theorem]{Remark}

\DeclareMathOperator{\E}{\mathbb{E}}

\DeclareMathOperator*{\argmax}{argmax}
\DeclareMathOperator*{\argmin}{argmin}

\newcommand{\R}{\mathbb{R}}

\newcommand{\cut}[1]{}
\newcommand{\rf}{\mathsf{ref}}

\newcommand{\tv}{\mathsf{TV}}
\newcommand{\softmax}{\mathsf{s}}
\newcommand{\kl}{D_{\mathsf{KL}}}

\newcommand{\nl}{\mathsf{NL}}
\newcommand{\rl}{\mathsf{RL}}

\newcommand{\mc}{\mathsf{MC}}
\newcommand{\MP}{\mathsf{P}}

\DeclareMathOperator*{\essinf}{essinf}
\DeclareMathOperator*{\esssup}{esssup}
\DeclareMathOperator*{\essosc}{essosc}

\ifpreprint
  \usepackage[colorlinks=true,linkcolor=Maroon,
    urlcolor=MidnightBlue,citecolor=MidnightBlue]{hyperref}
\else
  \usepackage[colorlinks=false]{hyperref}
\fi

\usepackage[capitalise,noabbrev]{cleveref}
\Crefname{assumption}{Assumption}{Assumptions}

\title{A Markov Chain Approach to Preference Alignment}
\author{Takuya Koriyama}
\author{Tengyuan Liang}
\affil{The University of Chicago}
\date{\today}

\begin{document}

\maketitle
\begin{abstract}
We propose Markov Chain from Human Feedback (MCHF), an elementary approach for aligning generative models from pairwise human preferences. Unlike Reinforcement Learning from Human Feedback (RLHF), which reduces comparisons to a scalar reward, and Nash Learning from Human Feedback (NLHF), which preserves pairwise utilities through a KL-regularized minimax optimization, MCHF uses pairwise preferences directly to define a transition mechanism over model outputs. Given a pairwise utility $U(x,y)$, which quantifies human preference for $y$ over $x$,  and a reference probability distribution $\mu_{\mathsf{ref}}$, we define a Markov kernel $\mathsf{P}(x, dy)\propto \exp(U(x,y))\mu_{\mathsf{ref}}(dy)$, and take the Markov chain starting from $\mu_{\mathsf{ref}}$ as an iterative alignment procedure. We show that MCHF converges geometrically fast to the stationary distribution, with a convergence rate governed by the seminorm $\|U\|_\oplus=\inf_{g,f\in L^\infty(\mu_{\mathsf{ref}})}\|U-g\oplus f\|_\infty$, which quantifies the non-transitive structure of the pairwise utility. We further show that a mirror-descent algorithm for NLHF satisfies an analogous structure-adaptive convergence guarantee. Finally, through a perturbation analysis, we prove that when $\|U\|_\oplus$ is small, MCHF and NLHF agree up to first order around an RLHF solution, which yields a unified view of reward-based, game-theoretic, and Markovian approaches to alignment. 
In particular, for two natural algorithms that converge to the MCHF/NLHF equilibria, we show that the first step of MCHF and NLHF recovers the RLHF solution based on the column-sum reward $\hat{f}(y)=\int \mu_{\mathsf{ref}}(dx) U(x, y)$, and starting from the second iteration, both algorithms incorporate the same linear functional of the residual $U-(-\hat f)\oplus \hat f$, which captures the non-transitive structure of the pairwise utility $U$.
\end{abstract}

\tableofcontents

\section{Introduction}
Aligning AI systems with human preferences has become a central problem in modern machine learning, as large generative models are increasingly deployed in practical applications \citep{openai2023gpt4,geminiteam2025gemini,anthropic2024claude}. 
In many alignment pipelines, human feedback is provided in the form of pairwise comparisons: given two candidate outputs, a human evaluator indicates which one is preferred. Such preference data are often easier to collect and prove more reliable than absolute numerical scores, as argued in several widely used alignment methods. 

\subsection{Reinforcement Learning from Human Feedback (RLHF)}
In Reinforcement Learning from Human Feedback (RLHF), 
pairwise comparisons are typically used to fit a scalar reward model, often through a probabilistic preference model such as the Bradley--Terry or Luce (BTL) model \citep{bradley1952rank,luce1959individual}:
   \begin{align}\label{eq:assumption_bt}
      \mathcal{P}(x \prec y) =  \sigma_{\text{sigmoid}}(R(y)-R(x)) \quad \text{where} \quad  \sigma_{\text{sigmoid}}(t) \equiv (1+e^{-t})^{-1}. 
  \end{align}
  More precisely, given samples of pairwise comparisons $\{ x_i \prec y_i\}_{i=1}^n$, assuming that they are drawn from the BTL model, one estimates the reward function $R$ by maximum likelihood estimation. 
 The policy is then optimized against the learned reward $\hat{R}$, usually with a KL regularization term that keeps it close to a reference model $\mu_\rf$ \citep{christiano2017deep,ouyang2022training,bai2022training}:
   \begin{align*}
      {\mu}_{\rl} = \argmax_{\mu} \Bigl(\int \mu(dx)\hat{R}(x) - \kl(\mu \mid \mu_{\rf})\Bigr)  \quad \text{or} \quad \mu_\rl(dy) \propto \exp(\hat{R}(y)) \mu_\rf(dy)
  \end{align*}
  In practice, this problem is often solved approximately using reinforcement-learning algorithms such as proximal policy optimization (PPO) \citep{schulman2017proximal}. More recently, several variants and alternatives to the standard RLHF pipeline have been proposed; see \cite{rafailov2023direct, ethayarajh2024kto, meng2024simpo, azar2024general, huang2024correcting}, among others. 
  

Although a variety of RLHF-style methods have been proposed, their ultimate goal remains to aggregate pairwise preference information to a single scalar reward and then tilt the reference distribution by the reward. At its core, the reward modeling relies on the strong assumption that human preference is determined by the single reward, as in the BTL model assumption \eqref{eq:assumption_bt}, which imposes a strong structural restriction on the preference structure $\mathcal{P}(x\prec y)$. In particular, such a model cannot capture genuinely cyclic preference patterns such as
$\mathcal{P}(x\prec y)>1/2, \mathcal{P}(y\prec z)>1/2,  \mathcal{P}(z\prec x)>1/2$. 
Furthermore, RLHF-style methods assume the existence of a single reward function representing aggregated human preference. In practice, preferences may vary across human annotators, and therefore cannot always be faithfully summarized by a homogeneous reward model. This limitation has motivated pluralistic alignment in the community, which tries to account for heterogeneous preferences explicitly \citep{sorensen2024roadmap}.

\subsection{Nash Learning from Human Feedback (NLHF): a Game-Theoretic Approach}
There have been a variety of attempts at incorporating preference information directly, without introducing an explicit single reward model. Among them, one influential example is the Nash Learning from Human Feedback (NLHF) \citep{munos2024nash}. 
The key idea is to first learn the preference model $(x,y)\mapsto \mathcal{P}(x \prec y)$ directly from a dataset of comparisons $\{x_i\prec y_i\}_{i=1}^n$, for instance by solving
$$\max_{\theta} \sum_{i=1}^n \log \mathcal{P}_\theta(x_i \prec y_i)$$ 
where $\{\mathcal{P}_\theta:\mathcal{X}\times \mathcal{X}\to [0,1]\}$ is a parametric class, typically represented by $\mathcal{P}_\theta(x\prec y) = \text{sigmoid}(U_\theta(x, y))$ (or more generally $\mathcal{P}_\theta = \Phi\circ U_\theta$ for an increasing link function $\Phi$ mapping to $[0,1]$), where $U_\theta$ is a neural network. Notice that when we further parametrize $U_\theta$ in additive form,  $U_\theta(x,y) = -R_\theta(x) + R_\theta(y)$, then it is equivalent to the reward modelling by BTL model used in the RLHF pipeline. 

Given such a pairwise utility function $U=U_\theta$, NLHF defines the aligned distribution through the following minimax optimization problem
\begin{align}\label{eq:intro_nlhf}
{\mu}_{\nl} \in \argmax_{\mu} \min_{\nu} \iint \nu(dx)\mu(dy) U(x, y)-  \kl(\mu \mid \mu_{\rf})
+ \kl(\nu \mid \mu_{\rf}). 
\end{align}
Minimax formulations without the KL regularization terms have also been studied in several works \cite{wang2023rlhf, swamy2024minimaximalist, wu2024self}. More broadly, game-theoretic perspectives on alignment have led to formulations based on other solution concepts, such as Stackelberg games \citep{pasztor2026stackelberg,chu2025stackelberg} and multiplayer games \citep{wu2026multiplayer}.

The NLHF paper \citep{munos2024nash} argues that the Nash-equilibrium formulation is attractive because it can yield a more diverse distribution than reward-based alignment methods. A more refined analysis of this phenomenon has been developed through the lens of social choice theory (cf. \cite{golz2025distortion, shi2025fundamental}).

Nevertheless, because NLHF is formulated as a worst-case equilibrium against an adversarial opponent, it is intrinsically more pessimistic than reward-based alignment, and it is not always clear that such a pessimistic objective is the most appropriate one for fine-tuning generative models. Moreover, NLHF treats alignment as a simultaneous game, in which the two players, $\mu$ and $\nu$, optimize a regularized zero-sum objective at the same time. In contrast, preference refinement can be more naturally viewed as a non-simultaneous, leader-follower procedure: given a current output, one samples a new output conditionally on it, favoring alternatives that are preferred to the current one.

\begin{figure}[tb]
  \centering
  \includegraphics[width=0.75\linewidth]{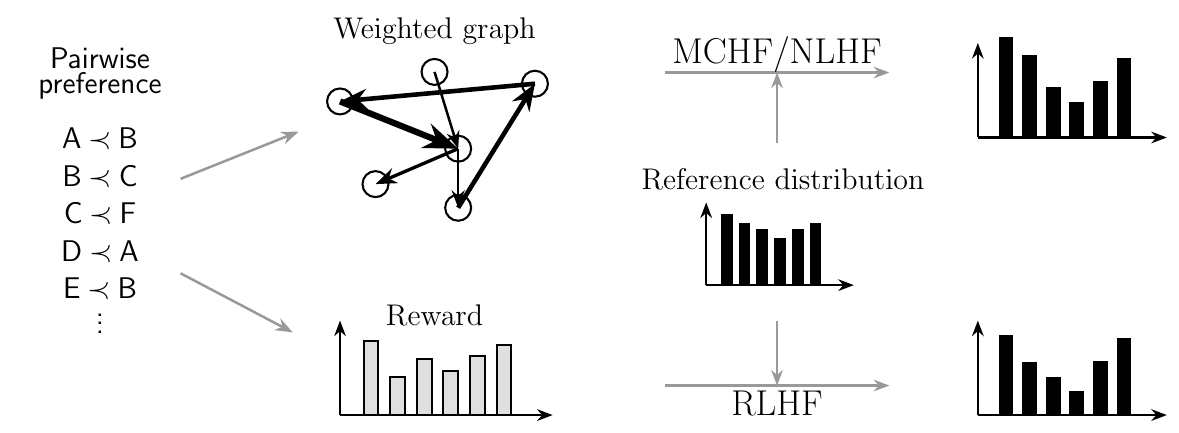}
  \caption{Workflow of alignment. We start from pairwise preference data, then we either construct a reward function (usually based on the BTL model) and run RLHF or construct a directed weighted graph and run MCHF (our proposal) or NLHF. 
  Here, unlike the scalar reward, the weighted graph can preserve non-transitive/pairwise interaction structure in human preferences, and therefore MCHF/NLHF provide more flexible alignment methods than RLHF.
  }
  \label{fig:workflow}
\end{figure}

\subsection{Our contribution: a Markov Chain Approach}
Motivated by this viewpoint, we propose Markov Chain from Human Feedback (MCHF), a Markovian alignment approach that uses pairwise preference utilities directly through such a non-simultaneous formulation. Concretely, given a utility $U(x,y)$, where larger values indicate that $y$ is preferred to $x$, and a reference model $\mu_\rf$, we define the Markov kernel $\MP$ by
$$
\MP(x,dy)=\frac{\exp(U(x,y))}{Z(x)}\mu_\rf(dy),
$$
where $Z(x)$ is the normalizing constant. Then, given a current distribution $\mu$, one step of MCHF updates it according to 
$$
\mu_{\text{new}}(dy) = \mu\MP(dy) = \int \mu(dx)\MP(x, dy).
$$
Here, the kernel $\MP$ favors transitions from a current output $x$ to outputs $y$ that are preferred relative to $x$, while the reference distribution $\mu_{\rf}$ preserves the structure, especially the support, of the reference distribution $\mu_\rf$.
Thus, in contrast to the simultaneous-game interpretation underlying NLHF, MCHF implements a non-simultaneous/leader-follower alignment procedure via the conditional sampling from the Markov kernel $\MP$.

This one-step update can also be iterated, starting from $\mu_\rf$:
$$
\mu_{\rf} \mapsto \mu_{\rf}\MP \mapsto \mu_{\rf}\MP^2
\mapsto \cdots \to \mu_\rf\MP^\infty = \mu_\mc
$$
Then, MCHF induces an interpolation path from the reference distribution to the stationary distribution $\mu_\mc$, which fully incorporates the pairwise preference utility $U$. This iterative alignment procedure
is inspired by classical spectral rank-aggregation methods \citep{page1999pagerank,negahban2012rank}, where one constructs a directed graph from pairwise comparisons and represents an aggregate ranking as the stationary distribution of a Markov chain on the graph (see \Cref{fig:workflow}).

Beyond introducing MCHF, our main theoretical contribution is to clarify its relationship with existing alignment methods such as NLHF and RLHF. 
A key observation is that, when the pairwise utility is written by an additive form 
$$
U = g\oplus f \quad \text{with} \quad (g\oplus f)(x, y)=g(x) + f(y)
$$
for some functions $g$ and $f$, 
then the MCHF and NLHF equilibria based on such $U$ both reduce to the RLHF solution with reward $f$:
$$
\mu_\rl(f) (dy) \propto \exp(f(y))\mu_\rf(dy).
$$
This motivates us to compare MCHF and NLHF via a common perturbation framework: what are their first-order behaviors when $U$ is close to an additive pairwise utility $g\oplus f$?

We address this question by introducing a seminorm $\|\cdot\|_\oplus$ on $L^\infty(\mu_\rf\otimes \mu_\rf)$:
\begin{align}\label{eq:seminorm_intro}
  \forall U\in L^\infty(\mu_\rf\otimes\mu_\rf), \quad \|U\|_\oplus \equiv \inf_{g, f\in L^\infty(\mu_\rf)} \|U-g\oplus f\|_\infty,
\end{align}
which quantifies the non-additive or non-transitive structure of the pairwise preference utility $U$. 
We then show that, when $\|U\|_\oplus$ is small, RLHF provides a natural baseline corresponding to the transitive component of the pairwise utility, and around this baseline, NLHF and MCHF exhibit the same first-order structure, despite arising from different motivations: NLHF from a game-theoretic equilibrium and MCHF from a stationary distribution of a preference-driven Markov chain. This allows us to compare the three methods, RLHF, NLHF, and MCHF, within a common perturbative framework and to quantify how their differences depend on the non-transitive component of the preference utility $U$. In this sense, we do not merely propose MCHF as an additional alignment algorithm, but also provide a unified view of reward-based, game-theoretic, and our Markovian approaches to preference alignment.
Below, we summarize technical contributions of this paper.

\paragraph{Structure-Adaptive Convergence for MCHF and NLHF}
For MCHF, we prove the geometric convergence to the stationary distribution $\mu_\mc$: 
\begin{align*}
  d_\tv(\mu_\rf\MP^t, \mu_\mc)\le d_\tv(\mu_\rf, \mu_\mc) \cdot c(\|U\|_\oplus)^t \quad \text{with} \quad  c(x) \equiv \min(1-e^{-2x}, x) \in [0,1), 
\end{align*}
where $\|U\|_\oplus$ is the seminorm defined by \eqref{eq:seminorm_intro}. By the definition of the seminorm, 
we have $\|U\|_\oplus\le \|U\|_\infty$, but importantly, for natural choices of utilities and underlying human preference structures, it holds that 
  $$
    \|U\|_\oplus \ll \|U\|_\infty
  $$
  (see Example \ref{example:logit_transform}-\ref{example:linear_transform}). 
  Since the contraction rate $c=c(\|U\|_\oplus)$ is an increasing function of $\|U\|_\oplus$, and the construction of our Markov kernel $\MP$ does not explicitly calculate the seminorm $\|U\|_\oplus$, 
this suggests that MCHF \emph{implicitly} adapts to the additive/transitive structure of $U$.

  For NLHF, we also show that a mirror descent algorithm converges exponentially fast to the NLHF equilibrium, with its convergence rate governed by $\|U\|_\oplus$ (not $\|U\|_\infty$), if the step-size is appropriately chosen. To the best of our knowledge, our theoretical guarantee provides a sharper convergence guarantee than prior works (see \cite{munos2024nash, rosset2024direct, tiapkin2025accelerating} and references therein), whose convergence rates usually depend on the naive norm $\|U\|_\infty$. 

\paragraph{Comparison between MCHF and NLHF.}
We next compare the iterative alignment dynamics induced by MCHF and NLHF from three perspectives: (1) computation, (2) coupling,  and (3) their asymptotic behavior when $\|U\|_\oplus$ is small. The third perspective is particularly useful for understanding what these dynamics learn in their early iterations.
Suppose $U\in L^\infty(\mu_\rf\otimes \mu_\rf)$ is an antisymmetric utility function, and consider the following two natural iterations to compute MCHF and NLHF equilibria:
$$
    \mu_\mc^t = \mu_\mc^{t-1}\MP, \qquad 
    \mu_\nl^t
    =
    \argmax_{\mu}
    \left\{
      \int \mu_\nl^{t-1}(dx)\mu(dy) U(x,y)
      -
      \kl(\mu|\mu_\rf)
    \right\},
$$
initialized at $\mu_\mc^0=\mu_\nl^0=\mu_\rf$. 
 We then show that they have the same first-order alignment dynamics in the small-$\|U\|_\oplus$ regime.
More precisely, for each $t\ge1$, let
$$
p_*^t(U)(y)=(d\mu_*^t/d\mu_\rf)(y)
$$ 
be the density for $*\in \{\mc, \nl\}$ (we highlight the dependence on $U$), and  
let $p_\rl(\hat{f})$ denote the RLHF density with reward given by the weighted column-sum of $U$:
$$
p_\rl(\hat{f})(y) = \frac{\exp(\hat{f}(y))}{\int\mu_\rf(dy') \exp(\hat{f}(y'))}, 
\qquad
\hat{f}(y) = \int\mu_\rf(dx) U(x, y).
$$
Then, we show that, for each $*\in \{\mc, \nl\}$, as $\|U\|_\oplus \to 0$, 
\begin{align*}
  \bigl\|p_*^1(U)-p_\rl(\hat{f})\bigr\|_{L^1(\mu_\rf)} &= o(\|U\|_\oplus),\\
  \sup_{t\ge 2} \ \Bigl\|p_*^t(U) - p_\rl(\hat{f})- {p_\rl(\hat{f})\odot\bigl[\bigl\{U-(-\hat{f})\oplus \hat{f}\bigr\}^\star p_\rl(\hat{f})\bigr]}\Bigr\|_{L^1(\mu_\rf)} &= o(\|U\|_\oplus), 
\end{align*}
where $\odot$ is the entrywise product, and $^\star$ is the adjoint operator defined as 
 $K^\star p(\cdot)=\int \mu_\rf(dx) K(x, \cdot)p(x)$.

This result gives a simple interpretation of the early-stage dynamics. The first iteration is asymptotically equivalent to RLHF with reward $\hat f$, which corresponds to the average preference for each response under the reference distribution. Starting from the second iteration, both dynamics incorporate the linear functional of the residual
$U-(-\hat f)\oplus \hat f$, 
which captures the non-transitive component of the preference. Moreover, this linear functional is independent of $t$, suggesting that the essential first-order alignment effect is already captured within the first two iterations. This provides a useful practical insight: when iterative alignment is computationally expensive, especially for large-scale models, the main first-order benefit may already be obtained after two alignment steps, as long as the non-transitive component of the utility $U$ measured by $\|U\|_\oplus$ is small.

\begin{figure}[htbp]
  \centering
  \includegraphics[width=0.7\linewidth]{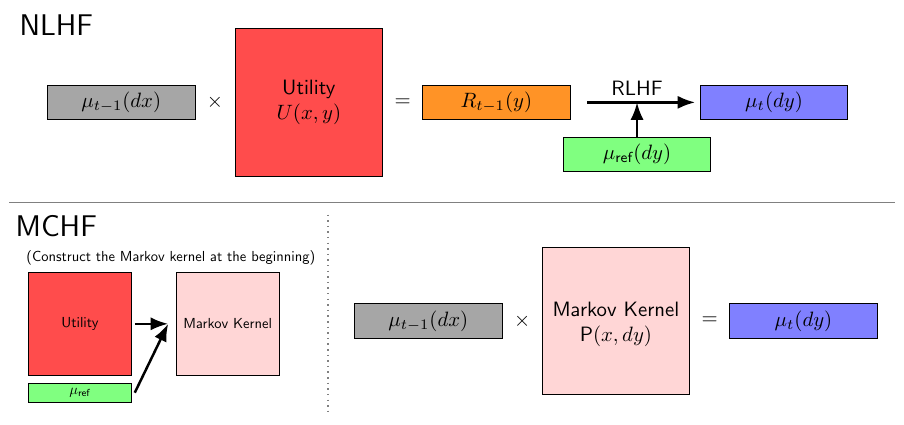}
  \caption{Comparison between MCHF iteration and NLHF iteration for antisymmetric $U$.}
  \label{fig:mchf_nlhf_update_compare}
\end{figure}

\paragraph{Notation}
Let $(\mathcal{X}, \mathcal{F}, \mu_{\rf})$ be a probability space, and let $\mathscr{P}_{\mu_\rf}$ denote the set of probability measures on $\mathcal{X}$ that are absolutely continuous with respect to $\mu_{\rf}$. We write $L^\infty = L^\infty(\mu_{\rf}\otimes \mu_{\rf})$ or $L^\infty = L^\infty(\mu_{\rf})$ when the meaning is clear from context, and similarly write $\|X\|_p = \|X\|_{L^p(\mu_{\rf})}$ for the $L^p$ norm. For any $\mathcal{F}$-measurable function $f$, we denote by $\esssup_x f(x)$ the essential supremum of $f(x)$ with respect to $\mu_\rf$, i.e., $\esssup_x f(x)=\inf\{M: \mu_{\rf}(f(x)>M) = 0\}$. For any multivariate function $F(x_1, \dots, x_p)$, 
$\esssup_{x_1, \dots x_p} F$ is with respect to the product measure $\mu_\rf \otimes \dots \mu_\rf$. {A more comprehensive list of notations is provided at the beginning of the appendix.}

\section{Definition of seminorm: measuring non-additive structure of utility}\label{sec:projection_utility}
Let $U(x, y)$ be a pairwise utility function
that encodes preference information as follows:
\begin{align*}
  \text{$U(x,y)$ is large}
  \quad \Longleftrightarrow \quad
  \text{``a human is more likely to prefer $y$ to $x$.''}
\end{align*}
Throughout this paper, we assume $U \in L^\infty(\mu_\rf\otimes \mu_\rf)$, i.e., its $L^\infty$ norm with respect to the product measure is bounded. 

We now define a seminorm on $L^\infty(\mu_\rf\otimes \mu_\rf)$, which serves as a fundamental complexity measure for the alignment dynamics of MCHF and NLHF throughout this paper.
\begin{definition}
Define the seminorm $\|\cdot\|_\oplus$ on $L^\infty(\mu_\rf\otimes \mu_\rf)$ by
  $$
  \|U\|_\oplus
  =
  \inf_{g,f\in L^\infty(\mu_\rf)}
  \|U-g\oplus f\|_\infty,
  $$
  where $(g\oplus f)(x, y)=g(x)+f(y)$. 
\end{definition}
This seminorm $\|U\|_\oplus$ captures the \textit{additive defect} of the utility; the larger $\|U\|_\oplus$ is, the more $U$ has a non-additive structure. 
We can also view this seminorm as the $L^\infty$-distance to the class of additive
functions $\{g\oplus f:g,f\in L^\infty(\mu_\rf)\}$. If the state space is finite, a similar yet 
more refined decomposition of pairwise comparison data, allowing for missing
entries, has been studied in \citet{jiang2011statistical}.

Note that $\|U\|_\oplus \le \|U\|_\infty$ by the definition.
In practice, the utility is often taken to be a possibly nonlinear transformation
of the preference model $\mathcal{P}(x\prec y)$, so the value of
$\|U\|_\oplus$ depends both on the nonlinear transformation and on the structure
of $\mathcal{P}(x\prec y)$. Importantly, the utility function need not be exactly
additive, but there are natural situations in which it is approximately additive.

\begin{example}[Logit transform]\label{example:logit_transform}
Suppose $U$ is the logit transform of the preference probability:
$$
U(x, y)=\sigma_{\text{sigmoid}}^{-1}\circ \mathcal{P}(x\prec y) \quad \text{where}\quad \sigma_{\mathrm{sigmoid}}^{-1}(t)=\log \frac{t}{1-t}
$$
and the preference model $\mathcal{P}(x\prec y)$ follows
$$
\mathcal{P}(x\prec y)= \Phi(R(y)-R(x)+E(x,y)),
$$
where $\Phi$ is a nonlinear link function and $E(x,y)$ is a pairwise interaction
term. In particular, if $\mathcal{P}(x\prec y)$ follows the BTL model (i.e., 
$\Phi=\sigma_{\mathrm{sigmoid}}$ and $E=0$), we have
$U(x,y)=R(y)-R(x)$ and hence $\|U\|_\oplus=0$. 
More generally, whenever $\Phi$ is close to $\sigma_{\mathrm{sigmoid}}$ and the pairwise
interaction term $E$ is small,  $U(x,y)$ is close to $R(y)-R(x)$, and hence $\|U\|_\oplus$ is small (see \Cref{fig:visualize_utility}).
\end{example}

\begin{figure}
  \centering
  \includegraphics[width=0.9\linewidth]{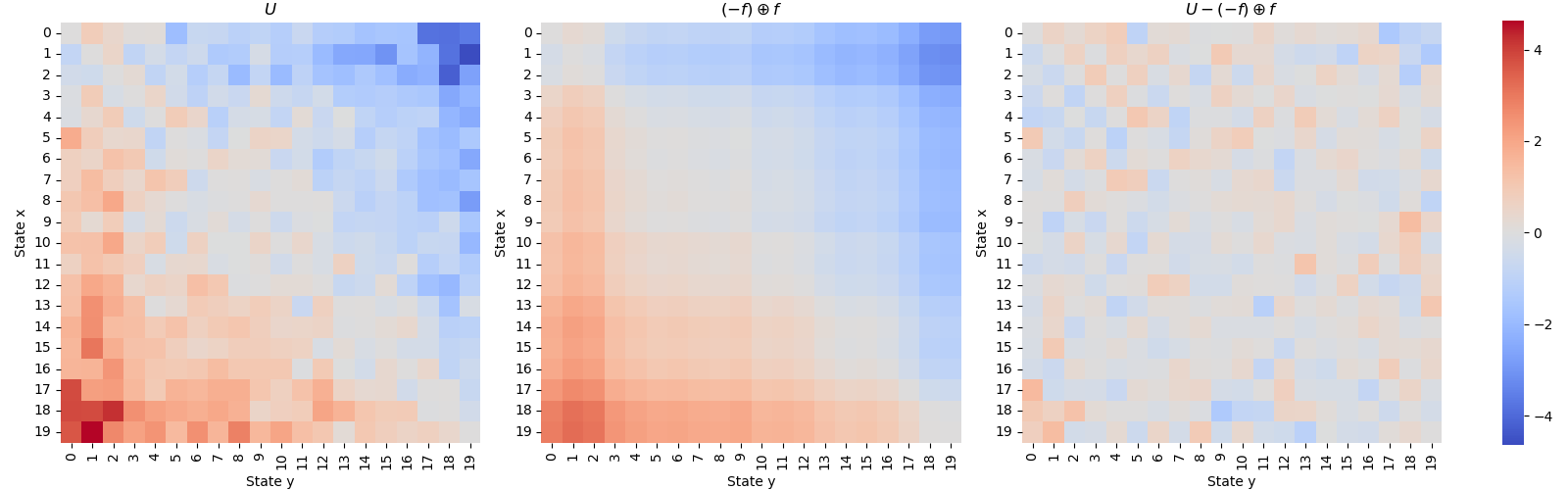}
  \caption{Comparison of antisymmetric utility $U$ generated by \eqref{eq:utility_DGP}, $(-\hat{f})\oplus \hat{f}$ with $\hat{f}(y)=\int\mu_\rf(dx)U(x, y)$, and residual $U-(-\hat{f})\oplus \hat{f}$. }
  \label{fig:visualize_utility}
\end{figure}

\begin{example}(Linear transform)\label{example:linear_transform}
Suppose 
$$
U(x,y)=\mathcal{P}(x\prec y)-1/2,
$$ 
where $\mathcal{P}(x\prec y)$ follows the BTL model $\mathcal{P}(x\prec y)=\sigma_{\mathrm{sigmoid}}(R(y)-R(x))$ for a reward $R$, and assume that $\|R\|_\infty$ is small. Then, using the linear approximation of the sigmoid, 
$\sigma_{\mathrm{sigmoid}}(x)=\frac{1}{2}+\frac{x}{4}+O(x^3)$
as $x\to 0$, one can show
\begin{align*}
  \|U\|_\oplus
=
O(\|(-R)\oplus R\|_\infty^3) \quad \text{and} \quad
\|U\|_\infty = 4^{-1} \|(-R)\oplus R\|_\infty + O(\|(-R)\oplus R\|_\infty^3)
\end{align*}
so that the ratio is significantly small: $\|U\|_\oplus/\|U\|_\infty=O(\|(-R)\oplus R\|_\infty^2) = O(\|R\|_\infty^2)\to 0$. 
\end{example}

We will see later in Theorem \ref{theorem:contraction} that MCHF implicitly adapts to the additive structure embedded in the pairwise utility $U$ in the sense that the convergence rate is bounded by an increasing function of $\|U\|_\oplus$.
 In contrast, we also provide a mirror-descent algorithm for NLHF whose convergence guarantee is governed by $\|U\|_\oplus$. However, achieving this adaptive convergence rate requires choosing a stepsize $\eta$ as $\eta=1/\|U\|_\oplus^2$, which requires the computation of $\|U\|_\oplus$.

Note that the exact computation of the seminorm $\|U\|_\oplus$ can be challenging, especially when the state space is non-discrete or when it is discrete but very large, since the problem is essentially an $L^\infty$ linear program.
However, we can estimate $\|U\|_\oplus$ up to a multiplicative constants in terms of some explicit quantities. 

\begin{proposition}\label{proposition:L2_estimate}
For any $U\in L^\infty(\mu_\rf\otimes \mu_\rf)$, let $\square(U)$ be the rectangle defect defined as 
  $
\square(U)
=
\esssup_{x,x',y,y'}
\left|U(x',y')-U(x,y')-U(x',y)+U(x,y)\right|,
$
and let $(\hat g,\hat f)$ be any solution to  the $L^2$ minimization $\min_{g,f}\|U-g\oplus f\|_2$. Then
$$
\square(U)
\asymp
\|U\|_\oplus
\asymp
\|U-\hat g\oplus \hat f\|_{\infty},
$$
or more precisely
$
\frac14 \square(U)
\le
\|U\|_\oplus
\le
\|U-\hat g\oplus \hat f\|_{\infty}
\le
\square(U).
$
\end{proposition}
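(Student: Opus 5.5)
We prove the chain of three inequalities from left to right; the middle one is immediate. Since $\hat g,\hat f\in L^\infty(\mu_\rf)$, the pair $(\hat g,\hat f)$ is admissible in the infimum defining $\|U\|_\oplus$. This gives $\|U\|_\oplus\le\|U-\hat g\oplus\hat f\|_\infty$. For $\hat g,\hat f\in L^\infty$ we use the explicit form of an $L^2$ minimizer below, which shows they are bounded by $\|U\|_\infty$.

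\textbf{Lower bound.} The rectangle operator
\[
(\square_{x,x',y,y'}V) = V(x',y')-V(x,y')-V(x',y)+V(x,y)
\]
is linear and annihilates every additive function $g\oplus f$. Fix any $g,f\in L^\infty(\mu_\rf)$ and set $V=U-g\oplus f$. For almost every quadruple $(x,x',y,y')$, the rectangle expression of $U$ equals that of $V$, which has absolute value at most $4\|V\|_\infty$. Some care is needed because $\|V\|_\infty$ is an essential supremum on $\mathcal X^2$. The four maps $(x,x',y,y')\mapsto(x',y'),(x,y'),(x',y),(x,y)$ all push $\mu_\rf^{\otimes4}$ forward to $\mu_\rf^{\otimes2}$, so $|V|\le\|V\|_\infty$ holds at all four points outside a null set. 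Taking the essential supremum and then the infimum over $(g,f)$ gives $\square(U)\le4\|U\|_\oplus$.

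\textbf{Upper bound.} We first identify the $L^2$ projection onto the closed subspace of additive functions in $L^2(\mu_\rf\otimes\mu_\rf)$. Write $\bar U_{\cdot y}=\int U(x,y)\mu_\rf(dx)$, $\bar U_{x\cdot}=\int U(x,y)\mu_\rf(dy)$ and $\bar U=\iint U$. The candidate is
\[
\hat g(x)=\bar U_{x\cdot}-\bar U,\qquad \hat f(y)=\bar U_{\cdot y}.
\]
The residual $W=U-\hat g\oplus\hat f=U-\bar U_{x\cdot}-\bar U_{\cdot y}+\bar U$ has zero row and column means. Hence $W$ is orthogonal to every $g\oplus f$ by Fubini, which confirms that this candidate is a minimizer. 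The minimizer is not unique: constants can be shifted between $g$ and $f$. Every minimizer has the same sum $\hat g\oplus\hat f$ (the projection), so the residual $W$ is the same for any choice, and it suffices to bound this $W$. The key identity is obtained by integrating over $x$ and $y$:
\[
W(x',y')=\iint \bigl[U(x',y')-U(x,y')-U(x',y)+U(x,y)\bigr]\,\mu_\rf(dx)\mu_\rf(dy).
\]
For almost every $(x',y')$, the integrand is bounded by $\square(U)$ for almost every $(x,y)$, by Fubini applied to the null set where the bound fails. Therefore $|W(x',y')|\le\square(U)$ almost everywhere.

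The main subtlety is measure-theoretic rather than conceptual. We must justify uniqueness of the projection sum, which holds because the additive subspace is closed in $L^2$. We must also pass essential suprema through integrals via Fubini. I expect both to be routine.
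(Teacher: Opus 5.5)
Your proof is correct and follows the paper's argument essentially step for step. You use admissibility of the explicit row/column-mean minimizer for the middle inequality, cancellation of the additive terms plus the triangle inequality for $\frac14\square(U)\le\|U\|_\oplus$, and the double-integral representation of the residual $U-\hat g\oplus\hat f$ for the upper bound. Your extra remarks fill in details the paper leaves implicit: every $L^2$ minimizer yields the same projection $\hat g\oplus\hat f$, so the statement's ``any solution'' is covered, and the push-forward/Fubini handling of essential suprema is made explicit (the only slip is that $\|\hat g\|_\infty\le 2\|U\|_\infty$, not $\|U\|_\infty$, which is immaterial).
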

We see that the rectangle defect $\square(U)$ is an explicit quantity by its definition, measuring the \textit{intransitivity} of the utility for any cycle consisting of $4$ points. The plug-in estimate $\|U-\hat g\oplus \hat f\|_\infty$ can also be computed explicitly, since 
 $(\hat{g}, \hat{f})$ is the minimizer of $\min_{g,f}\|U-g\oplus f\|_2$ and one of the solutions is given by the weighted row/column sums:
\begin{align}\label{eq:L2_min_solution}
  \hat g(x)
=
\int U(x,y)\mu_{\rf}(dy)-m,
\qquad
\hat f(y)
=
\int U(x,y)\mu_{\rf}(dx),
\end{align}
where
$m=\iint U(x,y)\mu_{\rf}(dx)\mu_{\rf}(dy)$. 
In particular, if $\|U\|_\oplus=0$, then we have $U = \hat{g} \oplus \hat f$. 

Finally, when $U$ is antisymmetric, it holds that $\hat{g}=-\hat{f}$, and we obtain a sharper characterization of the seminorm $\|U\|_\oplus$. 
\begin{proposition}\label{proposition:L2_estimate_antisymmetric}
Suppose $U\in L^\infty(\mu_\rf\otimes\mu_\rf)$ is antisymmetric, i.e.,
$U(x,y)=-U(y,x)$. Let $\Delta(U)$ be the triangle defect defined by $\Delta(U)=
  \esssup_{x,y,z}
  |U(x,y)+U(y,z)+U(z,x)|$ and let
$\hat f(y)
=\int \mu_\rf(dx) U(x,y)$ be the weighted column sum. 
Then, we have 
  $$
    \|U\|_\oplus
    \asymp
    \Delta(U)
    \asymp
    \|U-(-\hat f)\oplus \hat f\|_\infty,
  $$
or more precisely, 
$\frac13\Delta(U)
\le
\|U\|_\oplus
\le
\|U-(-\hat f)\oplus \hat f\|_\infty
\le
\Delta(U)$.
\end{proposition}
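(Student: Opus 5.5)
We prove the three inequalities in the chain $\frac13\Delta(U)\le\|U\|_\oplus\le\|U-(-\hat f)\oplus\hat f\|_\infty\le\Delta(U)$ from left to right. The middle inequality needs no work: $(-\hat f)\oplus\hat f$ is an admissible additive function, since $\hat f\in L^\infty(\mu_\rf)$ with $\|\hat f\|_\infty\le\|U\|_\infty$. Hence the infimum defining $\|U\|_\oplus$ is at most $\|U-(-\hat f)\oplus\hat f\|_\infty$.

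\textbf{Lower bound.} Fix any $g,f\in L^\infty(\mu_\rf)$ and put $E=U-g\oplus f$, so $\|E\|_\infty\le \epsilon$ with $\epsilon=\|U-g\oplus f\|_\infty$. Evaluate the cyclic sum $U(x,y)+U(y,z)+U(z,x)$ by substituting $U=g\oplus f+E$. The additive part contributes
\[
g(x)+f(y)+g(y)+f(z)+g(z)+f(x)=s(x)+s(y)+s(z),\qquad s=g+f .
\]
Next use antisymmetry: $U(x,x)=0$ a.e.\ on the diagonal. Here I must be careful, because the diagonal may be $\mu_\rf\otimes\mu_\rf$-null. To avoid the diagonal I will instead antisymmetrize. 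Since $U(x,y)=-U(y,x)$, we have
\[
U(x,y)=\tfrac12\bigl(U(x,y)-U(y,x)\bigr)=\tfrac12\bigl(a(y)-a(x)\bigr)+\tfrac12\bigl(E(x,y)-E(y,x)\bigr),\qquad a=f-g .
\]
The term $\tfrac12(a(y)-a(x))$ cancels exactly in the cyclic sum, which leaves
\[
|U(x,y)+U(y,z)+U(z,x)|\le \tfrac12\cdot 6\epsilon=3\epsilon .
\]
This bound holds for a.e.\ $(x,y,z)$, so $\Delta(U)\le 3\epsilon$. Taking the infimum over $(g,f)$ gives $\Delta(U)\le 3\|U\|_\oplus$. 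Some measure-theoretic bookkeeping is needed to make the a.e.\ statements compatible under the coordinate permutations, but this is harmless because product measure is invariant under permutations.

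\textbf{Upper bound.} I first use antisymmetry to rewrite $\hat g$. We have $\int\mu_\rf(dy)U(x,y)=-\int\mu_\rf(dy)U(y,x)=-\hat f(x)$. It follows that
\[
U(x,y)-(-\hat f(x)+\hat f(y))=U(x,y)+\hat f(x)-\hat f(y)=\int\mu_\rf(dz)\bigl[U(x,y)-U(z,x)\cdot(-1)\cdot(-1)\ldots\bigr].
\]
Written cleanly, the identity is
\[
U(x,y)+\hat f(x)-\hat f(y)=\int\mu_\rf(dz)\,\bigl[U(x,y)+U(z,x)-U(z,y)\bigr]=\int\mu_\rf(dz)\,\bigl[U(x,y)+U(y,z)+U(z,x)\bigr],
\]
where the last step uses $-U(z,y)=U(y,z)$. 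The integrand is bounded by $\Delta(U)$ for a.e.\ $(x,y,z)$. By Fubini, the integral is therefore bounded by $\Delta(U)$ for a.e.\ $(x,y)$, which proves $\|U-(-\hat f)\oplus\hat f\|_\infty\le\Delta(U)$.

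The main obstacle is the lower bound, specifically avoiding any use of the diagonal. The antisymmetrization trick handles this, and I will make sure the $\frac12$ factors yield exactly the constant $3$.
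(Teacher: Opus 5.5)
Your proposal is correct and follows essentially the paper's route. Your antisymmetrization $U(x,y)=\tfrac12\bigl(a(y)-a(x)\bigr)+\tfrac12\bigl(E(x,y)-E(y,x)\bigr)$ is the paper's reduction $\|U\|_\oplus=\inf_h\|U-(-h)\oplus h\|_\infty$ with $h=\tfrac12(f-g)$, done inline before the cyclic cancellation, and your averaging identity $U(x,y)+\hat f(x)-\hat f(y)=\int\mu_\rf(dz)\bigl[U(x,y)+U(y,z)+U(z,x)\bigr]$ is exactly the paper's upper-bound argument. In a final write-up, delete the garbled draft line containing $(-1)\cdot(-1)\ldots$ and the abandoned remark about the diagonal.
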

We see that $\Delta(U)$ measures the intransitivity of $U$ for any cycle consisting of $3$ points. Furthermore, if $\|U\|_\oplus=0$, then we have $U = (-\hat f)\oplus \hat f$ where $\hat{f}(y)=\int \mu_\rf(dx) U(x, y)$ is the weighted column sum. 

\section{Structure-Adaptive Convergence of MCHF and NLHF}
\subsection{Markov Chain from Human Feedback (MCHF)}\label{sec:mchf}
Based on the utility function $U\in L^\infty(\mu_{\rf}\otimes \mu_{\rf})$, we define the following Markov kernel $\MP$:
\begin{align}\label{eq:def_markov}
\MP(x, dy) = 
\frac{1}{Z(x)}{\exp ( U(x, y)) \mu_{\rf}(dy)}, \quad Z(x) \equiv \int \exp(U(x, y)) \mu_{\rf}(dy)
\end{align}
where $Z(x)$ is a normalizing constant, which is always strictly positive and bounded since $U\in L^\infty(\mu_{\rf}\otimes \mu_{\rf})$. 
 Note that the conditional distribution $Y|X \sim \MP(X, \cdot)$ will move mass towards preferred directions (where $U(x, \cdot)$ takes large values). Therefore, letting $\mu_{\rf}$ be some base reference measure, e.g., pre-trained generative model, the probability measure $\mu_{\star}$ induced by the one-step Markov chain:
 \begin{align*}
\mu_{\star} (dy) \equiv (\mu_{\rf} \MP)(dy) =  \int \mu_{\rf}(dx) \MP (x, dy)
\end{align*}
aligns $\mu_\rf$ with the preference information encoded by the utility function $U$.

Given the Markov kernel $\MP$ defined by \eqref{eq:def_markov}, let us consider the Markov chain
$\mu_\rf \mapsto \mu_\rf \MP \to \mu_\rf \MP^2 \cdots$
on the complete metric space $(\mathscr{P}_{\mu_\rf}, d_\tv)$ where $\mathscr{P}_{\mu_\rf}$ is the set of probability measures on $\mathcal{X}$ that are absolutely continuous with respect to $\mu_{\rf}$, and $d_\tv$ is the total variation distance.
Now we claim that the map $\mu \mapsto \mu \MP$ is a contraction mapping on this space.
\begin{theorem}\label{theorem:contraction}
For any $U\in L^\infty(\mu_\rf\otimes \mu_\rf)$, let $\MP$ be the Markov kernel defined by \eqref{eq:def_markov} with the utility $U$. Then, it holds that 
$$
\forall \mu, \nu \in \mathscr{P}_{\mu_\rf}, \quad d_{\tv}(\mu \MP, \nu \MP) \le  c \cdot d_{\tv}(\mu, \nu), 
$$
where the contraction rate $c$ is given by 
\begin{align}\label{eq:def_contraction}
  c=c(\|U\|_\oplus)=\min(1-e^{-2\|U\|_\oplus}, \|U\|_\oplus) \in [0,1). 
\end{align}
\end{theorem}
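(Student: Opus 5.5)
The plan is to reduce the statement to a bound on the Dobrushin contraction coefficient
$$
\delta(\MP)\equiv \esssup_{x,x'} d_\tv\bigl(\MP(x,\cdot),\MP(x',\cdot)\bigr).
$$
The standard coupling/duality argument gives $d_\tv(\mu\MP,\nu\MP)\le \delta(\MP)\, d_\tv(\mu,\nu)$. One writes $\mu-\nu=d_\tv(\mu,\nu)(\alpha-\beta)$ with $\alpha,\beta$ probability measures, and then $\alpha\MP-\beta\MP=\iint \alpha(dx)\beta(dx')\,[\MP(x,\cdot)-\MP(x',\cdot)]$. Since $\mu,\nu\in\mathscr{P}_{\mu_\rf}$, the measures $\alpha,\beta$ are also absolutely continuous with respect to $\mu_\rf$. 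Hence only the essential supremum over $(x,x')$ with respect to $\mu_\rf\otimes\mu_\rf$ matters. It therefore suffices to show $\delta(\MP)\le \min(1-e^{-2a},a)$ with $a=\|U\|_\oplus$.

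The key structural observation is that $\MP$ is invariant under the additive part of $U$. Fix $\varepsilon>0$ and choose $g,f\in L^\infty(\mu_\rf)$ with $V\equiv U-g\oplus f$ satisfying $\|V\|_\infty\le a'\equiv a+\varepsilon$. The factor $e^{g(x)}$ cancels with the normalizer $Z(x)$. Define the tilted reference measure $\pi(dy)\propto e^{f(y)}\mu_\rf(dy)$. Then
$$
\MP(x,dy)=\frac{e^{V(x,y)}}{Z_V(x)}\,\pi(dy),\qquad Z_V(x)=\int e^{V(x,y)}\pi(dy).
$$
Thus $\MP$ is the kernel built from a utility of sup norm at most $a'$, relative to a new base measure $\pi$. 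Both bounds then follow from $\|V\|_\infty\le a'$, and letting $\varepsilon\to0$ concludes the proof.

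For the bound $1-e^{-2a}$, I would use a Doeblin minorization. Since $Z_V(x)\in[e^{-a'},e^{a'}]$, the density satisfies $d\MP(x,\cdot)/d\pi\ge e^{-2a'}$ for a.e. $x$. Hence $\MP(x,\cdot)\ge e^{-2a'}\pi$, and for any two such measures $d_\tv\bigl(\MP(x,\cdot),\MP(x',\cdot)\bigr)\le 1-e^{-2a'}$.

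For the bound $a$, I would compare $\MP(x,\cdot)$ and $\MP(x',\cdot)$ as two exponential tilts of one another. Specifically, $\MP(x',dy)\propto e^{h(y)}\MP(x,dy)$ with $h=V(x',\cdot)-V(x,\cdot)$, and $h$ takes values (a.e.) in an interval of length $L\le 2a'$. Interpolate by $\rho_s\propto e^{sh}\MP(x,\cdot)$ for $s\in[0,1]$. Then $\frac{d}{ds}\rho_s=(h-\E_{\rho_s}h)\rho_s$, so
$$
d_\tv(\rho_0,\rho_1)\le \int_0^1 \tfrac12\E_{\rho_s}|h-\E_{\rho_s}h|\,ds.
$$
For a variable supported in an interval of length $L$, the mean absolute deviation is at most $L/2$. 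This yields $d_\tv\le L/4\le a'/2\le a'$. Combining the two bounds gives $\delta(\MP)\le\min(1-e^{-2a'},a')$, and letting $\varepsilon\to0$ completes the argument.

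The main obstacle is measure-theoretic bookkeeping rather than any single estimate. The inequalities hold only $\mu_\rf$-a.e. in $x,x'$, and the infimum defining $\|U\|_\oplus$ may not be attained. The first issue is handled by the absolute continuity of $\mu,\nu$, and the second by the $\varepsilon$-approximation.
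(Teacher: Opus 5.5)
Your argument is correct in substance and is essentially the paper's proof. You use the same rewriting of $\MP(x,dy)$ as a tilt of the $f$-tilted base measure $\pi$ (the paper's $p_f$). You use a Doeblin minorization $\MP(x,\cdot)\ge e^{-2a'}\pi$ for the $1-e^{-2\|U\|_\oplus}$ bound. For the $\|U\|_\oplus$ bound, you combine the Dobrushin coefficient with the exponential-tilt interpolation $s\mapsto\rho_s$, which is the paper's softmax Lipschitz lemma written for measures. The differences are cosmetic: you get the Dobrushin inequality by a coupling rather than by $L^1$--$L^\infty$ duality, and you route the minorization through it instead of through the decomposition $P=\delta p_f+(1-\delta)\tilde P$ acting on mean-zero densities.

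There is one slip. The function $h=V(x',\cdot)-V(x,\cdot)$ only satisfies $|h|\le 2a'$, so its range lies in an interval of length up to $4a'$, not $2a'$, and the chain should read $d_\tv\le L/4\le a'$. Your intermediate bound $a'/2$ is false in general. Take a two-point space with uniform $\mu_\rf$, $U(x,y)=t$ for $x=y$ and $U(x,y)=-t$ for $x\neq y$. Then $\|U\|_\oplus=t$, but $d_\tv(\MP(1,\cdot),\MP(2,\cdot))=\tanh t>t/2$ for $0<t\le 1$. Since the target is only $a'$, your conclusion is unaffected once the interval length is corrected.
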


\begin{figure}[tpb]
  \centering
  \includegraphics[width=0.4\linewidth]{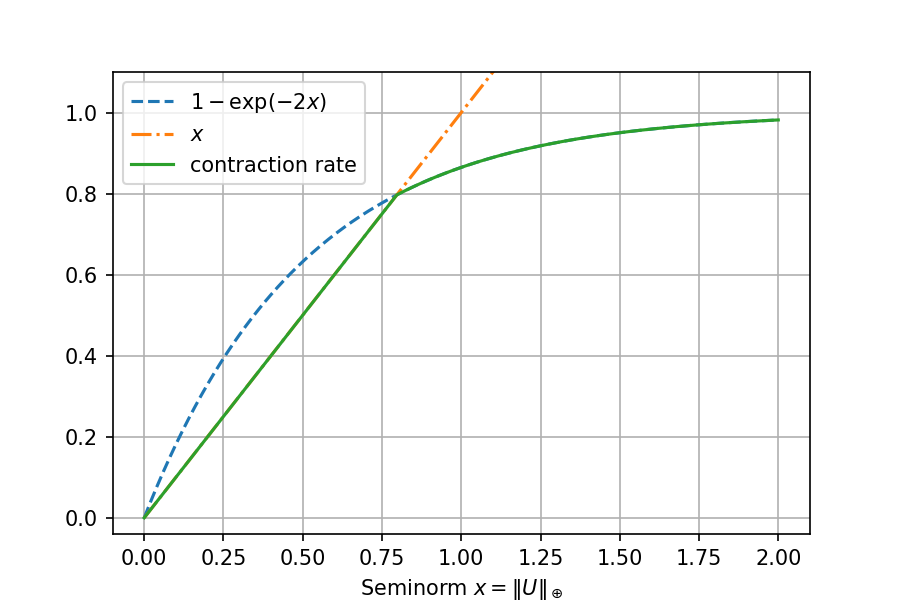}
  \caption{Plot of the contraction rate $c=\min(1-e^{-2\|U\|_\oplus}, \|U\|_\oplus)$ as a function of $\|U\|_\oplus$}
  \label{fig:compare_lipschitz_bound}
\end{figure}

Notice that the contraction rate $c(\|U\|_\oplus)$ is an increasing function of $\|U\|_\oplus$ taking values in $[0,1)$ (see Figure \ref{fig:compare_lipschitz_bound}). 
Theorem \ref{theorem:contraction} implies that the map $\mu \mapsto \mu \MP$ is a contraction mapping in the complete metric space $(\mathscr{P}_{\mu_\rf}, d_{\tv})$. Thus, by the Banach fixed-point theorem, we obtain the following result. 
\begin{corollary}\label{corollary:stationary_dist}
There exists a unique stationary distribution $\mu_\mc\in \mathscr{P}_{\mu_\rf}$ satisfying 
$\mu_\mc = \mu_\mc \MP$
and 
$$
d_{\tv}(\mu_\rf \MP^t, \mu_\mc)
\le
d_{\tv}(\mu_\rf, \mu_\mc)\cdot c^t,
$$
where $c$ is the contraction rate given by \eqref{eq:def_contraction}. 
\end{corollary}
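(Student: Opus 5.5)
The statement to prove is Corollary \ref{corollary:stationary_dist}. I would apply the Banach fixed-point theorem to the map $T:\mu\mapsto\mu\MP$ on $(\mathscr{P}_{\mu_\rf},d_\tv)$, with Theorem \ref{theorem:contraction} supplying the contraction constant $c<1$. This requires three checks: $T$ maps $\mathscr{P}_{\mu_\rf}$ into itself, the space is complete, and iterating the contraction gives the stated bound.

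\emph{Self-map.} For $\mu\in\mathscr{P}_{\mu_\rf}$, Fubini gives
\[
\mu\MP(dy)=\Bigl(\int \mu(dx)\,\frac{e^{U(x,y)}}{Z(x)}\Bigr)\mu_\rf(dy).
\]
The bracket is a nonnegative measurable density. It is bounded by $e^{2\|U\|_\infty}$, because $Z(x)\ge e^{-\|U\|_\infty}$. Its $\mu_\rf$-integral is $1$. Hence $\mu\MP$ is a probability measure absolutely continuous with respect to $\mu_\rf$, so $\mu\MP\in\mathscr{P}_{\mu_\rf}$.

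There is one technical point here. $U$ is only defined $\mu_\rf\otimes\mu_\rf$-a.e., while $\mu\ll\mu_\rf$. So modifying $U$ on a null set changes nothing, and the kernel is well defined on this class.

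\emph{Completeness.} Identify $\mu$ with its density $p=d\mu/d\mu_\rf$. Then $d_\tv(\mu,\nu)=\tfrac12\|p-q\|_{L^1(\mu_\rf)}$, so $\mathscr{P}_{\mu_\rf}$ is isometric to the set $\{p\ge0,\ \int p\,d\mu_\rf=1\}$. This set is closed in $L^1(\mu_\rf)$: an $L^1$ limit has a subsequence converging a.e., so it stays nonnegative, and the integral is preserved. As a closed subset of the complete space $L^1(\mu_\rf)$, it is complete.

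\emph{Fixed point and rate.} Banach's theorem now yields a unique $\mu_\mc\in\mathscr{P}_{\mu_\rf}$ with $\mu_\mc\MP=\mu_\mc$. For the rate, I would induct on $t$ using Theorem \ref{theorem:contraction} and stationarity:
\[
d_\tv(\mu_\rf\MP^{t},\mu_\mc)=d_\tv\bigl((\mu_\rf\MP^{t-1})\MP,\mu_\mc\MP\bigr)\le c\,d_\tv(\mu_\rf\MP^{t-1},\mu_\mc),
\]
which gives the factor $c^t$. The only subtle step is completeness together with the self-map property, and both are routine once measures are identified with densities.

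Uniqueness in the statement is meant within $\mathscr{P}_{\mu_\rf}$. This causes no loss, since any stationary probability measure $\pi=\pi\MP$ is automatically absolutely continuous with respect to $\mu_\rf$, because $\MP(x,\cdot)\ll\mu_\rf$ for every $x$.
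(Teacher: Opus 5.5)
Your proof is correct and uses the same argument as the paper, which obtains the corollary directly by applying the Banach fixed-point theorem to the contraction $\mu\mapsto\mu\MP$ on the complete metric space $(\mathscr{P}_{\mu_\rf},d_\tv)$ supplied by Theorem~\ref{theorem:contraction}. Your checks of the self-map property, completeness (via closedness of the density set in $L^1(\mu_\rf)$), and the induction using $\mu_\mc=\mu_\mc\MP$ make explicit what the paper leaves implicit; your closing remark on uniqueness outside $\mathscr{P}_{\mu_\rf}$ needs an everywhere-bounded version of $U$ to be fixed, but the statement does not require it.
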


Corollary \ref{corollary:stationary_dist} implies that the stationary distribution $\mu_\mc$ can be approximated by the $t$-step Markov chain $\mu_\rf\MP^t$ as $t\to+\infty$. 
Importantly, the contraction rate $c$ is bounded in terms of the seminorm $\|U\|_\oplus$, rather than the naive $L^\infty$-norm $\|U\|_\infty$. Since the construction of the Markov kernel and the operation $\mu\mapsto \mu\MP$ do not require prior knowledge of the value of $\|U\|_\oplus$, Corollary \ref{corollary:stationary_dist} suggests that MCHF \textit{implicitly} adapts to the additive structure of $U$.

\subsection{Nash Learning from Human Feedback (NLHF)}\label{sec:nlhf_algorithm}

We have seen in the previous section that MCHF implicitly adapts to the additive structure of $U$. In this section, we propose a mirror descent algorithm for NLHF; we show that if the step size is chosen carefully, the mirror descent enjoys the last-iterate convergence guarantee with a similar structure-adaptive convergence rate to MCHF. 

Before moving to the main result, let us introduce Nash Learning from Human Feedback (NLHF). For any $U\in L^\infty(\mu_\rf\otimes \mu_\rf)$, let $\mu_\nl$ be the solution to the following minimax optimization problem:
\begin{align}\label{eq:def_nlhf}
      {\mu}_{\nl} \in \argmax_{\mu} \min_{\nu} \Bigl(
\int \mu(dy)\nu(dx)U(x, y)
    - \kl(\mu \mid \mu_{\rf})
    + \kl(\nu \mid \mu_{\rf})
    \Bigr).
\end{align}
We refer to this solution $\mu_\nl$ as Nash Learning from Human Feedback (NLHF).
The original paper \cite{munos2024nash} corresponds to the choice
$U(x, y)=\lambda^{-1}(\mathcal{P}(x\prec y)-1/2)$, 
which is antisymmetric. The general case of an arbitrary utility $U(x,y)$ is discussed in \cite{shi2025fundamental} through the lens of social choice theory.
Thanks to the strong convexity induced by the KL regularization terms and the assumption $U\in L^\infty(\mu_\rf\otimes \mu_\rf)$, there exists a unique solution $(\mu_\nl, \nu_\nl)$, and both measures are absolutely continuous with respect to $\mu_\rf$.

There are various algorithms for computing $\mu_\nl$; see, for example, \cite{munos2024nash, rosset2024direct, tiapkin2025accelerating}. Existing algorithms either provide convergence guarantees for the average iterate $T^{-1} \sum_{t=1}^T \mu_t$, or obtain last-iterate convergence by adding a local regularization term $\kl(\cdot\mid \mu_t)$ in the update from $\mu_t$ to $\mu_{t+1}$.

Here, we provide a mirror descent algorithm with a last iterate convergence guarantee where the convergence rate is bounded in terms of $\|U\|_\oplus$. 
\begin{theorem}\label{theorem:convergence_nl_kl}
Define the iteration $(\nu_t, \mu_t)$ by
\begin{align}\label{eq:def_mirror_descent}
  \begin{split}
      \nu_t
      &=
      \argmin_\nu
      \int \nu(dx)\mu_t(dy) U(x, y)
      + \kl(\nu\mid\mu_\rf),\\
  \mu_{t+1}
      &=
      \argmax_\mu
      \int \nu_t(dx) \mu(dy) U(x, y)
      - \kl(\mu\mid\mu_\rf)
      - \eta^{-1} \kl(\mu\mid\mu_t),
  \end{split}
\end{align}
initialized at $\mu_0=\mu_\rf$. Then, for any step size $\eta$ such that $0 < \eta \le \|U\|_\oplus^{-2}$, it holds that
\begin{align*}
\kl(\mu_\nl\mid\mu_t)
  \le
  (1+\eta)^{-t}\cdot 
  \kl(\mu_\nl\mid\mu_\rf).
\end{align*}
\end{theorem}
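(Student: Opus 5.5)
The plan is a one-step contraction argument of relative-strong-concavity type for mirror descent, using the saddle structure of the KL-regularized game. Write $L(\mu,\nu)=\iint \nu(dx)\mu(dy)U(x,y)-\kl(\mu\mid\mu_\rf)+\kl(\nu\mid\mu_\rf)$. For $\mu\in\mathscr P_{\mu_\rf}$ let $\nu^*(\mu)=\argmin_\nu L(\mu,\nu)$. This is the Gibbs measure $\nu^*(\mu)(dx)\propto \exp(-h_\mu(x))\mu_\rf(dx)$ with $h_\mu(x)=\int U(x,y)\mu(dy)$, so $\nu_t=\nu^*(\mu_t)$. A direct computation gives the identity
\[
L(\mu,\nu)-\min_{\nu'}L(\mu,\nu')=\kl(\nu\mid\nu^*(\mu)).
\]
By the analogous computation, the $\mu$-update is a maximization of a $1$-strongly concave functional relative to $\kl(\cdot\mid\mu_\rf)$, with an additional $\eta^{-1}\kl(\cdot\mid\mu_t)$ proximal term.

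\textbf{Step 1 (three-point inequality).} From the first-order optimality of $\mu_{t+1}$, whose density is $\propto \mu_\rf^{\eta/(1+\eta)}\mu_t^{1/(1+\eta)}\exp\bigl(\tfrac{\eta}{1+\eta}\int\nu_t(dx)U(x,\cdot)\bigr)$, I will derive the exact identity for every $\mu$:
\[
(1+\eta)\kl(\mu\mid\mu_{t+1})=\kl(\mu\mid\mu_t)-\kl(\mu_{t+1}\mid\mu_t)+\eta\bigl[L(\mu_{t+1},\nu_t)-L(\mu,\nu_t)\bigr].
\]
This is the standard computation, obtained by expanding logs of densities. I then apply it with $\mu=\mu_\nl$.

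\textbf{Step 2 (bounding the bracket).} I will split the bracket as
\[
L(\mu_{t+1},\nu_t)-L(\mu_\nl,\nu_t)=\bigl[L(\mu_{t+1},\nu_t)-L(\mu_{t+1},\nu^*(\mu_{t+1}))\bigr]+\bigl[L(\mu_{t+1},\nu^*(\mu_{t+1}))-L(\mu_\nl,\nu_\nl)\bigr]+\bigl[L(\mu_\nl,\nu_\nl)-L(\mu_\nl,\nu_t)\bigr].
\]
The second term is $\le 0$ because $\mu_\nl$ is the max-min solution. The third term is $\le0$ because $\nu_\nl$ minimizes $L(\mu_\nl,\cdot)$. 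By the identity above, the first term equals $\kl(\nu^*(\mu_t)\mid\nu^*(\mu_{t+1}))$. Hence it suffices to show
\[
\eta\,\kl(\nu^*(\mu_t)\mid\nu^*(\mu_{t+1}))\le \kl(\mu_{t+1}\mid\mu_t),
\]
which then gives $\kl(\mu_\nl\mid\mu_{t+1})\le(1+\eta)^{-1}\kl(\mu_\nl\mid\mu_t)$. Iterating from $\mu_0=\mu_\rf$ yields the claim.

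\textbf{Step 3 (the key estimate, where $\|U\|_\oplus$ enters).} This step is the main obstacle. The two measures are Gibbs tilts of $\mu_\rf$ whose potentials differ by $\delta=h_{\mu_{t+1}}-h_{\mu_t}$. Since the log-partition function of a bounded tilt is smooth with Hoeffding constant $1/8$, I expect
\[
\kl(\nu^*(\mu_t)\mid\nu^*(\mu_{t+1}))\le \tfrac18(\essosc\,\delta)^2 .
\]
To control the oscillation, take any $g,f$ with $\|U-g\oplus f\|_\infty\le\varepsilon$ and set $R=U-g\oplus f$. The $g(x)$ part contributes nothing to $\delta$, because $\mu_{t+1}-\mu_t$ has zero mass. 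The $f(y)$ part is constant in $x$, so it cancels in $\delta(x)-\delta(x')$. Therefore
\[
|\delta(x)-\delta(x')|=\Bigl|\int\bigl(R(x,y)-R(x',y)\bigr)(\mu_{t+1}-\mu_t)(dy)\Bigr|\le 4\varepsilon\, d_\tv(\mu_t,\mu_{t+1}),
\]
using that $y\mapsto R(x,y)-R(x',y)$ has oscillation at most $4\varepsilon$. Combining this with Pinsker's inequality $d_\tv^2\le\tfrac12\kl$ gives
\[
\kl(\nu^*(\mu_t)\mid\nu^*(\mu_{t+1}))\le 2\varepsilon^2 d_\tv^2\le \varepsilon^2\kl(\mu_{t+1}\mid\mu_t).
\]
Letting $\varepsilon\downarrow\|U\|_\oplus$ and using $\eta\|U\|_\oplus^2\le1$ closes Step 2.

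The points that need care are the following. The Hoeffding-type bound on the KL divergence between two tilts must be verified in this form: the KL equals a Bregman divergence of the log-partition function, whose second derivative is a variance bounded by $\mathrm{osc}^2/4$. The exact constants must line up so that the condition is precisely $\eta\le\|U\|_\oplus^{-2}$. The essential-supremum and measurability issues in $L^\infty(\mu_\rf\otimes\mu_\rf)$ also need to be handled.
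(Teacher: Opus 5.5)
Your proof is correct, and its overall structure matches the paper's. The paper writes $p_\nl$ as the minimizer of $\ell+w$, with $w(p)=\int p\log p\,d\mu_\rf$ and $\ell(p)=\log\int e^{-Up}\,d\mu_\rf$. It then shows $\ell$ is relatively $\|U\|_\oplus^2$-smooth with respect to $w$ and applies a generic proximal mirror-descent lemma.

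Your Steps 1--2 are that lemma written out in game notation, because $D_\ell(p'\|p)=\kl(\nu^*(\mu)\mid\nu^*(\mu'))$. Under this identification:
\begin{itemize}
\item your first bracket is $D_\ell(p_{t+1}\|p_t)$;
\item your third bracket is $-D_\ell(p_\nl\|p_t)$;
\item your second bracket is the optimality $\ell(p_\nl)+w(p_\nl)\le \ell(p_{t+1})+w(p_{t+1})$;
\item your three-point identity is the exact, interior-point version of the paper's first-order optimality inequality.
\end{itemize}

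The real difference is how the smoothness estimate is proved.
\begin{itemize}
\item The paper shows that the best response $p\mapsto q_p$ is $\|U\|_\oplus$-Lipschitz in $L^1$, using its softmax Lipschitz lemma. Hence $\nabla\ell$ is $\|U\|_\oplus^2$-Lipschitz from $L^1$ to $L^\infty$, and integrating along the segment gives $D_\ell(p\|q)\le\frac12\|U\|_\oplus^2\|p-q\|_1^2$ for all pairs.
\item You bound the KL between the two Gibbs tilts directly by Hoeffding's lemma, $\frac18(\essosc\,\delta)^2$. You then control $\essosc\,\delta\le 4\varepsilon\, d_\tv(\mu_t,\mu_{t+1})$ by cancelling the additive part $g\oplus f$.
\end{itemize}
Both routes reach the same intermediate bound $2\|U\|_\oplus^2 d_\tv^2$, so Pinsker yields exactly the condition $\eta\le\|U\|_\oplus^{-2}$.

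Your route is shorter and needs the estimate only along consecutive iterates. The paper's route gives, as a by-product, the $L^1$-Lipschitz bound on best responses. The paper reuses that bound for the $\eta=+\infty$ TV contraction (Theorem~\ref{theorem:convergence_nl_tv}) and in its stability remark.
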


If we take the step size $\eta=\|U\|_\oplus^{-2}$, we obtain
$
\kl(\mu_\nl\mid\mu_t)
\le
  (1+\|U\|_\oplus^{-2})^{-t}\cdot 
\kl(\mu_\nl\mid\mu_\rf).
$
A similar algorithm to \eqref{eq:def_mirror_descent} with a geometric convergence guarantee of the last iterate is given by previous works, such as \cite{tiapkin2025accelerating}, for the specific form 
$U(x, y)=\lambda^{-1}(\mathcal{P}(x\prec y)-1/2)$.
However, 
the convergence guarantee in previous work depends on $\|U\|_\infty$ rather than on $\|U\|_\oplus$.
Since $\|U\|_\oplus \le \|U\|_\infty$ always holds, and since $\|U\|_\oplus$ can be much smaller than $\|U\|_\infty$ when $U$ has a strong additive component as we discussed in Example \ref{example:logit_transform}-\ref{example:linear_transform}, Theorem \ref{theorem:convergence_nl_kl} provides a sharper convergence rate for NLHF than previous works.

However, to achieve this refined convergence rate for the last iterate, we still need to choose the step size as $\eta=\|U\|_\oplus^{-2}$. Although $\|U\|_\oplus$ can be estimated explicitly up to a multiplicative constant as in Proposition \ref{proposition:L2_estimate}-\ref{proposition:L2_estimate_antisymmetric}, its exact calculation may be challenging in practice.
Moreover, our algorithm still contains a local regularization term, which makes sampling from $\mu_t$ difficult. Indeed, letting $p_t=d\mu_t/d\mu_\rf$, the mirror descent update \eqref{eq:def_mirror_descent} can be written as
\begin{align*}
  \nu_t(dx)
  &\propto
  \exp\Bigl(-\int \mu_t(dy) U(x, y)\Bigr)
  \mu_\rf(dx),\\
  \mu_{t+1}(dy)
  &\propto
  \exp\Bigl(\frac{\eta}{\eta+1}\int \nu_t(dx) U(x, y)\Bigr)
  \tilde{\mu}_t^\eta(dy),
  \quad
  \tilde{\mu}_t^\eta(dy)
  \equiv
  \Bigl(\frac{d\mu_t}{d\mu_\rf}(y)\Bigr)^{\frac{1}{1+\eta}}
  \mu_\rf(dy).
\end{align*}
The update of $\nu_t$ from $\mu_t$ is a simple exponential tilt of the reference distribution. In contrast, the update of $\mu_{t+1}$ requires computing the geometric mixture $\tilde{\mu}_t^\eta$ of $\mu_\rf$ and $\mu_t$, due to the local regularization term $\kl(\mu\mid\mu_t)$.
Exact sampling from such a geometric mixture of probability distributions is challenging in practice; see \cite[Section F.1]{munos2024nash} for a practical algorithm to approximately sample from such mixtures.

To the best of our knowledge, such local regularization is necessary for the last iterate to converge exponentially fast for general $U$. Thus, the drawback above is not a limitation of our algorithm, but rather reflects an intrinsic computational difficulty of NLHF.


Finally,  we show that when $\|U\|_\oplus$ is smaller than $1$, 
the same mirror descent algorithm \eqref{eq:def_mirror_descent} without the local regularization term $\kl(\mu|\mu_t)$ (i.e., $\eta=+\infty$) converges to the equilibrium at a geometric convergence rate in TV distance.  
\begin{theorem}\label{theorem:convergence_nl_tv}
The mirror descent algorithm \eqref{eq:def_mirror_descent} with $\eta=+\infty$ satisfies 
  \begin{align*}
      d_\tv(\nu_t, \nu_\nl)
      \le
      \|U\|_\oplus d_\tv(\mu_t, \mu_\nl), \quad 
      d_\tv(\mu_{t+1}, \mu_\nl)
      \le
      \|U\|_\oplus d_\tv(\nu_t, \nu_\nl).
  \end{align*}
    Thus, if $\|U\|_\oplus<1$, we have 
$
  d_\tv(\mu_t, \mu_\nl)
  \le
  \|U\|_\oplus^{2t} \cdot d_\tv(\mu_\rf, \mu_\nl) \to 0. 
$
\end{theorem}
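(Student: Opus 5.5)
With $\eta=+\infty$ the iteration \eqref{eq:def_mirror_descent} reduces to two best responses:
\[
\nu_t(dx)\propto \exp\Bigl(-\int\mu_t(dy)U(x,y)\Bigr)\mu_\rf(dx),\qquad
\mu_{t+1}(dy)\propto\exp\Bigl(\int\nu_t(dx)U(x,y)\Bigr)\mu_\rf(dy).
\]
The equilibrium satisfies the same fixed-point relations: by the first-order optimality conditions of \eqref{eq:def_nlhf}, $\nu_\nl$ is the best response to $\mu_\nl$ and vice versa. Both claimed inequalities are therefore instances of one lemma, applied to $U$ and to its transpose $-U(y,x)$ (a sign flip and transpose do not change $\|\cdot\|_\oplus$). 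The contraction $\|U\|_\oplus^{2t}$ then follows by chaining $d_\tv(\mu_{t+1},\mu_\nl)\le\|U\|_\oplus d_\tv(\nu_t,\nu_\nl)\le\|U\|_\oplus^2 d_\tv(\mu_t,\mu_\nl)$ and iterating from $\mu_0=\mu_\rf$.

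\textbf{Lemma.} For a bounded kernel $V$, define the Gibbs map $G(\mu)(dy)\propto \exp(\int\mu(dx)V(x,y))\mu_\rf(dy)$. Then $d_\tv(G(\mu),G(\mu'))\le \|V\|_\oplus\, d_\tv(\mu,\mu')$.

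The first step is to eliminate the additive part. Fix any $g\oplus f$ and set $W=V-g\oplus f$. Replacing $V$ by $W$ changes the exponent by $\int\mu(dx)g(x)+f(y)$. The first term is a constant absorbed by normalization, and the factor $e^{f}$ is common to both $\mu$ and $\mu'$. So $G(\mu)$ is the Gibbs tilt of the fixed base $\rho(dy)\propto e^{f(y)}\mu_\rf(dy)$ by the potential $\phi_\mu(y)=\int\mu(dx)W(x,y)$. Since $\int(\mu-\mu')(dx)=0$, we can subtract a constant $c(y)=(\esssup_x W+\essinf_x W)/2$, which gives
\[
\|\phi_\mu-\phi_{\mu'}\|_\infty\le \tfrac12\,\essosc_x W(\cdot,y)\cdot\|\mu-\mu'\|_1\le 2\|W\|_\infty\, d_\tv(\mu,\mu').
\]
The factor needs care: we have $|\int(\mu-\mu')h|\le \essosc(h)\,d_\tv$, and $\essosc\le 2\|W\|_\infty$.

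The second step is the sensitivity of the Gibbs tilt. For $p_s\propto e^{\phi_s}\rho$ with $\phi_s=\phi_{\mu'}+s(\phi_\mu-\phi_{\mu'})$, we have $\frac{d}{ds}p_s=p_s(\psi-\E_{p_s}\psi)$ with $\psi=\phi_\mu-\phi_{\mu'}$. Hence
\[
\frac{d}{ds}d_\tv \le \tfrac12\E_{p_s}|\psi-\E_{p_s}\psi|\le \tfrac12\essosc\psi.
\]
Integrating over $s\in[0,1]$ gives $d_\tv(G(\mu),G(\mu'))\le\frac12\essosc(\psi)$. Combining this with a sharper bound $\essosc_y(\psi)\le 2\|W\|_\infty d_\tv(\mu,\mu')$ yields $d_\tv\le \|W\|_\infty d_\tv(\mu,\mu')$. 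Taking the infimum over $(g,f)$ proves the lemma.

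The main obstacle is tracking constants so that the final factor is exactly $\|W\|_\infty$ and not $2\|W\|_\infty$. The clean route is to bound $\essosc_y\psi$ directly as $\esssup_{y,y'}\int(\mu-\mu')(dx)[W(x,y)-W(x,y')]$. This is at most $d_\tv\cdot\essosc_x[W(x,y)-W(x,y')]\le d_\tv\cdot 2\cdot 2\|W\|_\infty/2$; I expect to need to check this more carefully. Alternatively, one could use the Hahn decomposition $\mu-\mu'=d_\tv(\alpha-\beta)$ with probability measures $\alpha,\beta$, which gives $\psi(y)=d_\tv\,[\int\alpha W(\cdot,y)-\int\beta W(\cdot,y)]$, whose oscillation in $y$ is at most $d_\tv\cdot 2\cdot 2\|W\|_\infty\cdot\frac12$ after centering. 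If the constant does not close to $1$, I would fall back on the coupling $d\nu/d\mu_\rf$ ratio argument from the proof of Theorem \ref{theorem:contraction}, which presumably already handles the analogous constant.
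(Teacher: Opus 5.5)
Your plan follows the paper's route. You treat each best response as a Gibbs/softmax map, absorb the additive part $g\oplus f$ into the normalizer and a tilted base measure, differentiate the tilt (the derivative is a covariance), and chain the two half-step bounds for $U$ and its sign-flipped transpose. The gap is the constant, which the statement depends on: a factor $2\|U\|_\oplus$ would require $\|U\|_\oplus<1/2$.

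\textbf{The step that fails.} The ``sharper bound'' $\essosc_y\psi\le 2\|W\|_\infty\, d_\tv(\mu,\mu')$ that you use to close the argument is false. Both of your justifications for it fail too, since $\essosc_x[W(x,y)-W(x,y')]$ can equal $4\|W\|_\infty$. Here is a counterexample:
\begin{itemize}
\item Take $\mathcal X=\{1,2\}$ with $\mu_\rf$ uniform, and $W(x,y)=1$ if $x=y$, $-1$ otherwise. Then $\|W\|_\infty=1$, and $\|W\|_\oplus=1$ because $\square(W)=4$ (Proposition \ref{proposition:L2_estimate}).
\item Take $\mu=\delta_1$ and $\mu'=\delta_2$, so $d_\tv(\mu,\mu')=1$.
\item Then $\psi=(2,-2)$ and $\essosc\psi=4\|W\|_\infty d_\tv(\mu,\mu')$, twice your claim.
\end{itemize}
The infimum over $(g,f)$ cannot rescue this. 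Replacing $W$ by $W+g\oplus f$ shifts $\psi$ by the constant $\int(\mu-\mu')g$, so $\essosc\psi$ does not depend on the choice of additive part. With your Step 2 bound $d_\tv(G\mu,G\mu')\le\frac12\essosc\psi$, what you actually prove is $d_\tv(G\mu,G\mu')\le 2\|W\|_\infty d_\tv(\mu,\mu')$. For comparison, the true value in the example is $\tanh 1\approx 0.76\le 1$.

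\textbf{The fix.} The factor $2$ is lost in Step 2, not in the bound on $\psi$. The mean absolute deviation satisfies $\E_{p_s}|\psi-\E_{p_s}\psi|\le(\operatorname{Var}_{p_s}\psi)^{1/2}\le\|\psi\|_\infty$, which is also at most $\frac12\essosc\psi$ by Popoviciu. Hence $d_\tv(G\mu,G\mu')\le\frac12\|\psi\|_\infty$. Combined with your correct Step 1 bound $\|\psi\|_\infty\le 2\|W\|_\infty d_\tv(\mu,\mu')$, this gives $\|W\|_\infty d_\tv(\mu,\mu')$. Taking the infimum over $(g,f)$ then gives the lemma, with no oscillation bound on $\psi$ needed.

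This is exactly the paper's proof. It combines Lemma \ref{lemma:lipschitz_softmax}, which states $\|\softmax(g)-\softmax(\tilde g)\|_1\le\|g-\tilde g\|_\infty$ and is proved by the same covariance computation, with the base measure tilted by the additive part and with H\"older's inequality $\|W(p-p')\|_\infty\le\|W\|_\infty\|p-p'\|_1$. Together these give Lemma \ref{lemma:ell_p_smoothness}(1), which the paper applies to $U$ and $U^\star$.

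Your fallback to the minorization argument of Theorem \ref{theorem:contraction} would not help. That argument concerns the linear operator $P^\star$ rather than the nonlinear best-response map, and it yields $1-e^{-2\|U\|_\oplus}$ rather than $\|U\|_\oplus$.
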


We emphasize that the convergence rate is bounded in terms of $\|U\|_\oplus$, rather than $\|U\|_\infty$. 
Thus, similarly to the MCHF update in Corollary \ref{corollary:stationary_dist}, this algorithm implicitly adapts to the additive structure of $U$, whenever $\|U\|_\oplus < 1$. 

\section{Comparison between MCHF and NLHF}\label{sec:compare_mchf_nlhf}

\subsection{Computational perspective}
There is a computational trade-off between the two approaches. MCHF requires sampling from the conditional distribution $\MP(x,\cdot)$, whereas NLHF repeatedly solves KL-regularized optimization problems.

As discussed in \Cref{sec:nlhf_algorithm}, last-iterate
convergence of NLHF typically requires adding a local regularization term $\eta^{-1}\kl(\mu\mid\mu_{t-1})$ to the update, and because of this, each iteration requires both
updating the reward function and sampling from a distribution involving a geometric mixture of $\mu_{t-1}$ and $\mu_{\rf}$. This additional sampling step can be practically challenging.

By contrast, MCHF requires implementing the conditional sampler
$\MP(x,dy)$, but this sampler needs to be implemented only once. Once such a sampling block is available, the alignment procedure simply consists of repeatedly applying the same Markov kernel. In this sense, MCHF can be viewed as attaching a preference-guided conditional sampling module on top of the existing architecture that generates samples from $\mu_{\rf}$; see Figure \ref{fig:mchf_nlhf_update_compare}.

\subsection{Coupling perspective and inference time refinement}\label{subsec:coupling_hitting_time}
Another key difference between NLHF and MCHF lies in their algorithms. For the sake of simplicity, we focus on the antisymmetric utility $U$. Suppose $\mu^0$ is a current distribution, and we update it via the following two different rules:
  \begin{align*}
    \mu_\mc^1
    =
    \mu^0\MP,
    \qquad
    \mu_\nl^1
    \in
    \argmax_{\mu}
    \int \mu^0(dx)\mu(dy) U(x,y)
    -
    \kl(\mu \mid \mu_\rf)
  \end{align*}
Note that $\mu_\mc$ is the stationary point of the first update, while $\mu_\nl$ is the equilibrium of the second update since we are assuming that $U$ is antisymmetric now. 

To compare these two updates, consider the couplings induced by MCHF and NLHF:
$$
\pi_\mc(dx, dy) = \mu^0(dx)\MP(x, dy), \quad \pi_\nl(dx, dy)=\mu^0(dx)\mu_\nl^1(dy).
$$
The second marginal of these couplings is precisely $\mu_\mc^1$ and $\mu_\nl^1$, respectively. Then, by the definition of Markov kernel $\MP$ and the update rule of $\mu_\nl^1$, $\pi_\mc$ and $\pi_\nl$ maximize the same objective function:
$$
   F(\pi) = \iint U(x, y)\pi(dx, dy) - \kl(\pi || \mu^0 \otimes \mu_\rf), 
$$
but over different feasible spaces of couplings:
 \begin{align*}
    \pi_\mc = \argmax_{\int \pi(\cdot, dy) = \mu^0(\cdot)} F(\pi), \quad \pi_\nl = \argmax_{\exists \mu, \pi = \mu^0\otimes \mu} F(\pi).
 \end{align*}
Thus, the MCHF coupling optimizes over a larger class of couplings than that of NLHF, and in this sense the coupling induced by MCHF can extract more utility from the preference structure than the product coupling induced by NLHF.

This coupling perspective also highlights a practical difference between the two methods. Since MCHF produces a conditional distribution $\MP(x,\cdot)$ given the current output $x$, it naturally supports an \textit{inference-time refinement} procedure: one may repeatedly refine a model output by sampling from $\MP(x,\cdot)$ until the user is satisfied. This is reminiscent of the inference-time refinement mechanism discussed in Stackelberg game-based alignment \citep{pasztor2026stackelberg}. By contrast, the NLHF update directly produces a marginal distribution and therefore does not provide the same conditional refinement mechanism. 

{Below we provide a simple example to highlight the benefit of MCHF (note that this example is used in \cite{pasztor2026stackelberg} to highlight the benefit of conditional alignment). 
 Suppose $\mu_\rf$ is a uniform distribution on $\{1, 2, 3\}$, and the pairwise preference data is given by the following three annotators
\begin{align*}
 \text{Annotator 1:} \quad 1 \prec 2 \prec 3, \quad 
 \text{Annotator 2:} \quad 3 \prec 1\prec 2, \quad 
 \text{Annotator 3:} \quad 2 \prec 3 \prec 1. 
\end{align*}
Note that each annotator has a consistent preference order. Suppose we take $U$ as $U(x, y)=\mathcal{P}(x\prec y)-1/2$ where $\mathcal{P}(x\prec y)$ is the ratio of annotators who prefer $y$ to $x$. In this case, $U$ can be written as 
\begin{align*}
  U =\begin{bmatrix}
    1/2 & 2/3 & 1/3\\
    1/3 & 1/2 & 2/3\\
    2/3& 1/3 & 1/2
  \end{bmatrix} - \frac{1}{2} \bm{1}_{3\times 3} = 
   \frac{1}{6}\begin{bmatrix}
    0 & 1 & -1\\
    -1 & 0 & 1\\
    1& -1 & 0
  \end{bmatrix}.
\end{align*}
In this case, the NLHF solution is the uniform distribution $[1/3, 1/3, 1/3]$, which is the same as $\mu_\rf$. Suppose the annotator 1 keeps sampling from NLHF (in this case, uniform) until the annotator gets the most preferred output `3'. The expected number of samples needed is $3$ since it is the expectation of the geometric distribution with success probability $1/3$. Meanwhile, suppose we restore $U$ as it is, and conditioning on the current output, say $x$, we choose the next output $y$ as $\argmax_y U(x, y)$, which corresponds to the conditional sampling from our Markov kernel $\MP_\gamma(x, dy)\propto \exp(\gamma U(x, y))\mu_\rf(dy)$ with $\gamma\to+\infty$. 
 In this case the expected number of total iterations is given by $1/3 + (1/3) \times 2 + (1/3) \times 3=2$, which is smaller than that of NLHF. 
}

In \Cref{sec:inference_time_refinement}, we extend the above rock-paper-scissors game to a general reference measure $\mu_\rf$ and general antisymmetric utility $U$, and show that we can reduce the number of samples needed to get a desired output by running our Markov kernel $\MP_\gamma(x, dy)\propto \exp(\gamma U(x, y))\mu_\rf(dy)$ for a finite $\gamma$. In particular, we show that for any measurable set $A$ such that $\mu_\rf(A)>0$, if there is enough preference ``energy'', $U^2(x, y)$, from the complement set $A^c$ toward $A$, the hitting time to the set $A$ can be reduced by taking a finite $\gamma>0$. The key idea is to analyze the linear system that the hitting time satisfies, and calculate the first and the second derivatives of the hitting time with respect to $\gamma$ using the implicit function theorem; see \Cref{sec:inference_time_refinement} for details.

\subsection{Alignment dynamics in the small $\|U\|_\oplus$ asymptotics}\label{sec:alignment_dynamics}
As we have seen in Proposition \ref{proposition:L2_estimate}, $\|U\|_\oplus=0$ is equivalent to 
$U(x,y)=\hat g(x)+\hat f(y)$, where $(\hat{g}, \hat{f})$ is a solution to the $L^2$ minimization problem $\min_{g,f}\|U-g\oplus f\|_2$, which can be written explicitly as the weighted row/column sum, as in \eqref{eq:L2_min_solution}. 
Moreover, by the definitions of MCHF and NLHF, if $U=g\oplus f$, then MCHF and NLHF equilibria both collapse to the RLHF solution with reward $f$, i.e., $\mu_\rl(f)(dy)\propto \exp(f(y))\mu_\rf(dy)$. We summarize the above discussion in the proposition below. 
\begin{proposition}\label{proposition:seminorm_zero}
For any $U\in L^\infty(\mu_\rf\otimes \mu_\rf)$, let $\mu_{\mc}(U)$ and $\mu_\nl(U)$ be MCHF and NLHF based on utility function $U$ respectively. 
Then it holds that 
  \begin{align*}
  \|U\|_\oplus=0
  \quad\Leftrightarrow\quad U = \hat g\oplus \hat f \quad \Rightarrow \quad 
  \mu_\mc(U)=\mu_\nl(U)=\mu_\rl(\hat f)
\end{align*}
where $\mu_\rl(\hat{f})(dy)\propto \exp(\hat{f}(y))\mu_\rf(dy)$ and $\hat{f}(y)=\int \mu_\rf(dx) U(x, y)$. 
\end{proposition}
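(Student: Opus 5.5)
The plan has two parts: first the equivalence $\|U\|_\oplus=0 \Leftrightarrow U=\hat g\oplus\hat f$, then the collapse of both equilibria to $\mu_\rl(\hat f)$.

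\emph{Equivalence.} By \Cref{proposition:L2_estimate}, $\|U\|_\oplus\le \|U-\hat g\oplus\hat f\|_\infty\le \square(U)\le 4\|U\|_\oplus$. Hence $\|U\|_\oplus=0$ forces $\|U-\hat g\oplus\hat f\|_\infty=0$, i.e.\ $U=\hat g\oplus\hat f$ almost everywhere with respect to $\mu_\rf\otimes\mu_\rf$. The converse is immediate, since $(g,f)=(\hat g,\hat f)$ is admissible in the infimum; this needs $\hat g,\hat f\in L^\infty(\mu_\rf)$, which holds because both are averages of $U\in L^\infty$. For the rest of the argument write $U=\hat g\oplus \hat f$.

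\emph{MCHF.} Insert the additive form into \eqref{eq:def_markov}. Then $\exp(U(x,y))=e^{\hat g(x)}e^{\hat f(y)}$ and $Z(x)=e^{\hat g(x)}\int e^{\hat f}\,d\mu_\rf$, so the $x$-dependence cancels:
$$\MP(x,dy)=\frac{e^{\hat f(y)}\mu_\rf(dy)}{\int e^{\hat f}d\mu_\rf}=\mu_\rl(\hat f)(dy).$$
Consequently $\mu\MP=\mu_\rl(\hat f)$ for every probability measure $\mu$. In particular $\mu_\rl(\hat f)$ is stationary, and by the uniqueness in \Cref{corollary:stationary_dist} it equals $\mu_\mc(U)$.

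\emph{NLHF.} Insert $U=\hat g\oplus\hat f$ into the objective \eqref{eq:def_nlhf}. Using that $\mu$ and $\nu$ are probability measures, the objective separates as
$$\Bigl(\int \hat f\,d\mu-\kl(\mu\mid\mu_\rf)\Bigr)+\Bigl(\int\hat g\,d\nu+\kl(\nu\mid\mu_\rf)\Bigr).$$
Because the objective is a sum of a function of $\mu$ and a function of $\nu$, the inner minimum over $\nu$ is a constant independent of $\mu$. The outer problem therefore reduces to maximizing $\int\hat f\,d\mu-\kl(\mu\mid\mu_\rf)$. By the Gibbs variational principle, i.e.\ the Donsker--Varadhan identity $\int \hat f\,d\mu-\kl(\mu\mid\mu_\rf)=\log\int e^{\hat f}d\mu_\rf-\kl(\mu\mid\mu_\rl(\hat f))$, the unique maximizer is $\mu_\rl(\hat f)$. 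Hence $\mu_\nl(U)=\mu_\rl(\hat f)$, matching the uniqueness of the NLHF solution noted after \eqref{eq:def_nlhf}.

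The only step needing real care is the first one, passing from $\|U\|_\oplus=0$ to the specific pair $(\hat g,\hat f)$ rather than merely some additive pair. This is exactly what the upper bound $\|U-\hat g\oplus\hat f\|_\infty\le\square(U)\le 4\|U\|_\oplus$ in \Cref{proposition:L2_estimate} provides. Everything after that is direct substitution, with equalities understood $\mu_\rf$-almost everywhere.
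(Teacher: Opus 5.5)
Your proposal is correct and follows the same route as the paper. The equivalence comes from the sandwich $\|U\|_\oplus\le\|U-\hat g\oplus\hat f\|_\infty\le\square(U)\le 4\|U\|_\oplus$ in Proposition~\ref{proposition:L2_estimate}; the collapse to $\mu_\rl(\hat f)$ follows by substituting the additive form into the kernel, which becomes independent of $x$, and into the NLHF objective, which separates in $\mu$ and $\nu$, so you are just spelling out what the paper attributes to ``the definitions of MCHF and NLHF.''
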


We aim to provide a quantitative version of this claim: if $\|U\|_\oplus$ is small but not exactly zero, how far are $\mu_\mc(U)$ and $\mu_\nl(U)$ from $\mu_\rl(\hat f)$? This question is natural since we have seen in \Cref{sec:projection_utility} that $\|U\|_\oplus$ can be significantly small (relative to $\|U\|_\infty$) in natural human preference profiles. Below, we address this question by calculating the derivative of MCHF and NLHF with respect to $U$. 

\begin{theorem}[Uniform Fr\'echet differentiability]\label{theorem:uniform_frechet_differentiability}
  Let $p_*(U)=\frac{d\mu_*(U)}{d\mu_\rf}$ be the density for $*\in \{\mc, \nl\}$. Then, $U\mapsto p_*(U)$ is uniformly Fr\'echet differentiable in the following sense:
\begin{align*}
  \forall *\in \{\mc, \nl\}, \quad \forall R\ge 0, \quad 
  \lim_{\|E\|_\infty\to 0}\sup_{\|U\|_\oplus \le R} \frac{\|p_*(U+E)-p_*(U)- Dp_*(U)[E]\|_1}{\|E\|_\infty} = 0
\end{align*}
where $D p_*(U)[E]$ is a Fr\'echet derivative (see Theorems \ref{theorem:frechet_derivative_mc} and \ref{theorem:frechet_derivative_nl} for the explicit form for $*=\mc$ and $*=\nl$, respectively). 
\end{theorem}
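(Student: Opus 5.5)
The plan is to treat both cases through an implicit-function-type argument for a fixed-point equation in $L^1(\mu_\rf)$. The key preliminary observation is that the maps $U\mapsto p_*(U)$ are invariant, or nearly invariant, under additive shifts of $U$. Adding $g(x)$ to $U$ does not change $\MP$, since it cancels in $Z(x)$. Adding $f(y)$ amounts to replacing $\mu_\rf$ by the tilt $\mu_\rf(f)\propto e^{f}\mu_\rf$ in the kernel. For NLHF the same happens: $g\oplus f$ contributes a constant in $\mu$ plus a linear tilt of the reference.

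Using this, I will first reduce to the case $\|U\|_\infty\le 2R$ (up to a tilted reference). Concretely, for $\|U\|_\oplus\le R$ I choose $(g,f)$ with $\|U-g\oplus f\|_\infty\le R+\epsilon$, and then express $p_*(U)$ through $p_*$ computed for the residual $V=U-g\oplus f$ under the reference density $e^{f}/\int e^f$. The difficulty is that $f$ is unbounded over the class, so the tilted reference is arbitrary. The uniformity in the theorem must therefore come from estimates that depend only on $\|V\|_\infty$ and never on the reference measure. I will write everything for a general reference $\mu$ and prove bounds uniform in $\mu$. The $L^1(\mu_\rf)$ norm of a density difference transforms consistently under the change of reference, since it is just total variation of the measures, and $E$ keeps the same $L^\infty$ norm.

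For MCHF, $p=p_\mc(U)$ is characterized by $p=\MP_U^\star p$, where $\MP_U^\star p(y)=\int\mu(dx)p(x)e^{U(x,y)}/Z_U(x)$. I will define $\Phi(U,p)=\MP_U^\star p$ on the affine space of densities. By Theorem \ref{theorem:contraction}, $\partial_p\Phi$ is a contraction in $L^1$ on mean-zero functions with constant $c(\|V\|_\oplus)\le c(R)<1$, uniformly in the reference. The derivative is then
\[
Dp(U)[E]=(I-\MP_U^\star)^{-1}\partial_U\Phi(U,p)[E],
\]
with the Neumann series converging uniformly. Here $\partial_U\Phi[E](y)=\int\mu(dx)p(x)\MP(x,y)\{E(x,y)-\int\MP(x,dy')E(x,y')\}$, which has $L^1$ norm at most $2\|E\|_\infty$. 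The remainder is controlled by second-order Taylor bounds for $e^{U+E}/Z_{U+E}$, namely $\|\MP_{U+E}^\star-\MP_U^\star-\partial_U\MP^\star[E]\|_{L^1\to L^1}\le C\|E\|_\infty^2$ with $C$ absolute. This follows because $E$ enters only through bounded exponential tilts of the conditional laws, with no dependence on $\|U\|_\infty$. Combined with the Lipschitz bound $\|p(U+E)-p(U)\|_1\le 2\|E\|_\infty/(1-c(R+\|E\|_\infty))$, this yields an $O(\|E\|_\infty^2)$ remainder uniformly.

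For NLHF, I will use the fixed-point form from the $\eta=\infty$ iteration: $\mu_\nl=T_2(T_1(\mu_\nl))$ with Gibbs maps $T_1,T_2$. Theorem \ref{theorem:convergence_nl_tv} gives a contraction only when $\|U\|_\oplus<1$, so for general $R$ I will instead linearize the first-order optimality system
\[
\log p=\textstyle\int q\,U(\cdot\,\text{col})-\text{const},\qquad \log q=-\int p\,U-\text{const},
\]
and invert the linearization using strong monotonicity from the KL terms. The linear operator has the form $I+$ (skew part) $+$ (positive covariance part), whose inverse is bounded in the Fisher/$L^2(p)$ geometry. Converting this to $L^1$ uniformly, without dependence on the tilted reference, is the main obstacle. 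My first attempt will be to bound variances under $p$ by $\|V\|_\infty$ alone via the covariance operators $\mathrm{Cov}_p(V(x,\cdot),\cdot)$, which depend only on the residual. Failing that, I will iterate the damped mirror descent of Theorem \ref{theorem:convergence_nl_kl} with $\eta=R^{-2}$, whose rate $(1+R^{-2})^{-1}$ is reference-free, and differentiate the fixed point of that contraction in the same way as for MCHF.
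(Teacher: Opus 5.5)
Your MCHF argument is correct but takes a different route from the paper. You use the exact secant identity $(I-\MP_U^\star)(p'-p)=(\MP_{U+E}^\star-\MP_U^\star)p'$ on $L^1_0$ and invert it with the Neumann bound from Theorem~\ref{theorem:contraction}. You then control the remainder with a second-order Taylor bound for the softmax kernel, whose constant is absolute, plus first-order stability. This gives an $O(\|E\|_\infty^2)$ remainder uniformly on $\|U\|_\oplus\le R$. The paper instead shows that $U\mapsto Dp_\mc(U)$ is Lipschitz in operator norm (Lemma~\ref{lemma:derivative_lipschitz_continuous_mc}) and integrates along $U+tE$. Both work. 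For MCHF your tilted-reference reduction is not even needed, since the contraction rate is already stated in terms of $\|U\|_\oplus$.

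The NLHF half has a genuine gap: you name the obstacle but do not resolve it. Neither plan gives what the theorem asks for, namely that $p_\nl(U+E)-p_\nl(U)-Dp_\nl(U)[E]$ is $o(\|E\|_\infty)$ in $L^1$ uniformly over $\|U\|_\oplus\le R$.

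\begin{itemize}
\item \emph{Inverting the linearization} in the Fisher geometry only controls the derivative itself, and that part is actually easy. With $a=\dot p/p$ and $b=\dot q/q$, the linearized optimality system is the identity plus a skew-adjoint operator on the centered subspaces of $L^2(p)\times L^2(q)$. Since $\|pa\|_{L^1(\mu_\rf)}\le\|a\|_{L^2(p)}$, the $L^1$ conversion you single out is not the real difficulty.
\item \emph{What is missing} is a uniform modulus of continuity of $U\mapsto Dp_\nl(U)$ in the operator norm $L^\infty(\mu_\rf\otimes\mu_\rf)\to L^1$, or a uniform second-order bound. Either requires comparing inverses of operators that live on the moving spaces $L^2(p_U)$, and the proposal says nothing about this.
\item \emph{The mirror-descent fallback} does not supply it. Theorem~\ref{theorem:convergence_nl_kl} is a decrease of $\kl(\mu_\nl\mid\mu_t)$ toward the fixed point, not an $L^1$-contraction of the update map $T$. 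So it gives no bound on $(I-\partial_pT)^{-1}$ on $L^1_0$ of the kind your MCHF argument relied on, and linearizing it just returns you to the Fisher geometry.
\end{itemize}

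The paper closes the gap by working in the fixed space $L^1(\mu_\rf)$.
\begin{itemize}
\item Lemma~\ref{lemma:Schur_complement_l1_norm} bounds $(I+J_pU^\star J_qU)^{-1}$ on $L^1$ by $C(1+\|U\|_\infty^5)$, uniformly in $p,q\in\Delta^\infty$. The proof writes $J_p=D_{\sqrt p}^2-pp^\star$, uses that $D_{\sqrt q}UD_{\sqrt p}$ has $L^2$ operator norm at most $\|U\|_\infty$ whatever $p,q$ are, and solves a $2\times2$ system for the rank-one corrections.
\item It then replaces $\|U\|_\infty$ by $\|U\|_\oplus$ via $J_qU|_{L^2_0}=J_q(U-g\oplus f)$. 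This is the operator-level version of your reduction.
\item This yields Theorem~\ref{theorem:stability_nl}. Writing $A_U=I+J_{p_U}U^\star J_{q_U}U$ and using $A_U^{-1}-A_V^{-1}=A_U^{-1}(A_V-A_U)A_V^{-1}$ on $L^1_0$, it also yields the Lipschitz bound of Lemma~\ref{lemma:derivative_lipschitz_continuous_nl}.
\item The same fundamental-theorem-of-calculus argument as for MCHF then gives uniformity.
\end{itemize}
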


Note that the derivatives $D p_\mc(U)$ and $D p_\nl(U)$ differ for general $U$. However, when $U=-f\oplus f$ and $E$ is antisymmetric, they collapse into the following simple form. 
\begin{proposition}\label{proposition:derivative_additive_antisymmetric}
  For any $f\in L^\infty$ and antisymmetric $E\in L^\infty(\mu_\rf\otimes \mu_\rf)$, for each $*\in \{\mc, \nl\}$, we have 
  $$
  D p_*((-f)\oplus f)[E] = p_\rl(f) \odot E^\star p_\rl(f)
  $$
  where $p_\rl(f)(y) = \frac{\exp(f(y))}{\int\mu_\rf(dy') \exp(f(y'))}$,  $\odot$ is the entrywise product, and $E^\star h(\cdot)=\int\mu_\rf(dx) E(x, \cdot) h(x)$. 
\end{proposition}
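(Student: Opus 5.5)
We compute both derivatives by implicit differentiation of the fixed-point equations defining $p_\mc$ and $p_\nl$, evaluated at the additive point $U_0=(-f)\oplus f$. The explicit formulas of Theorems \ref{theorem:frechet_derivative_mc} and \ref{theorem:frechet_derivative_nl} are not reproduced here, so we derive the derivatives from the equations themselves; Theorem \ref{theorem:uniform_frechet_differentiability} justifies differentiating. Write $p\equiv p_\rl(f)$ and $\langle h\rangle\equiv\int \mu_\rf(dy) h(y)$. By Proposition \ref{proposition:seminorm_zero}, $p_\mc(U_0)=p_\nl(U_0)=p$; indeed for $U_0$ the hat-reward is $f$ up to a constant, which $p_\rl$ ignores.

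\textbf{MCHF.} Stationarity reads $p_U(y)=\int\mu_\rf(dx)\,p_U(x)\,e^{U(x,y)}/Z_U(x)$. At $U_0$ we have $e^{U_0(x,y)}/Z_{U_0}(x)=p(y)$, independent of $x$, so $\MP_0$ is the rank-one kernel $\MP_0(x,dy)=p(y)\mu_\rf(dy)$. Perturb to $U_0+\epsilon E$ and write $p_U=p+\epsilon q+o(\epsilon)$ with $\langle q\rangle=0$. The first-order kernel is
\[
\MP_0(x,dy)\bigl(E(x,y)-\textstyle\int \mu_\rf(dy')\,p(y')E(x,y')\bigr).
\]
Linearizing stationarity gives
\[
q(y)=\langle q\rangle\, p(y)+p(y)\int\mu_\rf(dx)\,p(x)\Bigl(E(x,y)-\int \mu_\rf(dy')\,p(y')E(x,y')\Bigr),
\]
and the $\langle q\rangle$ term vanishes. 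The subtracted term is $\iint p(x)p(y')E(x,y')$, which is $0$ by antisymmetry of $E$ since the integrand is odd under swapping $x$ and $y'$. Hence $q=p\odot E^\star p$.

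\textbf{NLHF.} By the first-order conditions, $\mu_\nl\propto \exp\bigl(\int\nu(dx)U(x,\cdot)\bigr)\mu_\rf$ and $\nu\propto\exp\bigl(-\int\mu(dy)U(\cdot,y)\bigr)\mu_\rf$. At $U_0$ the inner integrals are $f(\cdot)$ plus a constant and $-(-f(\cdot))$ plus a constant, so $\nu_0=\mu_0$ with density $p$. Linearize with perturbations $q$ (for $\mu$) and $r$ (for $\nu$), both of zero mean, the latter from the $\mu$-equation and the former from the $\nu$-equation. The $U_0$-contributions are
\[
\int r(x)\bigl(-f(x)+f(y)\bigr)\mu_\rf(dx)=-\int r f\,d\mu_\rf ,
\]
a constant in $y$ that is killed by normalization, and symmetrically for the other equation. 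So the cross-coupling vanishes and
\[
q=p\odot\Bigl(E^\star p-\int p\,E^\star p\,d\mu_\rf\Bigr).
\]
The subtracted constant is $\iint p(x)p(y)E(x,y)=0$, again by antisymmetry. Hence $q=p\odot E^\star p$.

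\textbf{Main obstacle.} The key step is the decoupling at the additive point: the perturbation of the opponent (or of the previous state) contributes only a constant shift to the exponent because $U_0$ is additive. We must also identify the derivative produced by this formal computation with the Fr\'echet derivative $Dp_*$. For this we would either invoke the explicit forms in Theorems \ref{theorem:frechet_derivative_mc} and \ref{theorem:frechet_derivative_nl} and specialize them, or use uniqueness of the linearized fixed point. That uniqueness holds because the linearized map is a contraction: its rate is $0$ at $\|U_0\|_\oplus=0$, by Theorems \ref{theorem:contraction} and \ref{theorem:convergence_nl_tv}.
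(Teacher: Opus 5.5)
Your proposal is correct and is essentially the paper's argument. The paper specializes the explicit derivative formulas of Theorems~\ref{theorem:frechet_derivative_mc} and~\ref{theorem:frechet_derivative_nl} at a general additive point $U=g\oplus f$. There the rank-one kernel gives $((I-P^\star_{g\oplus f})|_{L_0^1})^{-1}=I$, and $J_pU^\star$ and $J_qU$ vanish on mean-zero functions, which is exactly your ``decoupling'' observation. The paper then uses antisymmetry of $E$ to kill $\langle p, E^\star p\rangle$.

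Your closing uniqueness/contraction remark is unnecessary. At the additive point the homogeneous part of the linearized equation is identically zero on mean-zero functions, so the linearization already determines the derivative directly.
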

That is, when $U$ takes the additive form $U=-f\oplus f$, and for antisymmetric perturbation $E$, the derivatives of MCHF and NLHF coincide. 
Combining the above theorem and proposition, let us derive a quantitative version of Proposition \ref{proposition:seminorm_zero}; consider a sequence of antisymmetric utilities $U_1, U_2, \dots$ such that $\|U_n\|_\oplus\to 0$.  Let $\hat{f}_n(y)=\int \mu_\rf(dx) U_n(x, y)$ be the weighted column sum.  Then, by Proposition \ref{proposition:L2_estimate_antisymmetric}, we know 
$$
\|U_n-(-\hat{f}_n)\oplus \hat{f}_n\|_\infty \asymp \|U_n\|_\oplus.
$$
Then, applying Theorem \ref{theorem:uniform_frechet_differentiability} with $U=(-\hat{f}_n) \oplus \hat{f}_n$ and $E=U_n-(-\hat{f}_n)\oplus \hat{f}_n$, noting that $\|U\|_\oplus = 0$ in this case, 
using the derivative formula by Proposition \ref{proposition:derivative_additive_antisymmetric}, 
we obtain the following result. 
\begin{corollary}\label{corollary:asymptotics_equilibrium}
  For any sequence of antisymmetric utilities $U\in L^\infty(\mu_\rf\otimes \mu_\rf)$ such that $\|U\|_\oplus \to 0$, and for each $*\in \{\mc, \nl\}$, 
  \begin{align*}
  \bigl\|p_{*}(U) - p_{\rl}(\hat f) - p_\rl(\hat f) \odot \bigl(U-(-\hat f)\oplus \hat f\bigr)^\star p_{\rl}(\hat f)\bigr\|_{L^1(\mu_\rf)} = o(\|U\|_\oplus), 
  \end{align*}
  where $\hat{f}(y)=\int \mu_\rf(dx) U(x, y)$. 
\end{corollary}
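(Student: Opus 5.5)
The plan is to apply Theorem \ref{theorem:uniform_frechet_differentiability} at the additive base point $U_0 = (-\hat f)\oplus \hat f$ with perturbation $E = U - U_0$, and then read off the derivative from Proposition \ref{proposition:derivative_additive_antisymmetric}. The argument is short once three ingredients are in place. We need the size of $E$, the antisymmetry of $E$, and the fact that the base point lies in a fixed seminorm ball, so that the uniformity in the theorem applies.

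\textbf{Size of $E$.} Proposition \ref{proposition:L2_estimate_antisymmetric} gives $\|U\|_\oplus \le \|E\|_\infty \le \Delta(U) \le 3\|U\|_\oplus$. Hence $\|E\|_\infty \asymp \|U\|_\oplus$, and $\|E\|_\infty \to 0$ along the sequence.

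\textbf{Antisymmetry of $E$.} Since $U$ is antisymmetric, $U_0(y,x) = -\hat f(y)+\hat f(x) = -U_0(x,y)$, so $U_0$ is antisymmetric. Therefore $E = U - U_0$ is antisymmetric as well.

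\textbf{Location of the base point.} Directly from the definition, $\|U_0\|_\oplus = 0$. So every base point lies in the ball $\{\|\cdot\|_\oplus \le R\}$ with $R=0$, or any fixed $R\ge 0$. Note also that $\hat f \in L^\infty$, with $\|\hat f\|_\infty \le \|U\|_\infty$, so $U_0 \in L^\infty$.

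Next, write $U = U_0 + E$ and apply Theorem \ref{theorem:uniform_frechet_differentiability} with this fixed $R$. Because the supremum there is taken over all base points with $\|\cdot\|_\oplus \le R$, the base point is allowed to vary with $n$, and we get
\[
\|p_*(U) - p_*(U_0) - Dp_*(U_0)[E]\|_1 = o(\|E\|_\infty) = o(\|U\|_\oplus).
\]
The uniformity is the crucial point here. The base points $(-\hat f_n)\oplus \hat f_n$ change with $n$, and $\|\hat f_n\|_\infty$ need not stay bounded. A pointwise Fr\'echet statement would therefore not suffice, and the uniform version over the seminorm ball is exactly what makes the $o(\cdot)$ term legitimate.

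It then remains to identify the two leading terms. By Proposition \ref{proposition:seminorm_zero}, applied to $U_0$ (which satisfies $\|U_0\|_\oplus = 0$ and has column sum equal to $\hat f$ up to the constant $\int \hat f\, d\mu_\rf$, a constant that disappears after normalization), we have $p_*(U_0) = p_\rl(\hat f)$. Alternatively, this follows directly: the kernel $\MP(x,dy) \propto e^{\hat f(y)}\mu_\rf(dy)$ does not depend on $x$, and the NLHF game decouples when the utility is additive. Since $E$ is antisymmetric, Proposition \ref{proposition:derivative_additive_antisymmetric} with $f = \hat f$ gives $Dp_*(U_0)[E] = p_\rl(\hat f)\odot E^\star p_\rl(\hat f)$. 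Substituting these into the display yields the claim for both $* \in \{\mc, \nl\}$.

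The main obstacle I expect is bookkeeping rather than analysis. One check is that Proposition \ref{proposition:seminorm_zero}'s $\hat f$ for $U_0$ agrees with $\hat f$ for $U$ modulo constants, which is harmless because $p_\rl$ is invariant under adding constants. The other is stating clearly that the uniformity in $U$ handles the fact that the base points are not fixed.
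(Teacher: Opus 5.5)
Your proposal is correct and is essentially the paper's own derivation. You apply Theorem~\ref{theorem:uniform_frechet_differentiability} at the zero-seminorm base point $(-\hat f)\oplus\hat f$ with the antisymmetric perturbation $E=U-(-\hat f)\oplus\hat f$, bound $\|E\|_\infty\asymp\|U\|_\oplus$ via Proposition~\ref{proposition:L2_estimate_antisymmetric}, and read off the derivative from Proposition~\ref{proposition:derivative_additive_antisymmetric}; your remark that uniformity over the $\|\cdot\|_\oplus$-ball is needed because the base points vary is exactly the point the paper relies on.
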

This implies that MCHF and NLHF have the same leading term in the small-$\|U\|_\oplus$ asymptotics. 
Combined with the triangle inequality, we also get 
$$
  d_\tv(\mu_\mc(U), \mu_\nl(U)) = o(\|U\|_\oplus).
$$
We emphasize that the right-hand side is $o(\|U\|_\oplus)$, rather than merely $O(\|U\|_\oplus)$.

Let us verify Corollary \ref{corollary:asymptotics_equilibrium} by numerical simulation.
We set $\mu_\rf$ to be the uniform distribution on a discrete space with
$|\mathcal X|=20$, and generate utility $U$ as 
\begin{align}\label{eq:utility_DGP}
  U(x,y)
  =
\frac12 \log \frac{\mathcal{P}(x\prec y)}{1-\mathcal{P}(x\prec y)},
  \qquad
  \mathcal{P}(x\prec y)
  =
  \Phi\Bigl(R(y)-R(x)+\frac12E(x,y)\Bigr),
\end{align}
where $\Phi$ is the CDF of the standard normal distribution.
We generate $R\sim N(0,I_n)$, and take $E\in\mathbb R^{n\times n}$ as an antisymmetric matrix whose off-diagonal entries are drawn from $N(0,1)$.
The resulting pairwise utility $U$ is visualized in \Cref{fig:visualize_utility}, which suggests that
$\|U\|_\oplus\ll \|U\|_\infty$.
\Cref{fig:equilibrium_compare} plots MCHF, NLHF, RLHF, and the first-order approximation given by Corollary \ref{corollary:asymptotics_equilibrium}.  
We observe that MCHF and NLHF nearly coincide, and that their deviation from RLHF is accurately captured by the first-order approximation.\\

\begin{figure}[htpb]
  \centering
  \includegraphics[width=\linewidth]{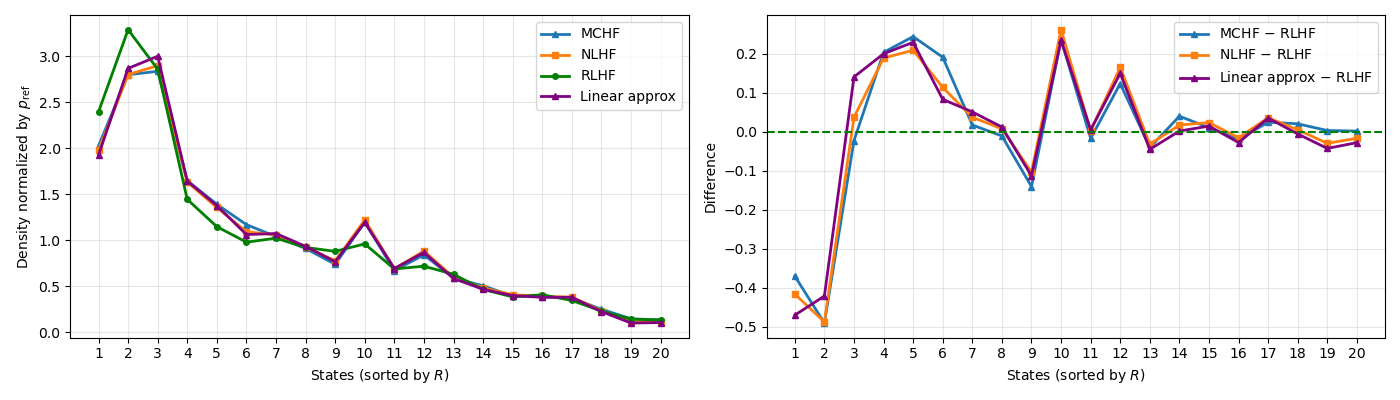}
  \caption{Comparison of MCHF, NLHF, RLHF, and the first-order approximation in Corollary \ref{corollary:asymptotics_equilibrium}.}
  \label{fig:equilibrium_compare}
\end{figure}

Next, we claim that MCHF and NLHF agree up to first order in their corresponding iterative dynamics as well. 
\begin{theorem}\label{theorem:asymptotics_iterations}
 For any antisymmetric utility $U\in L^\infty(\mu_\rf\otimes \mu_\rf)$, consider the following iterations:
  \begin{align*}
    \mu_\mc^t = \mu_\mc^{t-1}\MP, \quad 
    \mu_\nl^t = \argmax_{\mu} \int \mu_\nl^{t-1}(dx) \mu(dy) U(x, y) - \kl(\mu || \mu_\rf)
  \end{align*}
  initialized at $\mu_\mc^0=\mu_\nl^0=\mu_\rf$, where $\MP$ is the Markov kernel based on $U$. Let $p_*^t(U)=d\mu_*^t/d\mu_\rf$ be the corresponding density for $t\ge 1$ and $*\in\{\mc, \nl\}$. Then, for any sequence of antisymmetric utilities $U$ such that
$\|U\|_\oplus\to 0$, we have that 
for each $*\in \{\mc, \nl\}$, 
  \begin{align*}
 \begin{split}
  \bigl\|p_*^1(U) - p_\rl(\hat f)\bigr\|_{L^1(\mu_\rf)}
  &= o(\|U\|_\oplus), \\
  \sup_{t\ge 2} \bigl\|p_*^t(U) - p_\rl(\hat f) - p_\rl(\hat f)\odot \bigl(U -(-\hat f)\oplus \hat f\bigr)^\star p_\rl(\hat f)\bigr\|_{L^1(\mu_\rf)}
  &= o(\|U\|_\oplus). 
  \end{split}
  \end{align*}
\end{theorem}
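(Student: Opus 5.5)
Write $U = U_0 + E$ with $U_0 = (-\hat f)\oplus \hat f$ and $E = U - U_0$. By Proposition~\ref{proposition:L2_estimate_antisymmetric}, $\|E\|_\infty \asymp \|U\|_\oplus$. Since $U$ is antisymmetric, $\int \mu_\rf(dx)E(x,y)=\hat f(y)-\hat f(y)=0$, and by antisymmetry also $\int \mu_\rf(dy)E(x,y)=0$. So $E^\star 1 = 0$ and $E 1 = 0$, where $(Eh)(x)=\int E(x,y)h(y)\mu_\rf(dy)$. The plan is to view each iterate as a smooth map of $(\hat f, E)$, Taylor-expand in $E$ around $E=0$, and show that the remainders are uniform in $\hat f$ and $t$. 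The key structural fact is that at $E=0$ the iteration reaches the RLHF fixed point $p_\rl(\hat f)$ after one step.

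\textbf{Step 1 ($t=1$).} For MCHF, $p^1_\mc(y)=\int \mu_\rf(dx) e^{U(x,y)}/Z(x)$. With $U=U_0+E$ we have $e^{U(x,y)} = e^{-\hat f(x)}e^{\hat f(y)}e^{E(x,y)}$, and the factor $e^{-\hat f(x)}$ cancels against $Z(x)$. Expanding $e^{E}=1+E+O(\|E\|_\infty^2)$ and using $E1=0$ gives $Z(x)e^{\hat f(x)} = Z_\rl+\int E(x,y)e^{\hat f(y)}\mu_\rf(dy)+O(\|E\|^2_\infty)$, where $Z_\rl=\int e^{\hat f}d\mu_\rf$. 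The numerator is $e^{\hat f(y)}(1+E(x,y))+O(\|E\|^2_\infty)$. After integrating in $x$, the term $\int\mu_\rf(dx)E(x,y)=0$ kills the numerator's first-order contribution. The denominator's first-order term is $e^{\hat f(y)}\int\mu_\rf(dx)(Ep_\rl)(x)=e^{\hat f(y)}\langle 1, E p_\rl\rangle = e^{\hat f(y)}\langle E^\star 1,p_\rl\rangle=0$. Hence $p^1_\mc=p_\rl(\hat f)+O(\|E\|_\infty^2)$. For NLHF, $p^1_\nl\propto \exp(\int\mu_\rf(dx)U(x,y))=\exp(\hat f)$ exactly, so the error is $0$. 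All constants here are uniform in $\hat f$, because $e^{\pm\hat f}$ enters only through ratios in which it cancels. (If $\|\hat f\|_\infty$ appears anywhere, I would instead bound things through the normalized density $p_\rl$, which satisfies $\|p_\rl\|_1=1$.)

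\textbf{Step 2 ($t\ge2$, linearization).} Write $q=p_\rl(\hat f)$ and $\pi = p^{t-1}_*$. For NLHF, $p^t_\nl\propto \exp(\hat f+E^\star\pi)$, so to first order $p^t_\nl = q\odot(1+E^\star\pi-\langle q,E^\star\pi\rangle)$. By antisymmetry of $E$, $\langle q, E^\star q\rangle=0$. For MCHF, the same expansion gives $p^t_\mc(y)=\int \pi(x)\mu_\rf(dx)\,q(y)\bigl(1+E(x,y)-(Eq)(x)\bigr)+O(\|E\|^2_\infty)$, that is, $q\odot(1+E^\star\pi-\langle \pi, Eq\rangle)$. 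Now substitute $\pi = q + O(\|E\|_\infty)$ in $L^1$, using the Step 1 result at $t=2$ and induction afterwards. The bilinear errors are then $O(\|E\|_\infty^2)$, and $\langle q,Eq\rangle=0$. This yields $p^t_* = q+q\odot E^\star q + O(\|E\|^2_\infty)$.

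\textbf{Step 3 (uniformity in $t$) — the main obstacle.} Naive induction accumulates $t\,O(\|E\|^2_\infty)$. To avoid this, I would write the one-step map as $\Phi_E$, with $\Phi_0(\pi)=q$ for every probability density $\pi$: at $E=0$ the map is constant. Let $\pi^\infty$ be the target $q+q\odot E^\star q$. Then $\|\Phi_E(\pi)-\Phi_E(\pi')\|_1\le C\|E\|_\infty\|\pi-\pi'\|_1$; this is the Lipschitz content of Theorems~\ref{theorem:contraction} and~\ref{theorem:convergence_nl_tv}, because the seminorm of $U$ is at most $\|E\|_\infty$. Step 2 also gives $\|\Phi_E(\pi^\infty)-\pi^\infty\|_1=O(\|E\|^2_\infty)$. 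Set $e_t=\|p^t_*-\pi^\infty\|_1$. Then $e_t\le C\|E\|_\infty e_{t-1}+O(\|E\|^2_\infty)$, with $e_1=O(\|E\|_\infty)$. For $\|E\|_\infty$ small this gives $\sup_{t\ge2}e_t=O(\|E\|^2_\infty)=o(\|U\|_\oplus)$. The remaining care is to make the second-order Taylor remainders of $\Phi_E$ uniform over densities $\pi$ with $\|\pi\|_1=1$ and over $\hat f$. I expect this to work because all expansions are of $e^{E}$ with $\|E\|_\infty\to0$, weighted by probability measures.
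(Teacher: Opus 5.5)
Your argument is correct, but it takes a different route from the paper.

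\textbf{The paper's route.} For $t=1$, the paper invokes uniform Fr\'echet differentiability of the one-step maps $U\mapsto P_U^\star\bm 1$ and $U\mapsto\softmax(U^\star\bm 1)$. It then observes that their derivative at $(-\hat f)\oplus\hat f$ in the direction $E$ is $J_{p_\rl(\hat f)}E^\star\bm 1=0$. For $t\ge2$ it does not expand the iteration at all. Instead it uses the contraction estimates (Corollary~\ref{corollary:stationary_dist} and Theorem~\ref{theorem:convergence_nl_tv}) to get $\sup_{t\ge2}\|p_*^t-p_*\|_1=O(\|U\|_\oplus^2)$. It then imports the equilibrium expansion of Corollary~\ref{corollary:asymptotics_equilibrium}. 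That corollary rests on the implicit-function-theorem machinery of the appendix, which for NLHF includes the $L^1$ bound on $(I+J_pU^\star J_qU)^{-1}$.

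\textbf{Your route.} You expand the explicit one-step map $\Phi_E$ to second order. You exploit two facts: $\Phi_0\equiv p_\rl(\hat f)$ is constant, and softmax relative to the tilted measure $p_\rl(\hat f)\,\mu_\rf$ has a second-order remainder that is uniform in $\hat f$.

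\textbf{Comparison.} Your route is more elementary and self-contained, and it makes the rate $O(\|U\|_\oplus^2)$ explicit. Letting $t\to\infty$, it even reproves Corollary~\ref{corollary:asymptotics_equilibrium} for antisymmetric $U$ without the Fr\'echet machinery. The paper's route, in exchange, takes two lines once the general theory is in place, and that theory also covers non-antisymmetric $U$.

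\textbf{Your Step 3 is not needed.} By Lemma~\ref{lemma:lipschitz_softmax}, $\|\Phi_E(\pi)-p_\rl(\hat f)\|_1\le\|E\|_\infty$ for every probability density $\pi$. This holds for NLHF via $\Phi_E(\pi)=\softmax(\hat f+E^\star\pi)$, and for MCHF by averaging $\softmax(\hat f+E(x,\cdot))$ against $\pi$. Hence every iterate satisfies $\|p_*^{t-1}-p_\rl(\hat f)\|_1\le\|E\|_\infty$. Your Step 2 error $O(\|E\|_\infty\|p_*^{t-1}-p_\rl(\hat f)\|_1+\|E\|_\infty^2)$ is therefore already uniform in $t$. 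No error accumulates because the unperturbed map forgets its input. Your contraction argument in Step 3 is still valid, just redundant; note that $\int\pi^\infty\,d\mu_\rf=1$ by antisymmetry of $E$.
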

From this theorem, we see that MCHF and NLHF agree at the iteration level up to
first order:
$$
 \sup_{t\ge 1}d_{\tv} (\mu_\nl^t(U), \mu_\mc^t(U)) = o(\|U\|_\oplus).
$$
Moreover, this theorem sheds light on the dynamics of alignment. At the first iteration $t=1$, both algorithms recover the RLHF update with reward $\hat{f}$, which is the weighted column sum of $U$. Then for $t\ge 2$, they incorporate the non-additive structure $U-(-\hat{f})\oplus \hat{f}$ as a first-order correction to the RLHF update. Furthermore, comparing this correction with the leading term of the equilibrium in Corollary \ref{corollary:asymptotics_equilibrium}, we see that the two algorithms capture the leading term within the first two iterations. 


\subsection{Stability under perturbation of utilities}
Finally, en route to the proof of Theorem \ref{theorem:uniform_frechet_differentiability}, we obtain a stability result for MCHF and NLHF equilibria under the perturbation of utilities. 
This stability property is important in practice, since the utility function $U$ may be updated as new preference data become available.
We may also consider a setting in which $U$ is taken to be an increasing transformation of the preference model $\mathcal{P}(x \prec y)$, which is itself estimated, for example, by logistic regression as in the original paper \cite{munos2024nash}. In this case, estimation error in $\mathcal{P}$ induces a perturbation in $U$. It is therefore important to understand how such perturbations affect the resulting equilibrium.

\begin{theorem}[Stability under perturbation of utilities]\label{theorem:stability}
  For any constant $R\ge 0$, let $B^\infty_\oplus(R) = \{U\in L^\infty(\mu_\rf\otimes \mu_\rf): \|U\|_\oplus \le R\}$ be the ball of radius $R$ with respect to the seminorm $\|\cdot\|_\oplus$. Then, there exists a constant $C_R$, depending only on $R$, such that for each $* \in \{\mc, \nl\}$, we have 
$$
\sup_{U, \hat{U}\in B^\infty_\oplus (R)} \frac{d_\tv(\mu_*(U), \mu_*(\hat{U}))}{\|U-\hat{U}\|_\infty} \le C_R. 
$$
\end{theorem}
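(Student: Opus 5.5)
We prove the bound separately for $*=\mc$ and $*=\nl$. In both cases we first reduce to a small perturbation of an additive utility, and then use the contraction estimates already established in Theorems \ref{theorem:contraction} and \ref{theorem:convergence_nl_tv}.

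\emph{Reduction.} Adding $g(x)$ to $U$ leaves $\MP$ unchanged, since it cancels in $Z(x)$. It also leaves $\mu_\nl$ unchanged, since it contributes only a $\nu$-dependent term to the minimax objective. Hence we may replace $U$ by $U-g\oplus 0$ freely.

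\emph{MCHF.} Let $\MP,\hat\MP$ be the kernels built from $U,\hat U$, and let $\mu=\mu_\mc(U)$, $\hat\mu=\mu_\mc(\hat U)$. Using stationarity,
$$
d_\tv(\mu,\hat\mu)=d_\tv(\mu\MP,\hat\mu\hat\MP)\le d_\tv(\mu\MP,\hat\mu\MP)+d_\tv(\hat\mu\MP,\hat\mu\hat\MP)\le c(R)\,d_\tv(\mu,\hat\mu)+\esssup_x d_\tv(\MP(x,\cdot),\hat\MP(x,\cdot)).
$$
For the last term, write $\delta=\|U-\hat U\|_\infty$. The density ratio $\frac{d\MP(x,\cdot)}{d\hat\MP(x,\cdot)}=e^{U-\hat U}\hat Z(x)/Z(x)$ lies in $[e^{-2\delta},e^{2\delta}]$, so the TV distance is at most $e^{2\delta}-1$. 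This bound is linear in $\delta$ when $\delta\le1$. When $\delta>1$ the ratio $d_\tv/\delta$ is trivially at most $1$. We therefore obtain $C_R=\max\{(e^2-1)/(1-c(R)),1\}$, where $c(R)<1$ comes from \eqref{eq:def_contraction}.

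\emph{NLHF.} The contraction in Theorem \ref{theorem:convergence_nl_tv} requires $\|U\|_\oplus<1$, so it is unavailable for general $R$. Instead we use the first-order characterization of the equilibrium:
$$
p_\nl\propto \exp\Bigl(\int\nu_\nl(dx)U(x,\cdot)\Bigr),\qquad q_\nl\propto\exp\Bigl(-\int\mu_\nl(dy)U(\cdot,y)\Bigr).
$$
After subtracting $g\oplus f$ with $\|U-g\oplus f\|_\infty\le R+\epsilon$, the $f$ part becomes a fixed tilt. The remainder then has bounded sup-norm, which gives uniform bounds $e^{-O(R)}\le p_\nl/p_\rl(f),\,q_\nl/p_\rl(-g)\le e^{O(R)}$.

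Perturbing $U\to\hat U$ splits the change in $\mu_\nl$ into two parts: a direct effect of size $O(\delta)$, and an indirect effect through $\nu$, whose size is at most the oscillation of $\int(\nu-\hat\nu)(dx)\tilde U(x,\cdot)$, which is $\le 2R\,d_\tv(\nu,\hat\nu)$.

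This coupled system need not be contractive when $R\ge1/2$. To get past this, we use strong monotonicity of the KL-regularized game. Summing the first-order optimality conditions of the two equilibria gives
$$
\kl(\mu\|\hat\mu)+\kl(\hat\mu\|\mu)+\kl(\nu\|\hat\nu)+\kl(\hat\nu\|\nu)=\iint(U-\hat U)\,d\bigl[(\nu-\hat\nu)\otimes(\hat\mu-\mu)\bigr]\text{-type terms}.
$$
The bilinear $U$-terms cancel by zero-sum antisymmetry of the game operator. What remains is bounded by $\delta\cdot(d_\tv(\mu,\hat\mu)+d_\tv(\nu,\hat\nu))\cdot O(1)$. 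Pinsker's inequality then converts the left side into $d_\tv^2$, and the result is $d_\tv\le C\delta$, with $C$ depending on $R$ only through the uniform density bounds above. Under this argument those bounds may not even be needed.

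The main obstacle is carrying out this monotonicity identity rigorously, in particular checking that the cross terms involving $U$ itself cancel and do not leave a term of order $R\,d_\tv^2$. I expect it to work, since the game is strongly convex--concave with modulus $1$ in KL. If it fails, the fallback is to use the uniform density bounds, which give $\chi^2$/KL comparability, together with an implicit-function-type estimate with a constant depending on $R$.
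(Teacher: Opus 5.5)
\textbf{Verdict.} Your proposal is correct. The MCHF part follows the paper, and the NLHF part takes a genuinely different route that works and gives a stronger constant.

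\textbf{MCHF.} Your argument is essentially the paper's (Theorem~\ref{theorem:stability_iteration_mc} and Corollary~\ref{corollary:stability_mc}): contraction plus a row-wise kernel perturbation bound. Your $e^{2\delta}-1$ case split replaces the paper's softmax-Lipschitz bound $\tfrac12\|U-\hat U\|_\infty$.

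\textbf{NLHF: the monotonicity step does close.} Write $(p,q)$ and $(\hat p,\hat q)$ for the equilibrium densities and set $E=\hat U-U$. Insert $\log p=U^\star q-\mathrm{const}$ and $\log q=-Up-\mathrm{const}$ (and the hatted versions) into $\langle p-\hat p,\log p-\log\hat p\rangle+\langle q-\hat q,\log q-\log\hat q\rangle$. The terms $\langle q-\hat q,U(p-\hat p)\rangle$ cancel by the skew structure of the coupling, exactly as you anticipated. What remains is
\[
\kl(\mu\|\hat\mu)+\kl(\hat\mu\|\mu)+\kl(\nu\|\hat\nu)+\kl(\hat\nu\|\nu)
=\langle q-\hat q,\,E\hat p\rangle-\langle E^\star\hat q,\,p-\hat p\rangle
\le \|E\|_\infty\bigl(\|p-\hat p\|_1+\|q-\hat q\|_1\bigr).
\]
The remainder is linear in the differences, each paired with a fixed equilibrium density. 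It is not the bilinear form $\iint E\,d[(\nu-\hat\nu)\otimes(\hat\mu-\mu)]$ you displayed, which would be quadratic in the differences and is not what the computation gives.

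Pinsker bounds the left side below by $\|p-\hat p\|_1^2+\|q-\hat q\|_1^2$. Hence $d_\tv(\mu_\nl(U),\mu_\nl(\hat U))\le \|U-\hat U\|_\infty/\sqrt2$ for all $U,\hat U$, and similarly for $\nu_\nl$. All terms are finite because $\log p,\log q\in L^\infty$. Neither the density bounds nor $\|\cdot\|_\oplus$ are needed.

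\textbf{Comparison with the paper.} The paper integrates the Fr\'echet derivative of Theorem~\ref{theorem:frechet_derivative_nl} along $U_t=U+tE$. It controls that derivative with the $L^1$ resolvent bound of Lemma~\ref{lemma:Schur_complement_l1_norm}, after stripping additive parts, and obtains the constant $C(1+\|U\|_\oplus^6+\|\hat U\|_\oplus^6)$.
\begin{itemize}
\item Your symmetrized-KL argument is more elementary and strictly stronger: it gives a global Lipschitz bound with an absolute constant.
\item It also shows that the paper's remark, that KL-type arguments yield only H\"older-$1/2$ continuity, is too pessimistic.
\item What the paper's route buys is the $L^1$ resolvent machinery, which it reuses in Lemma~\ref{lemma:derivative_lipschitz_continuous_nl}. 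Your argument supplies the equilibrium-stability input there, but not the bound on $\|A_U^{-1}\|_{1\to1}$.
\end{itemize}

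\textbf{One false statement to delete.} Adding $g(x)$ to $U$ does not leave $\mu_\nl$ unchanged. It changes the opponent's best response $q=\softmax(-Up-g)$, and hence $p=\softmax(U^\star q)$. Compare Proposition~\ref{proposition:derivative_additive}, where $Dp_\nl(g\oplus f)$ depends on $g$. This is harmless only because neither of your two arguments actually uses the reduction.
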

See \Cref{sec:perturbation} for the precise dependence of $C_R$ on $R$.
This theorem implies that both maps $U\mapsto \mu_\mc(U)$ and
$U\mapsto \mu_\nl(U)$ are Lipschitz continuous on the ball
$B^\infty_\oplus(R)$.  Since
$\|U\|_\oplus \le \|U\|_\infty$, this ball includes the usual
$L^\infty$-ball of radius $R$. Furthermore, $B^\infty_\oplus(R)$ is invariant
under the addition of additive structure: if $U\in B^\infty_\oplus(R)$, then
$U+g\oplus f\in B^\infty_\oplus(R)$ for any $g,f\in L^\infty$. Thus, the result
allows perturbations around a larger class of utilities than a standard
$L^\infty$-ball would.

\section{Discussion}
In this paper, we proposed MCHF, an alternative framework for aligning generative models with human preferences by constructing a Markov kernel directly from pairwise preference information. We showed that the resulting Markov chain converges geometrically fast to its stationary distribution, with a contraction rate governed by the seminorm $\|U\|_\oplus$, which captures the non-transitive component of the pairwise utility. We further showed that the mirror-descent algorithm for NLHF, under an appropriate choice of step size, satisfies an analogous structure-adaptive convergence guarantee. Finally, through a perturbation analysis, we proved that when $\|U\|_\oplus$ is small, MCHF and NLHF agree up to first order around the RLHF solution. These results provide a unified perspective on reward-based, game-theoretic, and Markovian approaches to preference alignment.

It would also be interesting to investigate how the pairwise utility should be designed so that the aligned distribution induced by MCHF satisfies desirable axioms from social choice theory. Related questions have been studied for RLHF and NLHF; see, for example, \cite{liu2025statistical,golz2025distortion} and the references therein. We leave a systematic social-choice-theoretic analysis of MCHF to future work.

Our analysis in this paper was primarily theoretical, and we leave the practical implementation of MCHF for future work. To run the Markov chain in practice, one needs to sample from the conditional distribution
$
\MP(x,dy)\propto \exp(U(x,y))\mu_\rf(dy)
$
given the current output $x$. This problem can be viewed as a conditional analogue of sampling from an RLHF solution of the form
$
\mu_\rl(dy)\propto \exp(r(y))\mu_\rf(dy).
$
Recent developments for sampling from such reward-tilted or unnormalized distributions, including rejection-sampling-based approaches \cite{liu2025statistical}, SDE-based methods such as diffusion models (see \cite{uehara2024understanding} and the references therein), and a recent Monge--Ampère PDE sampler \cite{deb2025no}, provide promising tools for this purpose. Developing scalable implementations of MCHF by adapting these techniques to the Markov kernel $\MP(x,dy)$ is an important direction for future work.

\ifbiblatex
\else
  \bibliographystyle{abbrvnat}
  
  \bibliography{reference}

@article{deb2025no,
  title={{No-Regret Generative Modeling via Parabolic Monge--Amp{\`e}re PDE}},
  author={Deb, Nabarun and Liang, Tengyuan},
  journal={arXiv preprint arXiv:2504.09279},
  year={2025}
}

@article{rafailov2023direct,
  title={{Direct preference optimization: Your language model is secretly a reward model}},
  author={Rafailov, Rafael and Sharma, Archit and Mitchell, Eric and Manning, Christopher D and Ermon, Stefano and Finn, Chelsea},
  journal={Advances in neural information processing systems},
  volume={36},
  pages={53728--53741},
  year={2023}
}

@inproceedings{munos2024nash,
  title={{Nash Learning from Human Feedback}},
  author={Munos, R{\'e}mi and Valko, Michal and Calandriello, Daniele and Azar, Mohammad Gheshlaghi and Rowland, Mark and Guo, Zhaohan Daniel and Tang, Yunhao and Geist, Matthieu and Mesnard, Thomas and Fiegel, C{\^o}me and others},
  booktitle={Forty-first International Conference on Machine Learning},
  year={2024}
}

@inproceedings{golz2025distortion,
title={{Distortion of {AI} Alignment: Does Preference Optimization Optimize for Preferences?}},
author={Paul G{\"o}lz and Nika Haghtalab and Kunhe Yang},
booktitle={The Thirty-ninth Annual Conference on Neural Information Processing Systems},
year={2026},
}

@article{liu2025statistical,
  title={{Statistical Impossibility and Possibility of Aligning LLMs with
Human Preferences: From Condorcet Paradox to Nash Equilibrium}},
  author={Liu, Kaizhao and Long, Qi and Shi, Zhekun and Su, Weijie J and Xiao, Jiancong},
  journal={arXiv preprint arXiv:2503.10990},
  year={2025}
}

@article{shi2025fundamental,
  title={{Fundamental Limits of Game-Theoretic LLM Alignment: Smith Consistency and Preference Matching}},
  author={Shi, Zhekun and Liu, Kaizhao and Long, Qi and Su, Weijie J and Xiao, Jiancong},
  journal={arXiv preprint arXiv:2505.20627},
  year={2025}
}

@article{tiapkin2025accelerating,
  title={{Accelerating Nash Learning from Human Feedback via Mirror Prox}},
  author={Tiapkin, Daniil and Calandriello, Daniele and Belomestny, Denis and Moulines, Eric and Naumov, Alexey and Rasul, Kashif and Valko, Michal and Menard, Pierre},
  journal={arXiv preprint arXiv:2505.19731},
  year={2025}
}

@article{rosset2024direct,
title={{Direct Nash Optimization:
Teaching Language Models to Self-Improve with General Preferences}},
  author={Rosset, Corby and Cheng, Ching-An and Mitra, Arindam and Santacroce, Michael and Awadallah, Ahmed and Xie, Tengyang},
  journal={arXiv preprint arXiv:2404.03715},
  year={2024}
}

@inproceedings{wu2024self,
 author = {Wu, Yue and Sun, Zhiqing and Hughes, Rina and Ji, Kaixuan and Yang, Yiming and Gu, Quanquan},
 booktitle = {International Conference on Learning Representations},
 editor = {Y. Yue and A. Garg and N. Peng and F. Sha and R. Yu},
 pages = {91558--91582},
 title = {Self-Play Preference Optimization for Language Model Alignment},
 volume = {2025},
 year = {2025}
}

@article{wang2023rlhf,
  title={{Is RLHF More Difficult than Standard RL?
A Theoretical Perspective}},
  author={Wang, Yuanhao and Liu, Qinghua and Jin, Chi},
  journal={Advances in Neural Information Processing Systems},
  volume={36},
  pages={76006--76032},
  year={2023}
}

@article{bradley1952rank,
 author = {Ralph Allan Bradley and Milton E. Terry},
 journal = {Biometrika},
 number = {3/4},
 pages = {324--345},
 publisher = {[Oxford University Press, Biometrika Trust]},
 title = {{Rank Analysis of Incomplete Block Designs: I. The Method of Paired Comparisons}},
 urldate = {2026-04-21},
 volume = {39},
 year = {1952}
}

@InProceedings{ethayarajh2024kto,
  title = 	 {{KTO: Model Alignment as Prospect Theoretic Optimization}},
  author =       {Ethayarajh, Kawin and Xu, Winnie and Muennighoff, Niklas and Jurafsky, Dan and Kiela, Douwe},
  booktitle = 	 {Proceedings of the 41st International Conference on Machine Learning},
  pages = 	 {12634--12651},
  year = 	 {2024},
  volume = 	 {235},
  series = 	 {Proceedings of Machine Learning Research},
  publisher =    {PMLR},
}

@inproceedings{meng2024simpo,
 author = {Meng, Yu and Xia, Mengzhou and Chen, Danqi},
 booktitle = {Advances in Neural Information Processing Systems},
 pages = {124198--124235},
 publisher = {Curran Associates, Inc.},
 title = {{SimPO: Simple Preference Optimization with a Reference-Free Reward}},
 volume = {37},
 year = {2024}
}

@InProceedings{azar2024general,
  title = 	 {{A General Theoretical Paradigm to Understand Learning from Human Preferences}},
  author =       {Gheshlaghi Azar, Mohammad and Daniel Guo, Zhaohan and Piot, Bilal and Munos, Remi and Rowland, Mark and Valko, Michal and Calandriello, Daniele},
  booktitle = 	 {Proceedings of The 27th International Conference on Artificial Intelligence and Statistics},
  pages = 	 {4447--4455},
  year = 	 {2024},
  editor = 	 {Dasgupta, Sanjoy and Mandt, Stephan and Li, Yingzhen},
  volume = 	 {238},
  series = 	 {Proceedings of Machine Learning Research},
  publisher =    {PMLR},
}

@inproceedings{huang2024correcting,
 author = {Huang, Audrey and Zhan, Wenhao and Xie, Tengyang and Lee, Jason and Sun, Wen and Krishnamurthy, Akshay and Foster, Dylan},
 booktitle = {International Conference on Learning Representations},
 pages = {92647--92701},
 title = {{Correcting the Mythos of KL-Regularization: Direct Alignment without Overoptimization via Chi-Squared Preference Optimization}},
 volume = {2025},
 year = {2025}
}

@article{schulman2017proximal,
  title={Proximal policy optimization algorithms},
  author={Schulman, John and Wolski, Filip and Dhariwal, Prafulla and Radford, Alec and Klimov, Oleg},
  journal={arXiv preprint arXiv:1707.06347},
  year={2017}
}

@book{luce1959individual,
  title={{Individual Choice Behavior: A Theoretical Analysis}},
  author={Luce, R. Duncan},
  publisher={Wiley},
  year={1959}
}

@inproceedings{christiano2017deep,
  title={Deep reinforcement learning from human preferences},
  author={Christiano, Paul F. and Leike, Jan and Brown, Tom B. and Martic, Miljan and Legg, Shane and Amodei, Dario},
  booktitle={Advances in Neural Information Processing Systems},
  volume={30},
  year={2017}
}

@InProceedings{swamy2024minimaximalist,
  title = 	 {{A Minimaximalist Approach to Reinforcement Learning from Human Feedback}},
  author =       {Swamy, Gokul and Dann, Christoph and Kidambi, Rahul and Wu, Steven and Agarwal, Alekh},
  booktitle = 	 {Proceedings of the 41st International Conference on Machine Learning},
  pages = 	 {47345--47377},
  year = 	 {2024},
  volume = 	 {235},
  series = 	 {Proceedings of Machine Learning Research},
  publisher =    {PMLR},
}

@inproceedings{ouyang2022training,
  title={Training language models to follow instructions with human feedback},
  author={Ouyang, Long and Wu, Jeffrey and Jiang, Xu and Almeida, Diogo and Wainwright, Carroll L. and Mishkin, Pamela and Zhang, Chong and Agarwal, Sandhini and Slama, Katarina and Ray, Alex and others},
  booktitle={Advances in Neural Information Processing Systems},
  volume={35},
  pages={27730--27744},
  year={2022}
}

@article{bai2022training,
  title={Training a helpful and harmless assistant with reinforcement learning from human feedback},
  author={Bai, Yuntao and Jones, Andy and Ndousse, Kamal and Askell, Amanda and Chen, Anna and DasSarma, Nova and Drain, Dawn and Fort, Stanislav and Ganguli, Deep and Henighan, Tom and others},
  journal={arXiv preprint arXiv:2204.05862},
  year={2022}
}

@article{page1999pagerank,
title = {{The anatomy of a large-scale hypertextual Web search engine}},
journal = {Computer Networks and ISDN Systems},
volume = {30},
number = {1},
pages = {107-117},
year = {1998},
note = {Proceedings of the Seventh International World Wide Web Conference},
issn = {0169-7552},
author = {Sergey Brin and Lawrence Page},
}

@article{negahban2012rank,
  title={{Rank centrality: Ranking from pairwise comparisons}},
  author={Negahban, Sahand and Oh, Sewoong and Shah, Devavrat},
  journal={Operations Research},
  volume={65},
  number={1},
  pages={266--287},
  year={2017}
}

@article{jiang2011statistical,
  title={{Statistical ranking and combinatorial Hodge theory}},
  author={Jiang, Xiaoye and Lim, Lek-Heng and Yao, Yuan and Ye, Yinyu},
  journal={Mathematical Programming},
  volume={127},
  number={1},
  pages={203--244},
  year={2011},
  publisher={Springer}
}

@techreport{openai2023gpt4,
  title={GPT-4 Technical Report},
  author={{OpenAI}},
  institution={{OpenAI}},
  journal={arXiv preprint arXiv:2303.08774},
  year={2023}
}

@techreport{geminiteam2025gemini,
  title={{Gemini 2.5: Pushing the Frontier with Advanced Reasoning, Multimodality, Long Context, and Next Generation Agentic Capabilities}},
  author={{Gemini Team, Google}},
  institution={Google DeepMind},
  journal={arXiv preprint arXiv:2507.06261},
  year={2025}
}

@techreport{anthropic2024claude,
  title={{The Claude 3 Model Family: Opus, Sonnet, Haiku}},
  author={{Anthropic}},
  institution={Anthropic},
  year={2024}
}

@inproceedings{wu2026multiplayer,
title={{Multiplayer Nash Preference Optimization}},
author={Fang Wu and Xu Huang and Weihao Xuan and Zhiwei Zhang and Yijia Xiao and Guancheng Wan and Xiaomin Li and Bing Hu and Peng Xia and Jure Leskovec and Yejin Choi},
booktitle={The Fourteenth International Conference on Learning Representations},
year={2026},
}

@inproceedings{pasztor2026stackelberg,
title={{Stackelberg Learning from Human Feedback: Preference Optimization as a Sequential Game}},
author={Barna P{\'a}sztor and Thomas Kleine Buening and Andreas Krause},
booktitle={The Fourteenth International Conference on Learning Representations},
year={2026},
}

@inproceedings{chu2025stackelberg,
title={{Stackelberg Self-Annotation: A Robust Approach to Data-Efficient {LLM} Alignment}},
author={Xu Chu and Zhixin Zhang and Tianyu Jia and Yujie Jin},
booktitle={The Thirty-ninth Annual Conference on Neural Information Processing Systems},
year={2026},
}

@InProceedings{sorensen2024roadmap,
  title = {{Position: A Roadmap to Pluralistic Alignment}},
  author =    {Sorensen, Taylor and Moore, Jared and Fisher, Jillian and Gordon, Mitchell L and Mireshghallah, Niloofar and Rytting, Christopher Michael and Ye, Andre and Jiang, Liwei and Lu, Ximing and Dziri, Nouha and Althoff, Tim and Choi, Yejin},
  booktitle = 	 {Proceedings of the 41st International Conference on Machine Learning},
  pages = 	 {46280--46302},
  year = 	 {2024},
  volume = 	 {235},
  series = 	 {Proceedings of Machine Learning Research},
  month = 	 {21--27 Jul},
  publisher =    {PMLR},
}

@article{dobrushin1956central,
author = {Dobrushin, R. L.},
title = {{Central Limit Theorem for Nonstationary Markov Chains. I}},
journal = {Theory of Probability \& Its Applications},
volume = {1},
number = {1},
pages = {65-80},
year = {1956},
}

@article{rosenthal1995minorization,
author = {Jeffrey S. Rosenthal},
title = {{Minorization Conditions and Convergence Rates for Markov Chain Monte Carlo}},
journal = {Journal of the American Statistical Association},
volume = {90},
number = {430},
pages = {558--566},
year = {1995},
publisher = {Taylor \& Francis},
}

@book{krantz2013implicit,
  title     = {{The Implicit Function Theorem: History, Theory, and Applications}},
  author    = {Krantz, Steven G. and Parks, Harold R.},
  series    = {Modern Birkh{\"a}user Classics},
  publisher = {Birkh{\"a}user New York},
  address   = {New York, NY},
  year      = {2013},
  doi       = {10.1007/978-1-4614-5981-1}
}

@article{haihao2018relatively,
author = {Lu, Haihao and Freund, Robert M. and Nesterov, Yurii},
title = {Relatively Smooth Convex Optimization by First-Order Methods, and Applications},
journal = {SIAM Journal on Optimization},
volume = {28},
number = {1},
pages = {333-354},
year = {2018},
}

@article{uehara2024understanding,
  title={Understanding reinforcement learning-based fine-tuning of diffusion models: A tutorial and review},
  author={Uehara, Masatoshi and Zhao, Yulai and Biancalani, Tommaso and Levine, Sergey},
  journal={arXiv preprint arXiv:2407.13734},
  year={2024}
}
\fi


\appendix
\section*{Notation}
\addcontentsline{toc}{section}{Notation}
\begin{center}
\small
\begin{tabularx}{\textwidth}{lX}
\toprule
Notation & Meaning \\
\midrule
$\mu_\rf$ & Reference distribution, e.g. the pre-trained model. \\
$U(x,y)$ & Pairwise utility; larger values mean that $y$ is preferred to $x$. \\
$g\oplus f$ & Additive function defined by $(g\oplus f)(x,y)=g(x)+f(y)$. \\
$\|U\|_\infty$ & Essential supremum norm on $L^\infty(\mu_\rf\otimes\mu_\rf)$. \\
$\|U\|_\oplus$ & Additive-defect seminorm:
$\inf_{g,f\in L^\infty(\mu_\rf)}\|U-g\oplus f\|_\infty$. \\
$\hat f$ & Column-sum projection:
$\hat f(y)=\int U(x,y)\mu_\rf(dx)$. \\
$\hat g$ & Row-sum projection:
$\hat g(x)=\int U(x,y)\mu_\rf(dy)-m$ where 
$m=\int U(x,y)\mu_\rf(dx)\mu_\rf(dy)$. \\
\midrule
$\MP(x,dy)$ & MCHF Markov kernel:
$\MP(x,dy)=Z(x)^{-1}\exp(U(x,y))\mu_\rf(dy)$ \\
$\mu\MP$ & One-step update of $\mu$ by the Markov kernel $\MP$. \\
$\mu_\mc(U)$ & Stationary distribution of the MCHF Markov kernel based on $U$ \\
$\mu_\nl(U)$ & NLHF equilibrium distribution  based on $U$\\
$\nu_\nl(U)$ & Adversarial/opponent equilibrium distribution in NLHF  based on $U$ \\
$\mu_\rl(f)$ & RLHF distribution with reward $f$:
$\mu_\rl(f)(dy)\propto \exp(f(y))\mu_\rf(dy)$. \\
$p_\ast(U)$ & Density $d\mu_\ast(U)/d\mu_\rf$ for $\ast\in\{\mc, \nl\}$. \\
$p_{\rl}(f)$ & Density $d\mu_{\rm RL}(f)/d\mu_\rf$. \\
\midrule
$K h$ & Integral operator $Kh (x)=\int K(x, y) h(y)$.\\
$K^\star h$ & Adjoint operator:
$K^\star h(y)=\int K(x,y)h(x)\mu_\rf(dx)$. \\
$\odot$ & Entrywise product \\
\midrule
$L^p$ &Set of $L^p(\mu_\rf)$-measurable functions\\
$L^p_0$ &Set of $L^p(\mu_\rf)$-measurable function with mean zero: $L^p_0 = \{f\in L^p: \int \mu_\rf(dy) f(y) = 0\}$  \\
$\Delta^\infty$ & Density simplex: $\Delta^\infty = \{p\in L^\infty: p(\cdot)\ge 0, \int \mu_\rf(dz) p(z)=1\}$\\
$\softmax$ & Softmax operator $\softmax: L^\infty \to \Delta^\infty$, $f\mapsto \softmax(f)(x) = \frac{\exp(f(x))}{\int \mu_{\rf}(dx') \exp(f(x'))} $
\\
$J_p$ & Demean operator $J_{p}: L^1 \to L^1_0$ for each $p\in \Delta^\infty$: $f \mapsto J_p f  = p\odot (f - \int \mu_{\rf}(dz') p(z') f(z'))$\\
\bottomrule
\end{tabularx}
\end{center}

\section{Proof for \Cref{sec:projection_utility}}
\subsection{Proof of Proposition \ref{proposition:L2_estimate}}\label{proof:proposition:L2_estimate}
Take $(\hat{g}, \hat{f})\in \argmin_{g,f\in L^2}\|U-g\oplus f\|_2$ as 
$$
\hat g(x)
=
\int U(x,y')\mu_{\rf}(dy')-m,
\quad
\hat f(y)
=
\int U(x',y)\mu_{\rf}(dx'), \quad m=\int U(x',y)\mu_{\rf}(dx')\mu_{\rf}(dy')
$$
Here, we have $\hat{g}, \hat{f}\in L^\infty$ by $U\in L^\infty(\mu_\rf\otimes\mu_\rf)$, and hence 
$\|U\|_\oplus \le \|U-\hat{g}\oplus \hat{f}\|_\infty$ holds 
by the definition of $\|U\|_\oplus$. Thus, it suffices to show $4^{-1}\square(U)\le \|U\|_\oplus$ and $\|U-\hat{g}\oplus \hat{f}\|_\infty\le \square(U)$. 

\paragraph{Proof of $4^{-1}\square(U)\le \|U\|_\oplus$}
Fix arbitrary $g,f\in L^\infty$ and let $U^{g\oplus f}=U-g\oplus f$. 
For every $x,x',y,y'$, noting that the $g$- and $f$-terms cancel, 
\begin{align*}
U(x',y')-U(x,y')-U(x',y)+U(x,y) =
U^{g\oplus f}(x',y')-U^{g\oplus f}(x,y')-U^{g\oplus f}(x',y)+U^{g\oplus f}(x,y),
\end{align*}
Taking $\esssup_{x,y,x',y'}$ on both sides, the LHS becomes $\square(U)$, while using the triangle inequality for the RHS, we get 
$\square(U) \le 4 \|U^{g\oplus f}\|_\infty$. 
Taking $\inf_{g,f\in L^\infty}$ on the RHS, we get $\square(U)\le 4\|U\|_\oplus$ and complete the proof.

\paragraph{Proof of $\|U-\hat{g}\oplus \hat{f}\|_\infty\le \square(U)$}
By the definition of $\hat{g}$ and $\hat{f}$, for all $(x, y)$, 
\begin{align*}
(U-\hat g\oplus \hat f)(x,y) 
&=\int \Bigl(
U(x,y)-U(x',y)-U(x,y')+U(x',y')
\Bigr)\mu_{\rf}(dx')\mu_{\rf}(dy').
\end{align*}
Hence, taking $\esssup_{x, y}|\cdot|$ on both sides, 
\begin{align*}
\|U-\hat g\oplus \hat f\|_\infty 
&\le\esssup_{x, y, x', y'} \bigl|U(x,y)-U(x',y)-U(x,y')+U(x',y')\bigr|= \square(U),
\end{align*}
so the proof is complete. 

\subsection{Proof of Proposition \ref{proposition:L2_estimate_antisymmetric}}\label{proof:proposition:L2_estimate_antisymmetric}
We first claim $\|U\|_\oplus = \inf_{f}\|U-(-f)\oplus f\|_\infty$. To prove this, it suffices to show $\inf_{g, f} \|U- g\oplus f\|_\infty \ge \inf_{f} \|U- (-f)\oplus f\|_\infty$. Let us fix $g, f\in L^\infty$ and let $h=2^{-1}(-g + f)$. Note 
  $$
  \bigl((-h)\oplus h\bigr)(x,y) = 2^{-1} \bigl({g(x)-f(x) -g(y)+f(y)}\bigr) = 2^{-1} \bigl(g\oplus f - f \oplus g\bigr)(x, y)
  $$
  and the $\|U+f\oplus g\|_\infty = \|-U + g\oplus f\|_\infty$ since $U$ is antisymmetric. Therefore, by the triangle inequality, 
  \begin{align*}
    \inf_{f} \|U- (-f)\oplus f\|_\infty\le \|U-(-h)\oplus h\|_\infty \le 2^{-1} \|U-g\oplus f\|_\infty +2^{-1} \|U + f\oplus g\|_\infty =  \|U-g\oplus f\|_\infty. 
  \end{align*}
Taking $\inf_{g, f}$ on the RHS, we obtain the claim. 

By $\|U\|_\oplus = \inf_{f}\|U-(-f)\oplus f\|_\infty$, we have $\|U\|_\oplus\le \|U-(-\hat{f})\oplus \hat{f}\|_\infty$ for $\hat{f}(y)=\int\mu_\rf(dx) U(x, y)$. Furthermore, using $\|U\|_\oplus = \inf_{f}\|U-(-f)\oplus f\|_\infty$ and the same argument as that of \Cref{proof:proposition:L2_estimate}, one can show $3^{-1}\Delta(U)\le \|U\|_\oplus$ and 
$\|U-(\hat{f})\oplus \hat{f}\|_\infty \le \Delta(U)$. Thus, we complete the proof.

\section{Proof for \Cref{sec:mchf}}
\subsection{Proof of Theorem \ref{theorem:contraction}}\label{proof:theorem:contraction}
We prove $d_\tv(\mu\MP, \nu\MP)\le (1-e^{-2\|U\|_\oplus}) d_\tv(\mu, \nu)$ and $d_\tv(\mu\MP, \nu\MP)\le \|U\|_\oplus d_\tv(\mu, \nu)$ separately. 

\subsubsection{Proof of $d_\tv(\mu\MP, \nu\MP)\le (1-e^{-2\|U\|_\oplus}) d_\tv(\mu, \nu)$}
Let $L^1=L^1(\mu_{\rf})$ and $L_0^1 = \{h\in L^1: \int \mu_{\rf}(dy) h(y)=0\}$. We denote the $L_p$ norm as $\|f\|_p = \|f\|_{L^p}$. We also write $L^\infty = L^\infty(\mu_{\rf}\otimes\mu_{\rf})$ and $\|U\|_\infty = \|U\|_{L^\infty(\mu_{\rf}\otimes \mu_{\rf})}$ for any $U\in L^\infty(\mu_{\rf}\otimes \mu_{\rf})$ when the context is clear.
For any $U\in L^\infty(\mu_{\rf}\otimes \mu_{\rf})$, define the Markov transition density $P\in L^\infty(\mu_\rf\otimes\mu_\rf)$ as 
\begin{align}\label{eq:def_markov_density}
  P(x, y) = \frac{\exp(U(x, y))}{\int \mu_{\rf}(dy') \exp(U(x, y'))}.
\end{align}
For all $\mu, \nu \in \mathscr{P}_{\mu_\rf}$, we can write the TV distance as 
\begin{align*}
  d_\tv(\mu, \nu) = \frac12 \Bigl\|\frac{d\mu}{d \mu_\rf} - \frac{d\nu}{d \mu_\rf}\Bigr\|_1, \quad
  d_{\tv}(\mu\MP, \nu \MP) = \frac12 \Bigl\||P^\star  \Bigl(\frac{d\mu}{d \mu_\rf}- \frac{d\nu}{d \mu_\rf}\Bigr)\Bigr\|_{1}
\end{align*}
where $P^\star h (y) = \int \mu_\rf (dx) h(x) P(x, y)$ is the adjoint integral operator. 
Since $\frac{d\mu}{d \mu_\rf} - \frac{d\nu}{d \mu_\rf}\in L_0^1 = \{h\in L^1: \int \mu_{\rf}(dy) h(y)=0\}$, it suffices to show the contraction rate of $P^\star$ on the Banach space $(L_0^1, \|\cdot\|_1)$.  
\begin{lemma}\label{lemma:minorization}
Let $P$ be the Markov density given by \eqref{eq:def_markov_density}. Then, we have
    \begin{align*}
   \sup_{h\in L_0^1: \|h\|_1\le 1} \|P^\star h\|_{1} \le 1- \exp(-2\|U\|_\oplus)
  \end{align*}
  where $L_0^1 = \{h\in L^1: \int \mu_{\rf}(dy) h(y)=0\}$.
\end{lemma}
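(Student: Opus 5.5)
The plan is a Doeblin minorization argument: bound the transition density $P(x,y)$ from below by a multiple of a fixed probability density $q(y)$, uniformly in $x$. The additive part of $U$ should be absorbed into $q$, so that only the residual $\|U\|_\oplus$ affects the minorization constant.

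\emph{Step 1 (reduction to $U-g\oplus f$).} Fix $g,f\in L^\infty$ and write $W=U-g\oplus f$, with $\varepsilon=\|W\|_\infty$. Then
$$P(x,y)=\frac{e^{W(x,y)}e^{f(y)}}{\int\mu_\rf(dy')e^{W(x,y')}e^{f(y')}}.$$
The factor $e^{g(x)}$ cancels between numerator and denominator, so $g$ never enters. Put $q(y)=e^{f(y)}/\int\mu_\rf e^{f}$, which is a probability density with respect to $\mu_\rf$. Using $e^{-\varepsilon}\le e^{W}\le e^{\varepsilon}$, we get for a.e. $(x,y)$
$$P(x,y)\ge e^{-2\varepsilon}q(y).$$
By the same argument $P(x,y)\le e^{2\varepsilon}q(y)$, though that bound is not needed here.

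\emph{Step 2 (contraction on $L^1_0$).} Decompose $P(x,y)=\alpha q(y)+(1-\alpha)Q(x,y)$ with $\alpha=e^{-2\varepsilon}$. The remainder is
$$Q=\frac{P-\alpha q}{1-\alpha}.$$
It is nonnegative by Step 1, and $\int\mu_\rf(dy)\,Q(x,y)=1$, so $Q$ is again a Markov density. (If $\alpha=1$, then $\varepsilon=0$, $P=q$, and the bound is trivial.)

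Now take $h\in L^1_0$. The $q$-part drops out of $P^\star h$:
$$\int\mu_\rf(dx)\,h(x)\,\alpha q(y)=\alpha q(y)\int\mu_\rf h=0.$$
Hence $P^\star h=(1-\alpha)Q^\star h$. Since $Q\ge0$ has unit row integrals, Tonelli gives
$$\|Q^\star h\|_1\le\iint\mu_\rf(dx)\mu_\rf(dy)\,|h(x)|\,Q(x,y)=\|h\|_1.$$
Therefore $\|P^\star h\|_1\le(1-e^{-2\varepsilon})\|h\|_1$.

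\emph{Step 3 (infimum).} Take the infimum over $(g,f)$. The map $\varepsilon\mapsto1-e^{-2\varepsilon}$ is continuous and increasing, so the bound $1-e^{-2\|U\|_\oplus}$ follows for every $h\in L^1_0$ with $\|h\|_1\le1$.

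The only delicate point is Step 1: we must check that the minorizing density $q$ is independent of $x$ even though $g$ and $f$ are arbitrary. This works because $g(x)$ cancels in the normalization, and it is exactly what lets the seminorm, rather than $\|U\|_\infty$, govern the rate. The remaining steps are routine, with care only about a.e. statements and measurability of $Q$.
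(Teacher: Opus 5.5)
Your proposal is correct and follows the paper's proof essentially step for step. Both arguments use the same Doeblin minorization $P(x,y)\ge e^{-2\|U-g\oplus f\|_\infty}p_f(y)$ obtained by cancelling $e^{g(x)}$, the same splitting $P=\delta p_f+(1-\delta)\tilde P$ whose first part annihilates $L_0^1$, the $L^1$-contraction of the residual Markov kernel, and a final infimum over $(g,f)$. Your explicit handling of the degenerate case $\alpha=1$ is a small tidy-up that the paper leaves implicit.
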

\begin{proof}
We prove this lemma using the standard minorization condition \citep{rosenthal1995minorization}.
Let us fix $g, f\in L^\infty$ and let $U^{g\oplus f}= U - g\oplus f$. 
 Define $p_f\in L^\infty$ as 
$$
p_f(y) = \frac{\exp(f(y))}{\int \mu_\rf(dy')\exp(f(y'))}
$$
so that $p_f$ is a valid density with respect to $\mu_\rf$, i.e., $p_f(\cdot)\ge 0$ and $\int \mu_\rf(dy) p_f(y)=1$. Since the factor $\exp(g(x))$ cancels between the numerator and the denominator in the definition of $P$ (see \eqref{eq:def_markov_density}), the Markov density $P$ can be rewritten as
\begin{align}
  P(x, y) =\frac{\exp(U(x, y)-g(x)-f(y)) e^{f(y)}}{\int \mu_{\rf}(dy') \exp(U(x, y')-g(x)-f(y'))e^{f(y')}}
          &= \frac{\exp(U^{g\oplus f}(x, y)) p_f(y)}{\int \mu_{\rf}(dy') \exp(U^{g\oplus f}(x, y')) p_f(y')} \label{eq:density_U_gf}
\end{align}
Then, for almost every $(x, y)$ with respect to the product measure $\mu_{\rf}\otimes \mu_{\rf}$, using $\int \mu_\rf(dy')p_f(y')=1$, 
\begin{align*}
  P(x, y) \ge \frac{\exp(-\|U^{g\oplus f}\|_\infty)}{\exp(\|U^{g\oplus f}\|_\infty)} \cdot \frac{p_f(y)}{\int \mu_{\rf}(dy')p_f(y')} = \delta^{g\oplus f}  \cdot p_f(y)
\end{align*}
where $\delta^{g\oplus f} = \exp(-2\|U^{g\oplus f}\|_\infty) \in (0,1]$. 
If we define $\tilde{P}$ as
$$
\tilde{P}(x, y) = \frac{
  P(x, y) - \delta^{g\oplus f} \cdot p_f(y)
}{1-\delta^{g\oplus f}}
$$
then $\tilde{P}$ satisfies $\tilde{P}(x, \cdot)\ge 0$ and $\int \mu_{\rf}(dy)\tilde{P}(x, y)=1$, that is, $\tilde{P}$ is a valid Markov transition density with respect to $\mu_{\rf}$. Rearranging this,
$$
P(x, y) = \delta^{g\oplus f} \cdot p_f(y)  + (1-\delta^{g\oplus f}) \tilde{P}(x, y). 
$$
For any $h\in L_0^1$, noting $\int\mu_\rf(dx) h(x)=0$, 
\begin{align*}
  P^\star h(y) &= \int \mu_\rf(dx) P(x, y) h(x) = (1-\delta^{g\oplus f}) \tilde{P}^\star h (y)
\end{align*}
Using the Data Processing Inequality: $\|P^\star h\|_1\le \|h\|_1$, which follows by 
$$
\|\tilde{P}^\star h\|_1 = \int\mu_\rf(dy) \Bigl|
  \int \mu_\rf(dx) \tilde{P}(x, y)  h(x)
\Bigr| \le \int \mu_\rf(dx)\mu_\rf(dy) \tilde{P}(x, y) |h(x)| = \int \mu_\rf(dx) |h(x)|
$$ 
where we have used the Fubini's theorem and the fact that $\tilde{P}$ is the Markov transition probability, we get 
$$
\sup_{h\in L_0^1: \|h\|_1 \le 1 } \|P^\star h\|_1  \le (1-\delta^{g\oplus f}) \sup_{h\in L_0^1: \|h\|_1 \le 1 } \|\tilde P^\star h\|_1 \le (1-\delta^{g\oplus f}).
$$
Since $\delta^{g\oplus f}=\exp(-2\|U-g\oplus f\|_\infty)$, 
taking $\inf_{g,f}$ on the RHS, we complete the proof. 
\end{proof}

\subsubsection{Proof of $d_\tv(\mu\MP, \nu\MP)\le \|U\|_\oplus d_\tv(\mu, \nu)$}
It suffices to show 
$\sup_{h\in L_0^1: \|h\|_1\le 1}  \|P^\star h\|_{1} \le \|U\|_\oplus$. 
We prove this using the following two standard lemmas.
\begin{lemma}\label{lemma:dobrushin-coefficient}
For any $P\in L^\infty(\mu_\rf \otimes \mu_\rf)$, 
  $$
    \sup_{h\in L_0^1: \|h\|_1\le 1}  \|P^\star h\|_{1} = \esssup_{x,x'}\frac{1}{2} \|P(x, \cdot)-P(x', \cdot)\|_{1}
  $$
where $\esssup_{x,x'}$ is taken with respect to the product measure $\mu_\rf\otimes \mu_\rf$. 
\end{lemma}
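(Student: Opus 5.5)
We need to prove the Dobrushin coefficient identity: for a kernel density $P$ (nonnegative, with $\int \mu_\rf(dy)P(x,y)=1$; this is the case of interest), the operator norm of $P^\star$ on $(L^1_0,\|\cdot\|_1)$ equals $\esssup_{x,x'}\frac12\|P(x,\cdot)-P(x',\cdot)\|_1$. We prove the two inequalities separately: the upper bound comes from a coupling of positive and negative parts, and the lower bound from testing against nearly point-mass differences.

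\emph{Upper bound.} Fix $h\in L^1_0$ with $h\neq 0$. Write $h=h_+-h_-$. Since $\int h\,d\mu_\rf=0$, we have $\int h_+ = \int h_- = \|h\|_1/2 =: a$. Then
$$
P^\star h(y)=\int \mu_\rf(dx)h(x)P(x,y)=a^{-1}\iint \mu_\rf(dx)\mu_\rf(dx')\,h_+(x)h_-(x')\bigl(P(x,y)-P(x',y)\bigr).
$$
This identity holds because integrating out $x'$ (respectively $x$) reproduces $\int h_+(x)P(x,y)\,\mu_\rf(dx)$ (respectively $\int h_-(x')P(x',y)\,\mu_\rf(dx')$). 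Taking $L^1$ norms, using Fubini and Minkowski's inequality, and bounding $\|P(x,\cdot)-P(x',\cdot)\|_1\le 2\beta$ for a.e. $(x,x')$, where $\beta$ denotes the right-hand side, gives
$$
\|P^\star h\|_1\le a^{-1}\cdot a^2\cdot 2\beta=\|h\|_1\beta .
$$
The step that needs care is applying the essential supremum inside the double integral. The weight $h_+(x)h_-(x')\,\mu_\rf\otimes\mu_\rf$ is absolutely continuous with respect to the product measure, so the bound holds a.e. with respect to that weight.

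\emph{Lower bound.} This is the main obstacle, since the esssup is not attained pointwise. Given $\varepsilon>0$, the map $(x,x')\mapsto \frac12\|P(x,\cdot)-P(x',\cdot)\|_1$ is measurable, so the set where it exceeds $\beta-\varepsilon$ has positive product measure. We approximate it by rectangles $A\times B$ with $\mu_\rf(A),\mu_\rf(B)>0$. The plan is to invoke the $L^1$-continuity of $x\mapsto P(x,\cdot)$ along averages; concretely, we use density points. The map $x\mapsto P(x,\cdot)\in L^1$ is Bochner measurable: $P\in L^\infty(\mu_\rf\otimes\mu_\rf)$ on a probability space, so $P\in L^1(\mu_\rf;L^1)$, assuming separability, which can be arranged by restricting to the $\sigma$-algebra generated by $P$. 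Hence it can be approximated by simple functions. Choose a simple function $S$ with $\|P(x,\cdot)-S(x)\|_1<\varepsilon$ on a set of large measure. Then there are cells $A,B$ on which $S$ is constant, equal to $s_A$ and $s_B$, with $\frac12\|s_A-s_B\|_1>\beta-3\varepsilon$, and we may further shrink $A$ and $B$ to positive-measure sets on which the approximation holds.

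Take $h=\mathbf 1_A/\mu_\rf(A)-\mathbf 1_B/\mu_\rf(B)$, which lies in $L^1_0$ with $\|h\|_1\le 2$. Then $P^\star h$ is the average of $P(x,\cdot)$ over $A$ minus the average over $B$, which is within $2\varepsilon$ of $s_A-s_B$ in $L^1$. Therefore $\|P^\star h\|_1\ge 2(\beta-3\varepsilon)-2\varepsilon$. If $A$ and $B$ overlap we can remove the overlap, or simply choose disjoint cells, in which case $\|h\|_1=2$. Dividing by $\|h\|_1$ and letting $\varepsilon\to 0$ gives the lower bound.

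For the subsequent use in the paper only the upper bound is needed, and it is the robust part.
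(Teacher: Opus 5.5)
Your argument is correct, but it follows a different route from the paper's.

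\textbf{The paper's route.} The paper works entirely through $L^1$--$L^\infty$ duality. It writes $\|P^\star h\|_1=\sup_{\|f\|_\infty\le 1}|\int h\,Pf\,d\mu_\rf|$ and exchanges the two suprema. It then evaluates the supremum over $h$ with the auxiliary identity $\sup_{h\in L^1_0,\|h\|_1\le1}|\int hg\,d\mu_\rf|=\frac12\essosc(g)$, whose lower half uses test functions supported on near-maximal and near-minimal level sets of $g=Pf$. Finally it exchanges $\sup_{\|f\|_\infty\le1}$ with $\esssup_{x,x'}$.

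\textbf{Your route.} You stay on the primal side. The identity $P^\star h=a^{-1}\iint h_+(x)h_-(x')\bigl(P(x,\cdot)-P(x',\cdot)\bigr)$ gives the upper bound directly. The lower bound uses $\mathbf 1_A/\mu_\rf(A)-\mathbf 1_B/\mu_\rf(B)$, where $A,B$ are cells of a simple-function approximation of $x\mapsto P(x,\cdot)\in L^1$.

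\textbf{What each buys.} The paper's proof is shorter and reuses the $\essosc$ lemma. Yours makes explicit the one genuinely measure-theoretic point. The nontrivial half of the paper's final exchange, $\sup_f\esssup_{x,x'}\ge\esssup_{x,x'}\sup_f$, is asserted there without comment. It needs exactly the strong measurability of the section map (or a countable norming family) that you invoke.

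\textbf{Small remarks.}
\begin{itemize}
\item Nothing in either half of your proof uses $P\ge0$ or $\int\mu_\rf(dy)P(x,y)=1$. So you have in fact proved the lemma for every $P\in L^\infty(\mu_\rf\otimes\mu_\rf)$, as stated, and the restriction to kernel densities can be dropped.
\item The separability caveat is unnecessary. Indicators of finite unions of rectangles are dense in $L^1(\mu_\rf\otimes\mu_\rf)$, and Fubini then shows the section map is an a.e.\ limit of simple $L^1$-valued functions, with no countable generation needed.
\item In the lower bound, say explicitly that the good set $G$ is chosen so that $2\mu_\rf(G^c)$ is smaller than the product measure of the $\varepsilon$-superlevel set. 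That guarantees the superlevel set meets $G\times G$ in positive measure.
\item Also say that for $\varepsilon<\beta/3$ the two cells are automatically distinct, hence disjoint, so $\|h\|_1=2$. The case $\beta=0$ is trivial. This settles the overlap issue you left open.
\end{itemize}
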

\begin{proof}
  This identity, when $\mu_\rf$ is a simplex over finite space, is known as classical Dobrushin ergodicity coefficient \citep{dobrushin1956central}. We prove this result for general probability measure $\mu_\rf$ using a standard argument; see \Cref{proof:lemma:dobrushin-coefficient}. 
\end{proof}

\begin{lemma}\label{lemma:lipschitz_softmax}
Fix a probability measure $\mu$ and define the map $\softmax:L^\infty(\mu)\to L^\infty(\mu)$ as 
$$
\softmax(g) \equiv \frac{\exp(g(y))}{\int \mu(dy') \exp(g(y'))}.
$$
Then, for all $g, \tilde{g}\in L^\infty(\mu)$, we have 
$
  \|\softmax(g)-\softmax(\tilde{g})\|_{L^1(\mu)} \le \|g-\tilde{g}\|_{L^\infty(\mu)}.
$
\end{lemma}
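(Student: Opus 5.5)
The plan is to interpolate along the segment $g_s = \tilde g + s(g-\tilde g)$, $s\in[0,1]$, and bound the derivative of $s\mapsto \softmax(g_s)$ in $L^1(\mu)$. Write $p_s=\softmax(g_s)$ and $\delta = g-\tilde g\in L^\infty(\mu)$. Since $\mu$ is a probability measure and $g_s$ is bounded, we can differentiate under the integral (dominated convergence). This gives
\begin{align*}
\frac{d}{ds} p_s(y) = p_s(y)\Bigl(\delta(y) - \int \mu(dz)\, p_s(z)\delta(z)\Bigr),
\end{align*}
which is exactly the demeaning operator $J_{p_s}\delta$ from the notation table. By the fundamental theorem of calculus, applied pointwise and then combined with Fubini/Minkowski, we get
\begin{align*}
\|\softmax(g)-\softmax(\tilde g)\|_{L^1(\mu)} \le \sup_{s\in[0,1]} \|J_{p_s}\delta\|_{L^1(\mu)}.
\end{align*}

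The key step is the bound $\|J_p\delta\|_{L^1(\mu)}\le \|\delta\|_\infty$ for any density $p$. A naive triangle inequality only gives $2\|\delta\|_\infty$, so it has to be sharpened. Let $c=\int p\,\delta\,d\mu$. The function $\delta-c$ takes values in $[\operatorname{essinf}\delta - c,\ \operatorname{esssup}\delta - c]$, an interval of length at most $\operatorname{esssup}\delta-\operatorname{essinf}\delta\le 2\|\delta\|_\infty$. Moreover, $\int p(\delta-c)\,d\mu=0$, so the positive part and the negative part each carry half of the total mass $\int p|\delta-c|\,d\mu$. Set $a=\operatorname{esssup}\delta-c\ge0$ and $b=c-\operatorname{essinf}\delta\ge0$. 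The positive part is at most $a\cdot p(\{\delta>c\})$ and the negative part is at most $b\cdot p(\{\delta<c\})$. Writing $q=p(\{\delta>c\})$, the total is at most $2\min(aq,\,b(1-q))$. This is maximized at $aq=b(1-q)$, which gives $2ab/(a+b)\le (a+b)/2\le \|\delta\|_\infty$. Hence $\|J_p\delta\|_1\le \|\delta\|_\infty$ (indeed $\le \tfrac12\operatorname{osc}(\delta)$).

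The main obstacle is this sharpened variance-type bound: getting the constant $1$ rather than $2$ requires the balancing argument above, and one must make sure it is stated with essential sup/inf relative to $\mu$. The rest is routine justification of differentiation under the integral, which holds because every quantity is bounded by $e^{2\|g\|_\infty+2\|\tilde g\|_\infty}$-type constants.
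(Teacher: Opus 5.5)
Your proof is correct and follows essentially the same route as the paper. Both interpolate along $g_s=\tilde g+s(g-\tilde g)$, identify the derivative of the softmax as the demeaned term $J_{p_s}\delta$, and integrate via the fundamental theorem of calculus; the paper sees this term as $\mathrm{Cov}_{\mu_s}(f,\delta)$ after testing against $\|f\|_{L^\infty(\mu)}\le 1$. For the key bound, the paper uses Cauchy--Schwarz, which in your primal form is the one-liner $\int p\,|\delta-c|\,d\mu\le\sqrt{\mathrm{Var}_{p\mu}(\delta)}\le\|\delta\|_{L^\infty(\mu)}$ (cf.\ Lemma \ref{lemma:LJ_p_infty_1_norm}), so your positive/negative-part balancing argument is valid but not needed to avoid the naive factor $2$.
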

\begin{proof}
This is essentially the Lipschitz continuity of the softmax function. See \Cref{proof:lemma:lipschitz_softmax} for the proof. 
\end{proof}

Applying Lemma \ref{lemma:dobrushin-coefficient}, we have
\begin{align*}
    \sup_{h\in L_0^1: \|h\|_1\le 1}  \|P^\star h\|_{1} = \esssup_{x,x'}\frac{1}{2} \|P(x, \cdot)-P(x', \cdot)\|_{1}. 
\end{align*}
By \eqref{eq:density_U_gf} $P(x, y)$ can be written as
$$
P(x,y) = \frac{\exp(U^{g\oplus f}(x, y)) p_f(y)}{\int \mu_{\rf}(dy') \exp(U^{g\oplus f}(x, y')) p_f(y')}, \quad p_f(y) = \frac{\exp(f(y))}{\int \mu_\rf(dy')\exp(f(y'))}
$$
for any $g, f\in L^\infty$, 
where $U^{g\oplus f}=U-g\oplus f$. 
Letting $\mu_f(dy) = \mu_\rf(dy) p_f(y)$ be the exponentially tilted probability measure by $f$, using Lemma \ref{lemma:lipschitz_softmax} $\mu=\mu_f$ and $g(y)=U^{g\oplus f}(x, y)$ and $\tilde{g}(y)=U^{g\oplus f}(x', y)$, for each $x, x'$, 
\begin{align*}
  \|P(x, \cdot)-P(x', \cdot)\|_{1}\le \|U^{g\oplus f}(x, \cdot)-U^{g\oplus f}(x', \cdot)\|_{L^\infty(\mu_f)} = \|U^{g\oplus f}(x, \cdot)-U^{g\oplus f}(x', \cdot)\|_{L^\infty(\mu_\rf)},
\end{align*}
where the second equation follows from the fact that $\mu_f$ and $\mu_\rf$ are absolutely continuous with respect to each other. Thus, by the definition: $\esssup_y |f(y)| = \|f\|_{L^\infty(\mu_\rf)}$, combining all together, 
\begin{align*}
    \sup_{h\in L_0^1: \|h\|_1\le 1}  \|P^\star h\|_{1} &\le \esssup_{x,x'}\frac{1}{2} \esssup_y |U^{g\oplus f}(x, y)-U^{g\oplus f}(x', y)|\le
\|U^{g\oplus f}\|_\infty
\end{align*}
where we used the triangle inequality for the last inequality. 
Since $U^{g\oplus f}=U-g\oplus f$, taking $\inf_{g,f}$ on the RHS, we obtain $\sup_{h\in L_0^1: \|h\|_1\le 1}  \|P^\star h\|_{1} \le \|U\|_\oplus$ and the proof is complete.
Below, we prove the intermediate lemmas we used. 
\subsubsection{Proof of Lemma \ref{lemma:dobrushin-coefficient}}\label{proof:lemma:dobrushin-coefficient}
Now we claim the following standard identity:
\begin{lemma}\label{lemma:essosc}
  For all $g\in L^\infty$, we have 
\begin{align*}
  \sup_{h\in L_0^1: \|h\|_1\le 1} \Bigl|\int h(x) g(x) \mu_\rf(dx)\Bigr| = \frac12 \essosc(g)
\end{align*}
where $\essosc(g) = \esssup_{x} g(x) - \essinf_x g(x) = \esssup_{x,y}|g(x)-g(y)|$
\end{lemma}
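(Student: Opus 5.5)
We prove the two inequalities $\sup \le \tfrac12\essosc(g)$ and $\sup\ge \tfrac12\essosc(g)$ separately. If $\essosc(g)=0$, then $g$ is a.e.\ constant, so $\int hg\,d\mu_\rf=0$ for every $h\in L^1_0$, and both sides vanish. We therefore assume $\essosc(g)>0$ throughout.

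\emph{Upper bound.} Fix $h\in L_0^1$ with $\|h\|_1\le 1$, and write $h=h^+-h^-$. Since $\int h\,d\mu_\rf=0$, we have $\int h^+ = \int h^- = \tfrac12\|h\|_1\le \tfrac12$. Set $c=\tfrac12(\esssup g+\essinf g)$, the midpoint of the essential range of $g$. Because $h$ has mean zero,
\begin{align*}
\Bigl|\int h g\, d\mu_\rf\Bigr| = \Bigl|\int h (g-c)\, d\mu_\rf\Bigr| \le \|g-c\|_\infty\|h\|_1 \le \tfrac12\essosc(g).
\end{align*}
(One could also bound $\int h^+ g - \int h^- g$ by $\tfrac12\esssup g-\tfrac12\essinf g$; the two routes give the same bound.)

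\emph{Lower bound.} Here we need near-extremal test functions. Fix $\epsilon>0$ and put $A=\{g>\esssup g-\epsilon\}$ and $B=\{g<\essinf g+\epsilon\}$. Both have positive $\mu_\rf$-measure by the definition of the essential supremum and infimum. For $\epsilon<\essosc(g)/2$ the sets are disjoint. Define
\begin{align*}
h=\frac{\mathbf 1_A}{2\mu_\rf(A)}-\frac{\mathbf 1_B}{2\mu_\rf(B)}.
\end{align*}
This $h$ lies in $L_0^1$ with $\|h\|_1=1$. Moreover $\int hg\,d\mu_\rf$ equals half the average of $g$ over $A$ minus half the average of $g$ over $B$, which is at least $\tfrac12(\esssup g-\essinf g)-\epsilon$. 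Letting $\epsilon\to0$ gives the matching lower bound.

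The only point needing care is the identity $\esssup g-\essinf g=\esssup_{x,y}|g(x)-g(y)|$ with respect to the product measure.
\begin{itemize}
\item[$\le$:] The set $A\times B$ has positive product measure, and on it $g(x)-g(y)>\essosc(g)-2\epsilon$.
\item[$\ge$:] For a.e.\ $(x,y)$ we have $g(x),g(y)\in[\essinf g,\esssup g]$, since the exceptional set is contained in the union of two null cylinders.
\end{itemize}

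None of these steps is a serious obstacle. The main subtlety is the measure-theoretic one of using sets of positive measure rather than points, and it is handled by the $\epsilon$-level sets above.
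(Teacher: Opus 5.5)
Your proof is correct and follows the paper's argument essentially verbatim: for the upper bound you center $g$ at the midpoint $c=\tfrac12(\esssup g+\essinf g)$, and for the lower bound you test against the same normalized difference of indicators of the $\epsilon$-level sets. Your extra verification of the identity $\esssup g-\essinf g=\esssup_{x,y}|g(x)-g(y)|$, which the paper leaves implicit, is a harmless addition. The same goes for the disjointness remark, which only upgrades $\|h\|_1\le 1$ to equality and is not needed.
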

\begin{proof}
  Fix $g\in L^\infty$ and a constant $c\in \R$. Note
  $\int h(x) g(x) \mu_\rf(dx) = \int h(x) (g(x)-c) \mu_\rf(dx)$ for $h\in L_0^1$. Then, 
  \begin{align*}
    \sup_{h\in L_0^1: \|h\|_1\le 1}\Bigl| \int h(x) g(x) \mu_\rf(dx)\Bigr| =   \sup_{h\in L_0^1: \|h\|_1\le 1}\Bigl| \int h(x) (g(x)-c) \mu_\rf(dx)\Bigr| \le \|g-c\bm{1}\|_\infty
  \end{align*}
  where $\|g-c\bm{1}\|_\infty = \esssup_x |g(x)-c|$. If we take $c=c_g=2^{-1}(\esssup_x g(x) + \essinf_x g(x))$, we have 
  $$
\|g-c\bm{1}\|_\infty = 2^{-1}(\esssup_x g(x)-\essinf_x g(x)) = 2^{-1} \essosc(g)
  $$
  which completes the proof of the upper bound. 

  To show a lower bound, let $M=\esssup_x g(x)$ and $m=\essinf_x g(x)$ so that $\essosc g = M-m$.  
  By the definition of $\esssup$ and $\essinf$, for any $\epsilon>0$, the following two events $\Omega_\epsilon^+$ and $\Omega_\epsilon^-$ have a positive measure under $\mu_\rf$:
  $$
    \Omega_\epsilon^+ = \{x: g(x) > M-\epsilon\}, \quad \Omega_\epsilon^- = \{x: g(x) < m+\epsilon\}. 
  $$
  Now we define the measurable function $h_\epsilon$ as 
  $
  h_\epsilon = \frac{\bm{1}\{\Omega_\epsilon^+\}}{2\mu_\rf(\Omega_\epsilon^+)}  - \frac{\bm{1}\{\Omega_\epsilon^-\}}{2\mu_\rf(\Omega_\epsilon^-)}.
  $
  Notice that this is well-defined since $\mu_\rf(\Omega_\epsilon^{\pm})>0$, and satisfies $h_\epsilon\in L_0^1$, $\|h_\epsilon\|_1 \le 1$. Then, 
  \begin{align*}
      \int h_\epsilon(x) g(x) \mu_\rf(dx) &= \frac{1}{2\mu_\rf(\Omega_\epsilon^+)}\int \bm{1}\{\Omega_\epsilon^+\} g(x) \mu_\rf(dx) - \frac{1}{2\mu_\rf(\Omega_\epsilon^-)}\int \bm{1}\{\Omega_\epsilon^-\} g(x) \mu_\rf(dx)\\
      &\ge 2^{-1} (M-\epsilon) - 2^{-1}(m+\epsilon)\\
      &= 2^{-1} (M-m) - \epsilon.
  \end{align*}
  Taking $\epsilon\to 0$, we get the lower bound. 
\end{proof}

Let us prove Lemma \ref{lemma:dobrushin-coefficient} using Lemma \ref{lemma:essosc}. By the duality of $L^1$-$L^\infty$ norms, 
  \begin{align*}
\|P^\star h\|_{1}=  \sup_{\|f\|_\infty \le 1} \Bigl|\int\mu_\rf(dy) f(y) P^\star h(y) \Bigr|. 
  \end{align*}
 Using Fubini's theorem, for all $f\in L^\infty$, 
  \begin{align*}
\int \mu_\rf(dy) f(y) P^\star h(y)&=    \int \mu_\rf(dy) f(y) \int \mu_\rf(dx) P(x, y)h(x)
   =\int \mu_\rf(dx) h(x) P f(x)
  \end{align*}
  where $P f(x) \equiv \int\mu_\rf(dy) P(x, y) f(y)$. Then, applying Lemma \ref{lemma:essosc} with $g=Pf$, swapping the order of $\sup_{h\in L_0^1: \|h\|_1=1}$ and $\sup_{\|f\|_\infty\le 1}$, 
  \begin{align*}
    \sup_{h\in L_0^1: \|h\|_1\le 1}\|P^\star h\|_{1} = \sup_{\|f\|_\infty\le 1}  \sup_{h\in L_0^1: \|h\|_1\le 1} \Bigl|\int\mu_\rf(dy) f(y) P^\star h(y) \Bigr| = \frac12 \sup_{\|f\|_\infty \le 1}\essosc (Pf)
  \end{align*}
  where 
  $$
    \sup_{\|f\|_\infty \le 1}\essosc (Pf) =  \sup_{\|f\|_\infty \le 1}  \esssup_{x,x'} \Bigl| \int \mu_\rf(dy) f(y) \bigl(P(x, y) - P(x', y)\bigr)\Bigr|
  $$
Swapping the order of $\sup_{\|f\|_\infty\le 1}$ and $\esssup_{x,x'}$, using the duality of $L^1$-$L^\infty$ norms, we get
\begin{align*}
  \sup_{\|f\|_\infty \le 1}\essosc (Pf) =  \esssup_{x,x'} \|P(x,\cdot)-P(x',\cdot)\|_1,
\end{align*}
so the proof is complete. 

\subsubsection{Proof of Lemma \ref{lemma:lipschitz_softmax}}\label{proof:lemma:lipschitz_softmax}
  Let $h(x) = g(x)-\tilde{g}(x)$ and define the interpolator $g_t\in L^\infty$ for $t\in[0,1]$ as $g_t \equiv \tilde{g} + t h$
  so that $g_0 = \tilde{g}$ and $g_1=g$. Now we define the tilted probability measure $\mu_t$ as 
  $$
  \mu_t(dy) \equiv \softmax(g_t)(y) \mu(dy).
  $$
  Note that $\mu_t$ and $\mu$ are absolutely continuous with respect to each other. 
Then, 
    \begin{align*}
  \|\softmax(g)-\softmax(\tilde{g})\|_{1} = \sup_{\|f\|_{L^\infty(\mu)}\le 1} \Bigl|
      \int \mu_\rf(dx) f(x) (\softmax(g_1)(x)-\softmax(g_0)(x))
     \Bigr| = \sup_{\|f\|_{L^\infty(\mu)}\le 1} \Bigl|
    \phi_f(1) - \phi_f(0)
     \Bigr|
  \end{align*}
where 
  $$
  \forall t\in [0,1], \quad 
  \phi_f(t)  \equiv \int f(x)  \mu_t(dx)  \equiv \int \mu(dx) f (x) \frac{\exp(g_t(x))}{\int \mu(dx') \exp(g_t(x'))}. 
  $$
  Here, since $g_t = \tilde{g} + th$ with $\tilde{g}, h\in L^\infty$, $\phi_f$ is $C^1$ on $[0,1]$ and the derivative is given by 
  \begin{align*}
    \dot{\phi}_f(t) 
    &= \int \mu_t(dx) f(x) h(x) - \Bigl(\int \mu_t(dx) f(x)\Bigr)\cdot \Bigl(\int \mu_t(dx) h(x)\Bigr)= \text{Cov}_{X\sim \mu_t}(f(X), h(X))
  \end{align*}
  Using the Cauchy--Schwarz inequality $|\text{Cov}(X, Y)|\le \sqrt{\text{Var}(X) \text{Var}(Y)}\le \|X\|_{\infty} \|Y\|_{\infty}$, noting that $\mu_t$ and $\mu$ are absolutely continuous with respect to each other, 
  we obtain the uniform upper bound:
  $$
    |\dot{\phi}_f(t)|\le \|f\|_{L^\infty(\mu_t)} \cdot \|h\|_{L^\infty(\mu_t)} = \|f\|_{L^\infty(\mu)} \|h\|_{L^\infty(\mu)}. 
  $$
  Therefore, by the fundamental theorem of calculus, 
  $$
  \|\softmax(g)-\softmax(\tilde{g})\|_{1} = \sup_{\|f\|_{L^\infty(\mu)}\le 1} \Bigl|
    \phi_f(1) - \phi_f(0)
     \Bigr| \le \sup_{\|f\|_{L^\infty(\mu)}\le 1}  \int_0^1 |\dot{\phi}_f(t)| dt \le \|h\|_{L^\infty(\mu)}. 
  $$
  This completes the proof.

\section{Proof for \Cref{sec:nlhf_algorithm}}
\subsection{Proof of Theorem \ref{theorem:convergence_nl_kl}}\label{proof:theorem:convergence_nl_kl}
We use the following standard convergence guarantee for proximal mirror descents. 
\begin{lemma}\label{lemma:mirror_descent}
Let $X$ be a convex subset of a Hilbert space $(\mathcal{H}, \langle\rangle)$ and consider the minimization problem
$$
\min_{p\in X} \ell(p) + w(p),
$$
where both $\ell$ and $w$ are differentiable and convex, and furthermore $\ell$ is relatively $L$-smooth with respect to $w$ in the sense that 
\begin{align*}
\forall p,q\in X, \qquad D_\ell(p\|q) \le L D_w(p\|q),
\end{align*}
where $D_h(p\|q) = h(p)-h(q)-\langle \nabla h(q), p-q\rangle$
denotes the Bregman divergence associated with a convex function $h\in\{\ell,w\}$. Now we consider the (proximal) mirror descent iteration
\begin{align*}
  p_{t+1}
  = \argmin_{p\in X}
  \Bigl\{
    \langle \nabla \ell(p_t), p\rangle
    + w(p)
    + \eta^{-1} D_w(p\|p_t)
  \Bigr\}.
\end{align*}
Then for any $\eta \le 1/L$, letting $p_*\in \argmin \ell(p)+w(p)$ be the minimizer, we have 
\begin{align*}
  D_w(p_*\|p_t)
  \le
  (1+\eta)^{-t} D_w(p_*\|p_0).
\end{align*}
\end{lemma}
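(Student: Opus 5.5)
The plan is to run the standard three-point argument for proximal mirror descent and then combine it with relative smoothness and the relative strong convexity of $\ell+w$ with respect to $w$. Here $w$ is convex, so $w$ is $1$-strongly convex relative to itself. Write $F=\ell+w$ and $\phi_t(p)=\langle\nabla\ell(p_t),p\rangle+w(p)+\eta^{-1}D_w(p\|p_t)$. We take the step size $\eta\le 1/L$.

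First step: the first-order optimality of $p_{t+1}$ and the three-point identity for Bregman divergences. The optimality condition for $p_{t+1}$ gives, for every $p\in X$,
\[
\langle \nabla\ell(p_t)+\nabla w(p_{t+1})+\eta^{-1}(\nabla w(p_{t+1})-\nabla w(p_t)),\,p-p_{t+1}\rangle\ge 0 .
\]
We combine this with the three-point identity
\[
\langle\nabla w(p_{t+1})-\nabla w(p_t),\,p-p_{t+1}\rangle=D_w(p\|p_t)-D_w(p\|p_{t+1})-D_w(p_{t+1}\|p_t).
\]
We also use the identity $\langle\nabla w(p_{t+1}),p-p_{t+1}\rangle=w(p)-w(p_{t+1})-D_w(p\|p_{t+1})$. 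Together these yield the one-step inequality
\[
\langle\nabla\ell(p_t),p_{t+1}-p\rangle+w(p_{t+1})-w(p)\le \eta^{-1}D_w(p\|p_t)-(1+\eta^{-1})D_w(p\|p_{t+1})-\eta^{-1}D_w(p_{t+1}\|p_t).
\]

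Second step: bound $\ell$ from both sides. Relative smoothness gives
\[
\ell(p_{t+1})\le \ell(p_t)+\langle\nabla\ell(p_t),p_{t+1}-p_t\rangle+L D_w(p_{t+1}\|p_t).
\]
Convexity of $\ell$ gives
\[
\ell(p)\ge \ell(p_t)+\langle\nabla\ell(p_t),p-p_t\rangle.
\]
Subtracting the second from the first, we get $\ell(p_{t+1})-\ell(p)\le\langle\nabla\ell(p_t),p_{t+1}-p\rangle+LD_w(p_{t+1}\|p_t)$. Adding this to the one-step inequality gives
\[
F(p_{t+1})-F(p)\le \eta^{-1}D_w(p\|p_t)-(1+\eta^{-1})D_w(p\|p_{t+1})+(L-\eta^{-1})D_w(p_{t+1}\|p_t).
\]
Since $\eta\le 1/L$, the last term is nonpositive and can be dropped.

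Third step: set $p=p_*$. Because $p_*$ minimizes $F$, the left side is nonnegative, so
\[
(1+\eta^{-1})D_w(p_*\|p_{t+1})\le\eta^{-1}D_w(p_*\|p_t),
\]
that is, $D_w(p_*\|p_{t+1})\le(1+\eta)^{-1}D_w(p_*\|p_t)$. Iterating this bound $t$ times gives the claim.

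The main obstacle is the bookkeeping of the $w(p)$ term. The extra factor $1$ in $1+\eta^{-1}$, which produces the linear rate, comes precisely from treating $w$ exactly inside the proximal step rather than linearizing it. I expect the identity $\langle\nabla w(p_{t+1}),p-p_{t+1}\rangle=w(p)-w(p_{t+1})-D_w(p\|p_{t+1})$ to be the key point. Two further points need checking. One is the existence of the minimizer $p_{t+1}$ together with its optimality condition; in the intended application these are guaranteed by the explicit exponential-tilt form of the update. The other is that the Bregman divergences are finite along the iterates.
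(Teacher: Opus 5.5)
Your proof is correct and is essentially the paper's argument. The paper packages your optimality condition plus three-point identity as $D_{G_t}(p_*\|p_{t+1})\le G_t(p_*)-G_t(p_{t+1})$ for the surrogate $G_t(p)=\ell(p_t)+\langle\nabla\ell(p_t),p-p_t\rangle+\eta^{-1}D_w(p\|p_t)+w(p)$, using $D_{G_t}=(1+\eta^{-1})D_w$. Your two bounds on $\ell$ (relative smoothness and convexity) are exactly its sandwich inequality $-\eta^{-1}D_w(p\|p_t)\le (\ell+w)(p)-G_t(p)\le-(\eta^{-1}-L)D_w(p\|p_t)$.
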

\begin{proof}
The proof relies on the relative smoothness framework introduced in \cite{haihao2018relatively}. To keep our paper self-contained, we prove this lemma in  \Cref{proof:lemma:mirror_descent}.  
\end{proof} 
Let us show Theorem \ref{theorem:convergence_nl_kl} using Lemma \ref{lemma:mirror_descent}. Let 
$p_\nl = \frac{d \mu_\nl}{d\mu_\rf}$ and $q_\nl = \frac{d \nu_\nl}{d\mu_\rf}$ be the densities of the NLHF solutions. 
Consider the density simplex
$
\Delta^\infty = \{f\in L^\infty: f(\cdot)\ge 0, \ \int \mu_\rf(dy)f(y)=1 \},
$ 
which is a convex subset of the Hilbert space $L^2$ with the usual inner product $\langle f,g\rangle = \int \mu_\rf(dx) g(x) f(x)$, and define the linear operator $h\mapsto Uh$ by $Uh(x) = \int \mu_\rf(dy) U(x, y) h(y)$, and let $U^\star$ be its adjoint operator. Then, $(p_\nl, q_\nl)$ solves the following minimax optimization problem:
\begin{align*}
\argmax_{p\in \Delta^\infty} \min_{q\in \Delta^\infty} \langle q, U p\rangle + w(q) - w(p) \quad \text{where} \quad w(p)=\int \mu_\rf(dy) p(y)\log p(y)
\end{align*}
If we define the convex function $\ell:\Delta^\infty\to\R$ as $ \ell(p) = - (\min_{q\in \Delta^\infty} \langle q, U p\rangle + w(q))$
then $p_\nl$ is the solution to the following convex optimization problem:
$$
p_\nl \in \argmin_{p\in \Delta^\infty} \ell(p) + w(p). 
$$
Now we claim that $\ell$ is relatively $\|U\|_\oplus^2$-smooth with respect to $w$. 
\begin{lemma}\label{lemma:ell_p_smoothness}
Fix $U\in L^\infty(\mu_\rf\otimes\mu_\rf)$, and for each $p\in \Delta^\infty$, let $\ell(p)=- (\min_{q\in \Delta^\infty} \langle q, U p\rangle + w(q))$ and $q_p\in \argmin_{q\in \Delta^\infty} \langle q, U p\rangle + w(q)$. Then, 
  \begin{enumerate}
    \item $\|q_p-q_{p'}\|_1\le \|U\|_\oplus \|p-p'\|_1$ for all $p, p'\in \Delta^\infty$. 
    \item $\ell$ is $\|U\|_\oplus^2$-smooth with respect to $w$, i.e., $D_\ell(p||q)\le \|U\|_\oplus^2 D_w(p||q)$
  \end{enumerate}
  where $\|U\|_\oplus = \inf_{g,f\in L^\infty}\|U-g\oplus f\|_\infty$. 
\end{lemma}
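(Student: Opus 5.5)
The plan is to use the explicit Gibbs form of the best response $q_p$ and to exploit the invariance of $q_p$ under additive modifications of $U$. This lets us replace $\|U\|_\infty$ by $\|U-g\oplus f\|_\infty$ for arbitrary $g,f\in L^\infty$, and then take the infimum over $(g,f)$ at the end.

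\textbf{Preliminary: the best response is a softmax.} Minimizing $\langle q,Up\rangle+w(q)$ over $\Delta^\infty$ gives $q_p=\softmax(-Up)$ with $Up(x)=\int\mu_\rf(dy)U(x,y)p(y)$. Moreover, for every $q\in\Delta^\infty$ we have the Gibbs identity
\begin{align*}
\langle q,Up\rangle+w(q)-\bigl(\langle q_p,Up\rangle+w(q_p)\bigr)=\kl(q\mid q_p),
\end{align*}
where densities are identified with measures. Fix $g,f\in L^\infty$ and set $V=U-g\oplus f$. Since $\int p=1$, we get $Up(x)=Vp(x)+g(x)+\int f p$. The constant cancels in the softmax, so $q_p$ is the softmax of $-Vp$ with respect to the tilted base measure $\mu_g(dx)\propto e^{-g(x)}\mu_\rf(dx)$. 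The difference $p-p'$ has mean zero, so the $g$ term drops out of $U(p-p')$ and the $f$ term becomes a constant. Hence $U(p-p')=V(p-p')+\mathrm{const}$, and applying Lemma~\ref{lemma:essosc} pointwise in $x$ gives
\begin{align*}
\essosc\bigl(U(p-p')\bigr)\le 2\|V(p-p')\|_\infty\le 2\|V\|_\infty\|p-p'\|_1 .
\end{align*}
More sharply, $\|V(p-p')\|_\infty\le \|V\|_\infty\|p-p'\|_1$ up to an additive constant.

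\textbf{Part 1 (Lipschitz best response).} Apply Lemma~\ref{lemma:lipschitz_softmax} with base measure $\mu_g$, which is mutually absolutely continuous with $\mu_\rf$. The $L^1(\mu_g)$ distance between the $\mu_g$-densities equals the $L^1(\mu_\rf)$ distance between the $\mu_\rf$-densities, since both describe the same signed measure. This gives
\begin{align*}
\|q_p-q_{p'}\|_1\le \|V(p-p')\|_\infty\le \|V\|_\infty\|p-p'\|_1 .
\end{align*}
Taking the infimum over $(g,f)$ yields the first claim.

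\textbf{Part 2 (relative smoothness).} The function $\ell$ is a supremum of affine functions of $p$, hence convex. By Danskin's envelope theorem, $\nabla\ell(p)=-U^\star q_p$. Substituting $\ell(p)=-\langle q_p,Up\rangle-w(q_p)$ into the Bregman divergence gives
\begin{align*}
D_\ell(p\|p')=\bigl[\langle q_{p'},Up\rangle+w(q_{p'})\bigr]-\bigl[\langle q_p,Up\rangle+w(q_p)\bigr]=\kl(q_{p'}\mid q_p),
\end{align*}
where the last equality is the Gibbs identity. Also $D_w(p\|p')=\kl(p\mid p')$. Writing $a=-Up$ and $a'=-Up'$, we have
\begin{align*}
\kl(q_{p'}\mid q_p)=\log\E_{q_{p'}}e^{a-a'}-\E_{q_{p'}}(a-a').
\end{align*}
By Hoeffding's lemma this is at most $\essosc(a-a')^2/8$. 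Since $a-a'=-U(p-p')$, the preliminary bound gives
\begin{align*}
\kl(q_{p'}\mid q_p)\le \tfrac12\|V\|_\infty^2\|p-p'\|_1^2 .
\end{align*}
Pinsker's inequality $\|p-p'\|_1^2\le 2\kl(p\mid p')$ then yields $D_\ell(p\|p')\le\|V\|_\infty^2D_w(p\|p')$. Taking the infimum over $(g,f)$ gives the second claim.

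\textbf{Main obstacle.} The delicate step is justifying the envelope formula for $\nabla\ell$ in this infinite-dimensional setting. I would sidestep it by computing $D_\ell$ directly using the candidate gradient $-U^\star q_{p'}$, which is all that Lemma~\ref{lemma:mirror_descent} needs. Tracking the constants through Hoeffding's lemma and Pinsker's inequality so that the final constant is exactly $\|U\|_\oplus^2$ is routine.
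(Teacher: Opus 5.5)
Your proof is correct. Part~1 is essentially the paper's argument: the Gibbs form $q_p\propto e^{-Vp}e^{-g}$ with $V=U-g\oplus f$, then Lemma~\ref{lemma:lipschitz_softmax} under the $e^{-g}$-tilted base measure, H\"older, and the infimum over $(g,f)$. Part~2, however, takes a genuinely different route.

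The paper works at the level of gradients. From $\nabla\ell(p)-\nabla\ell(p')=-V^\star(q_p-q_{p'})$ (modulo constants) and Part~1, it gets $\|\nabla\ell(p)-\nabla\ell(p')\|_\infty\le\|U\|_\oplus^2\|p-p'\|_1$. It then integrates along the segment to obtain $D_\ell(p\|p')\le\frac12\|U\|_\oplus^2\|p-p'\|_1^2$ and finishes with Pinsker. You instead compute the Bregman divergence exactly as $D_\ell(p\|p')=\kl(q_{p'}\mid q_p)$ and bound this log-partition gap with Hoeffding's lemma. This reaches the same intermediate bound $\frac12\|V\|_\infty^2\|p-p'\|_1^2$ and the same Pinsker step.

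Your route buys several things.
\begin{itemize}
\item Part~2 no longer depends on Part~1 or on integrating the gradient along segments.
\item The identity is self-certifying. Since $0\le \ell(p)-\ell(p')-\langle -U^\star q_{p'},p-p'\rangle\le\frac12\|V\|_\infty^2\|p-p'\|_1^2$, it shows directly that $-U^\star q_{p'}$ is the Fr\'echet gradient of $\ell$ on the simplex. The infinite-dimensional envelope theorem that the paper invokes without proof thus becomes a consequence rather than an input.
\item It sharpens for free. Applying Lemma~\ref{lemma:essosc} in $y$ to $U(p-p')(x)-U(p-p')(x')$ gives $\essosc(U(p-p'))\le\frac12\square(U)\|p-p'\|_1$. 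Hence the constant becomes $\square(U)^2/16\le\|U\|_\oplus^2$, with no infimum over $(g,f)$.
\end{itemize}

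The paper's route, in exchange, produces the gradient-Lipschitz estimate as a by-product. It also rests on the Lipschitz best-response bound of Part~1, which is exactly what Theorem~\ref{theorem:convergence_nl_tv} reuses.
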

\begin{proof}
 See \Cref{proof:lemma:ell_p_smoothness} for the proof. 
\end{proof}
Thus, combined with Lemma \ref{lemma:ell_p_smoothness}-(2), 
 we may apply Lemma \ref{lemma:mirror_descent} with
$$
L=\|U\|_\oplus^2,
\qquad
w(p)=\int \mu_\rf(dy) p(y)\log p(y), 
\qquad
\ell(p) = - \Bigl(\min_{q\in \Delta} \langle q, U p\rangle + w(q)\Bigr)
$$
and conclude that for any $\eta \le 1/\|U\|_\oplus^2$, the iteration $p_t$ defined by
\begin{align*}
  p_{t+1}
  &=  \argmin_p
  \Bigl\{
    \langle \nabla \ell(p_t), p\rangle
    + w(p)
    +
\eta^{-1}D_w(p\|p_t)
  \Bigr\}
\end{align*}
satisfies $D_w(p_*\|p_t)
  \le
 (1+\eta)^{-t}\cdot D_w(p_*\|p_0)$. Here, by the envelope theorem, $\nabla \ell(p_t)=- U^\star q_t$ where 
 $q_t=
  \argmin_q \langle q, U p_t\rangle + w(q)$. Thus, the total algorithm can be written as 
  $$
q_t=
  \argmin_q \Bigl\{ \langle q, U p_t\rangle + w(q) \Bigr\}, \quad p_{t+1} =
  \argmin_p
  \Bigl\{
    - \langle U^\star q_t, p\rangle
    + w(p)
    +
   \eta^{-1}D_w(p\|p_t)
  \Bigr\}
  $$
Finally, by the change of variable $(p, q)\mapsto (\mu, \nu)$ where
$\mu(dy)=p(y)d\mu_\rf(dy)$ and $\nu(dx)=q(x)\nu_\rf(dx)$, 
noting that $w(p) = \kl(\mu|\mu_\rf)$ and $D_w(p|q)=\kl(\mu|\nu)$, we obtain Theorem \ref{theorem:convergence_nl_kl}. 

\subsubsection{Proof of Lemma \ref{lemma:mirror_descent}}\label{proof:lemma:mirror_descent}
Let $G(p)=\ell(p)+w(p)$ and define the surrogate objective
$$
G_t(p)
=
\ell(p_t)
+
\langle \nabla \ell(p_t), p-p_t\rangle
+
\eta^{-1} D_w(p\|p_t)
+
w(p).
$$
Then $p_{t+1}\in \argmin_{p\in X} G_t(p)$.
By the definition of Bregman divergence,
\begin{align*}
  G(p)
  &=\ell(p)+w(p) \\
  &=\ell(p_t)+
  \langle \nabla \ell(p_t), p-p_t\rangle+
  D_\ell(p\|p_t)
  +
  w(p) && \text{by }D_\ell(p\|p_t) = \ell(p)-\ell(p_t)-
   \langle \nabla \ell(p_t), p-p_t\rangle
\\
  &=
  G_t(p) + D_\ell(p\|p_t) - \eta^{-1} D_w(p\|p_t).
\end{align*}
Using the assumption $D_\ell(p\|p_t)\le L D_w(p\|p_t)$ and the non-negativity of the Bregman divergence $D_\ell(p\|p_t)\ge 0$, the approximation error $G(p)-G_t(p)$ can be controlled as 
\begin{align}
\label{eq:sandwich}
-\eta^{-1} D_w(p\|p_t)
\le
G(p)-G_t(p)
\le
-(\eta^{-1}-L)D_w(p\|p_t).
\end{align}
Now, since $p_{t+1}\in \argmin G_t(p)$ and $G_t$ is convex and differentiable, we have
\begin{align*}
  \langle \nabla G_t(p_{t+1}), q-p_{t+1}\rangle \ge 0
  \qquad \forall q\in X.
\end{align*}
Equivalently, by the definition of Bregman divergence, 
\begin{align*}
  D_{G_t}(q\|p_{t+1})
  \le
  G_t(q)-G_t(p_{t+1}),
  \qquad \forall q\in X.
\end{align*}
Applying this with $q=p_*$, we get
\begin{align*}
  (\eta^{-1}+1) D_w(p_*\|p_{t+1})
  &=
  D_{G_t}(p_*\|p_{t+1})
  \qquad\text{since $G_t(p)=(\eta^{-1}+1)w(p)+\text{linear terms}$} \\
  &\le
  G_t(p_*)-G_t(p_{t+1}) \\
  &\le
  \Bigl(F(p_*)+\eta^{-1}D_w(p_*\|p_t)\Bigr)
  -
  \Bigl(F(p_{t+1})+(\eta^{-1}-L)D_w(p_{t+1}\|p_t)\Bigr)
  \qquad\text{by \eqref{eq:sandwich}} \\
  &\le
  \eta^{-1} D_w(p_*\|p_t)
  -
  (\eta^{-1}-L)D_w(p_{t+1}\|p_t)
  \qquad\text{since $F(p_*)\le F(p_{t+1})$} \\
  &\le
  \eta^{-1} D_w(p_*\|p_t),
\end{align*}
where the last step uses $\eta^{-1}\ge L$ and $D_w(\cdot|\cdot)\ge 0$. Dividing both sides by $\eta^{-1}+1$ 
and iterating completes the proof.

\subsubsection{Proof of Lemma \ref{lemma:ell_p_smoothness}}\label{proof:lemma:ell_p_smoothness}
Fix $g, f\in L^\infty$ and let $U^{g\oplus f}=U-g\oplus f$. Then, we can rewrite $\ell(p)$ as 
$$
\ell(p)= -\Bigl(\min_{q\in \Delta^\infty}
\langle q, U^{g\oplus f}p\rangle + \langle q, g\rangle + \langle p,f\rangle + w(q)
\Bigr).
$$
By the envelope theorem, $\ell$ is differentiable with its derivative given by 
\begin{align}\label{eq:envelope}
\nabla \ell(p) = -(U^{g\oplus f})^\star q_p + f, 
\end{align}
Here, $q_p$ is the solution of $\min_q \langle q, U^{g\oplus f}p\rangle + \langle q, g\rangle + \langle p,f\rangle + w(q)$, 
which can be written explicitly as 
\begin{align}\label{eq:q_p}
  q_p(x) = \frac{\exp(-U^{g\oplus f} p (x)) e^{-g(x)}}{\int \mu_\rf(dx') \exp(-U^{g\oplus f} p (x')) e^{-g(x')}}.
\end{align}
Then, for any $p, p'\in \Delta^\infty$, using $\|Uf\|_\infty \le \|U\|_\infty \|f\|_1$ by the Cauchy--Schwarz inequality and Lemma \ref{lemma:lipschitz_softmax} with $\mu_{-g}(dx) \propto e^{-g(x)}\mu_\rf(dx)$, noting that $\mu_{-g}$ and $\mu_\rf$ are absolutely continuous with respect to each other, 
\begin{align*}
  \|q_p-q_{p'}\|_1 &\le \|-U^{g\oplus f} p + U^{g\oplus f}p'\|_{L^\infty(\mu_{-g})}  &&\text{by \eqref{eq:q_p} and Lemma \ref{lemma:lipschitz_softmax} with $\mu_{-g}(dx) \propto e^{-g(x)}\mu_\rf(dx)$}\\
                  &=\|-U^{g\oplus f} p + U^{g\oplus f}p'\|_{\infty} && \|\cdot\|_{L^\infty(\mu_{-g})}=\|\cdot\|_{L^\infty(\mu_\rf)}\\
                  &\le \|U^{g\oplus f}\|_\infty \|p-p'\|_1 && \text{by Hölder's inequality}.
\end{align*}
Since $g,f\in L^\infty$ are arbitrary, taking $\inf_{g,f\in L^\infty}$ on the RHS, we complete the proof of $\|q_p-q_{p'}\|_1\le \|U\|_\oplus \|p-p'\|_1$. 

Now, combined with the derivative form \eqref{eq:envelope},
\begin{align*}
  \|\nabla \ell(p)-\nabla\ell(p')\|_\infty &= \|-(U^{g\oplus f})^\star q_p + (U^{g\oplus f})^\star q_{p'}\|_\infty && \text{by \eqref{eq:envelope}}\\
  &\le \|(U^{g\oplus f})^\star\|_\infty \|q_p-q_{p'}\|_1 && \text{by Hölder's inequality}\\
  &\le \|U^{g\oplus f}\|_\infty\|U\|_\oplus \|p-p'\|_1
\end{align*}
Again, since $g,f\in L^\infty$ are arbitrary, taking $\inf_{g,f\in L^\infty}$ on the RHS, we get  
\begin{align}\label{eq:derivative_ell_lipschitz}
  \forall p, p' \in \Delta^\infty, \quad
    \|\nabla \ell(p)-\nabla\ell(p')\|_\infty\le \|U\|_\oplus^2 \|p-p'\|_1.
\end{align}
Therefore, by the fundamental theorem of calculus, the Bregman divergence $D_\ell(p\|q)=
  \ell(p)-\ell(q)-\langle \nabla \ell(q), p-q\rangle$ is bounded from above as 
\begin{align*}
  D_\ell(p\|q)
  &=
  \int_0^1
  \langle \nabla \ell(tp+(1-t)q)-\nabla \ell(q), p-q\rangle
  dt \\
  &\le
  \int_0^1  \|U\|_\oplus^2 \|t(p-q)\|_1 \|p-q\|_1 dt  && \text{by H\"{o}lder's inequality and \eqref{eq:derivative_ell_lipschitz}}\\
  &=
  \frac12 \|U\|_\oplus^2 \|p-q\|_1^2.
\end{align*}
Finally, by Pinsker's inequality $\tv\le \sqrt{\kl/2}$, which is translated to $2^{-1}\|p-q\|_1\le \sqrt{D_w(p||q)/2}$ in our setup, the last term is bounded from above by $\|U\|_\oplus^2 D_w(p||q)$. Thus, we obtain $D_\ell(p\|q)\le \|U\|_\oplus^2 D_w(p||q)$ and complete the proof.

\subsection{Proof of Theorem \ref{theorem:convergence_nl_tv}}
Let $p_t=\frac{d\mu_\nl^t}{d\mu_\rf}$, $q_t=\frac{d\nu_\nl^t}{d\mu_\rf}$, $p_\nl = \frac{d\mu_\nl}{d\mu_\rf}$, and $q_\nl = \frac{d\nu_\nl}{d\mu_\rf}$ so that 
\begin{align*}
  q_t &= \argmin_{q\in \Delta^\infty} \langle q, U p_t\rangle + w(q), &&q_{\nl}= \argmin_{q\in \Delta^\infty} \langle q, U p_\nl \rangle + w(q),\\
  p_{t+1} &= \argmax_{p\in \Delta^\infty} \langle q_t,  U p\rangle - w(p), &&  p_\nl = \argmax_{p\in \Delta^\infty} \langle q_\nl,  U p\rangle - w(p)
\end{align*}
By Lemma \ref{lemma:ell_p_smoothness}-(1) and the invariance $\|U^\star\|_\oplus=\|U\|_\oplus$, we have $\|q_t - q_\nl\|_1 \le \|U\|_\oplus \|p_t-p_\nl\|_1$ and $\|p_{t+1} - p_\nl\|_1 \le \|U\|_\oplus \|q_t-q_\nl\|_1$. Hence, the proof is complete. 

\section{Fr\'echet Differentiability and Stability}
This section provides the technical ingredients needed for the analysis of the alignment dynamics in \Cref{sec:alignment_dynamics}, in particular Proposition \ref{proposition:derivative_additive_antisymmetric} and Theorem \ref{theorem:asymptotics_iterations}. Our main goal is to establish the uniform Fr\'echet differentiability result stated in Theorem \ref{theorem:uniform_frechet_differentiability}, together with an explicit derivative formula. To this end, we organize this section as follows:
\begin{itemize}
\item \Cref{app:frechet_differentiability}: Fr\'echet differentiability and derivative formulas for MCHF and NLHF; see Theorems \ref{theorem:frechet_derivative_mc} and \ref{theorem:frechet_derivative_nl}, respectively.
\item \Cref{sec:perturbation}: The stability estimate introduced in Theorem \ref{theorem:stability}.
\item \Cref{app:uniform_frechet_differentiability}: Uniform Fr\'echet differentiability; namely, the proof of Theorem \ref{theorem:uniform_frechet_differentiability} for MCHF and NLHF.
\end{itemize}
Readers primarily interested in the consequences for the alignment dynamics may read the statements of Theorems \ref{theorem:frechet_derivative_mc} and \ref{theorem:frechet_derivative_nl}, skip the other theorems and technical proofs, and proceed directly to \Cref{sec:asymptotics_general_U}.

\subsection{Fr\'echet differentiability}\label{app:frechet_differentiability}

\subsubsection{Fr\'echet differentiability of MCHF}
For each $U\in L^\infty(\mu_\rf\otimes\mu_\rf)$, define $P_U\in L^\infty(\mu_\rf\otimes \mu_\rf)$ as 
$$
P_U(x,y)
=
\frac{\exp(U(x,y))}
{\int \mu_\rf(dy')\exp(U(x,y'))}
$$
Let $P_U^\star:L^1(\mu_\rf)\to L^1(\mu_\rf)$ be the adjoint operator of $P_U$, that is, 
$
P_U^\star f(y)
=
\int \mu_\rf(dx)f(x)P_U(x,y).
$
Let $\mu_\mc(U)$ be the unique stationary distribution of the Markov kernel $\MP_U(x,dy)=P_U(x,y)\mu_\rf(dy)$ (the existence and uniqueness follow from Corollary \ref{corollary:stationary_dist})
and let
$p_\mc(U)=\frac{d\mu_\mc(U)}{d\mu_\rf}$
be its density. Then, for each $U\in L^\infty$, $p=p_\mc(U)$ is the unique solution to the system:
$$
p\in L^1, \qquad 
P_U^\star p = p, 
\quad
\int \mu_\rf(dy)p(y)=1.
$$
Now we claim that $U\mapsto p_\mc(U)$ is Fr\'echet differentiable.
\begin{theorem}[Fr\'echet differentiability of MCHF]\label{theorem:frechet_derivative_mc}
The map
$U\mapsto p_\mc(U)$
is continuously Fr\'echet differentiable as a map from $L^\infty(\mu_\rf\otimes \mu_\rf)$ to $L^1$. 
  More precisely, for
each $U\in L^\infty(\mu_\rf\otimes\mu_\rf)$, for any sequence of $E\in L^\infty(\mu_\rf\otimes \mu_\rf)$ such that $\|E\|_\infty\to 0$, 
$$
\lim_{\|E\|_\infty\to 0}\frac{\|p_\mc(U+E)-p_\mc(U)-Dp_\mc(U)[E]\|_1}{\|E\|_\infty} = 0,
$$
where the Fr\'echet derivative
$Dp_\mc(U)[E]\in L_0^1$ is given by
\begin{align*}
  &Dp_\mc(U)[E] \\
  &=
\bigl((I-P_U^\star)\mid_{L_0^1}\bigr)^{-1} 
\Bigl(
y\mapsto
\int \mu_\rf(dx)
p_\mc(U)(x)P_U(x,y)
\bigl[
E(x,y)
-
\int \mu_\rf(dy')P_U(x,y')E(x,y')
\bigr]
\Bigr),
\end{align*}
where
$L_0^1
\equiv
\{
f\in L^1(\mu_\rf):
\int \mu_\rf(dy)f(y)=0
\}$.
\end{theorem}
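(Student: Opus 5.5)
The plan is to treat $p_\mc(U)$ as the solution of a linear fixed-point equation whose operator depends smoothly on $U$. Then $p_\mc(U+E)-p_\mc(U)$ is obtained from the resolvent identity, and Theorem \ref{theorem:contraction} supplies the needed invertibility.

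\textbf{Step 1 (resolvent).} By Lemma \ref{lemma:minorization}, $P_U^\star$ maps $L_0^1$ into itself (it preserves integrals) with operator norm at most $c(\|U\|_\oplus)<1$. Hence $(I-P_U^\star)|_{L_0^1}$ is invertible by a Neumann series, with
$$\bigl\|\bigl((I-P_U^\star)|_{L_0^1}\bigr)^{-1}\bigr\|_{L_0^1\to L_0^1}\le (1-c(\|U\|_\oplus))^{-1}.$$

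\textbf{Step 2 (exact identity).} Write $p=p_\mc(U)$, $p_E=p_\mc(U+E)$ and $\delta=p_E-p\in L_0^1$. Using $P_{U+E}^\star p_E=p_E$ and $P_U^\star p=p$,
$$(I-P_U^\star)\delta=(P_{U+E}^\star-P_U^\star)p_E=(P_{U+E}^\star-P_U^\star)p+(P_{U+E}^\star-P_U^\star)\delta.$$
The right-hand side lies in $L_0^1$, because both kernels are Markov. Therefore
$$\delta=\bigl((I-P_U^\star)|_{L_0^1}\bigr)^{-1}\bigl[(P_{U+E}^\star-P_U^\star)p+(P_{U+E}^\star-P_U^\star)\delta\bigr].$$

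\textbf{Step 3 (linearizing the kernel).} For each fixed $x$, $P_{U+E}(x,\cdot)=\softmax_{P_U(x,\cdot)\mu_\rf}(E(x,\cdot))$ is the softmax of $E(x,\cdot)$ relative to the tilted measure $P_U(x,y)\mu_\rf(dy)$. As in the proof of Lemma \ref{lemma:lipschitz_softmax}, interpolate $U+tE$ and differentiate twice in $t$. The first derivative of $P_{U+tE}(x,\cdot)$ at $t=0$ is $P_U(x,\cdot)\bigl[E(x,\cdot)-\int P_U(x,y')E(x,y')\mu_\rf(dy')\bigr]$. The second derivative is a centered third-moment-type expression, whose $L^1(\mu_\rf)$ norm is bounded by $C\|E\|_\infty^2$ uniformly in $x$ (for instance $C=4$). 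Integrating against $p(x)\mu_\rf(dx)$ and using $\|p\|_1=1$ gives
$$\|(P_{U+E}^\star-P_U^\star)p-L_U[E]\|_1\le C\|E\|_\infty^2,$$
where $L_U[E]$ is the function inside the inverse in the claimed formula. In addition, Lemma \ref{lemma:lipschitz_softmax} applied row by row gives $\|(P_{U+E}^\star-P_U^\star)h\|_1\le\|E\|_\infty\|h\|_1$ for every $h\in L^1$.

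\textbf{Step 4 (conclusion).} From Step 2 and the two bounds in Step 3,
$$\|\delta\|_1\le (1-c)^{-1}\bigl(\|E\|_\infty\|p\|_1+\|E\|_\infty\|\delta\|_1\bigr).$$
For $\|E\|_\infty<(1-c)/2$ this rearranges to $\|\delta\|_1\le 2(1-c)^{-1}\|E\|_\infty$. Consequently
$$\bigl\|\delta-\bigl((I-P_U^\star)|_{L_0^1}\bigr)^{-1}L_U[E]\bigr\|_1\le (1-c)^{-1}\bigl(C\|E\|_\infty^2+\|E\|_\infty\|\delta\|_1\bigr)=O(\|E\|_\infty^2),$$
which is $o(\|E\|_\infty)$, as required. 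Linearity and boundedness of $E\mapsto Dp_\mc(U)[E]$ are immediate from Steps 1 and 3.

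\textbf{Continuity of the derivative.} Continuity of $U\mapsto Dp_\mc(U)$ in operator norm follows from three facts:
\begin{itemize}
\item $U\mapsto P_U^\star$ is Lipschitz in operator norm, by the Step 3 bound;
\item $U\mapsto p_\mc(U)$ is continuous, by Step 4;
\item inversion is continuous, with $c(\|U\|_\oplus)$ varying continuously in $U$ since $\|\cdot\|_\oplus$ is a seminorm dominated by $\|\cdot\|_\infty$.
\end{itemize}

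\textbf{Main obstacle.} The step I expect to be delicate is the uniform second-order remainder bound in Step 3. It must hold in $L^1$ uniformly over $x$ and must use only the $L^\infty$ norm of $E$, which requires care with the normalizing constant. Differentiating the tilted family twice and bounding the centered moments by $\|E\|_\infty^2$ should handle it.
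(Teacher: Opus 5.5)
Your proof is correct, but it takes a different route from the paper. The paper defines $F(f,U)=(I-P_U^\star)(f+\bm 1)$ on $L_0^1\times L^\infty(\mu_\rf\otimes\mu_\rf)$ and notes that $U\mapsto P_U$ is $C^1$ from $L^\infty$ to $L^\infty$. It then inverts $D_fF=(I-P_U^\star)|_{L_0^1}$ by a Neumann series via Lemma~\ref{lemma:minorization}, applies the Banach-space implicit function theorem, and reads off the formula by differentiating $F(f(U),U)=0$.

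You instead use the exact identity $(I-P_U^\star)\delta=(P_{U+E}^\star-P_U^\star)p_\mc(U+E)$ together with a second-order Taylor bound for the row-wise softmax. This is elementary and quantitative. The remainder is at most $(1-c)^{-1}\bigl(C+(1-c)^{-1}\bigr)\|E\|_\infty^2$ with $c=c(\|U\|_\oplus)$. You can also drop the smallness condition in Step 4, since $\|(P_{U+E}^\star-P_U^\star)p_E\|_1\le\|E\|_\infty\|p_E\|_1=\|E\|_\infty$ already gives $\|\delta\|_1\le(1-c)^{-1}\|E\|_\infty$.

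Because your constant depends on $U$ only through $c(\|U\|_\oplus)$, your argument gives the uniform statement of Corollary~\ref{corollary:uniform_frechet_differentiability_mc} directly. It bypasses Lemma~\ref{lemma:derivative_lipschitz_continuous_mc} and the fundamental-theorem-of-calculus step, and your bound on $\|\delta\|_1$ recovers Corollary~\ref{corollary:stability_mc}.

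The paper's route has two advantages. Continuity of the derivative comes for free from the implicit function theorem. The same template also carries over to NLHF (Theorem~\ref{theorem:frechet_derivative_nl}), where the fixed-point map is not a contraction in general and no resolvent bound like yours is available.

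Your write-up is thin on the continuity of $U\mapsto Dp_\mc(U)$. Write $L_U[E]=\int\mu_\rf(dx)\,p_\mc(U)(x)\,J_{P_U(x,\cdot)}E(x,\cdot)$. Besides your three listed facts, you need $U\mapsto L_U$ to be continuous in the $L^\infty\to L^1$ operator norm. That requires controlling $J_{P_U(x,\cdot)}-J_{P_V(x,\cdot)}$, for example via $\|J_p-J_q\|_{\infty\to 1}\le 3\|p-q\|_1$ (Lemma~\ref{lemma:J_p_lip}) combined with Lemma~\ref{lemma:lipschitz_softmax}. This is routine, but it should be stated.

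A minor citation point: Lemma~\ref{lemma:minorization} alone gives the rate $1-e^{-2\|U\|_\oplus}$. The minimum $c(\|U\|_\oplus)$ comes from Theorem~\ref{theorem:contraction}, though either suffices for your argument.
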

\begin{proof}
Denote $p(U)=p_\mc(U)$, and define
$f(U)=p(U)-\bm 1$. 
Then $f(U)\in L_0^1$ and $f=f(U)$ is the unique solution to the system
$$
f\in L_0^1, \quad F(f, U) = 0 
$$
where $F:L_0^1\times L^\infty(\mu_\rf\otimes\mu_\rf)\to L_0^1$ is defined by
$$
F(f,U)
=
(I-P_U^\star)(f+\bm 1). 
$$
Here, the image of $F$ is included in $L_0^1$ since $\int \mu_\rf(dy)P_U(x,y)=1$ for every $x$.


We first show that $F$ is continuously Fr\'echet differentiable. The key
point is that the softmax map
$U\mapsto P_U$
is continuously Fr\'echet differentiable from
$L^\infty(\mu_\rf\otimes\mu_\rf)$ to
$L^\infty(\mu_\rf\otimes\mu_\rf)$. Its derivative at $U$ in the direction
$E\in L^\infty(\mu_\rf\otimes\mu_\rf)$ is
\begin{align}\label{eq:derivative_P_U}
  DP_U[E](x,y)
=
P_U(x,y)
\Bigl[
E(x,y)
-
\int \mu_\rf(dy')P_U(x,y')E(x,y')
\Bigr].
\end{align}
Indeed, this follows by differentiating the numerator and denominator in
the normalized exponential map. 
Consequently, $F$ is continuously Fr\'echet differentiable with
derivatives given by 
\begin{align}\label{eq:D_f_D_U}
  \begin{split}
      \forall h \in L_0^1, \quad D_fF(f,U)[h]
&=
(I-P_U^\star)h\\
\forall E \in L^\infty(\mu_\rf\otimes \mu_\rf),\quad D_U F(f, U)[E]
&=-(DP_U[E])^\star(f+\bm 1).
  \end{split}
\end{align}

Next we show that $D_fF(f(U),U)$ has a bounded inverse on $L_0^1$. By
Lemma \ref{lemma:minorization}, 
$$
\forall h\in L_0^1, \quad 
\|P_U^\star h\|_1
\le
(1-e^{-2\|U\|_\oplus})\cdot \|h\|_1
$$
Since $(1-e^{-2\|U\|_\oplus})\in [0,1)$, $(I-P_U^\star)\mid_{L_0^1}$
has a bounded inverse given by the Neumann series
$((I-P_U^\star)\mid_{L_0^1})^{-1}
=\sum_{t=0}^\infty (P_U^\star)^t$, with its operator norm bounded as 
\begin{align}\label{eq:I_P_inverse_operator_norm}
  \sup_{h\in L_0^1: \|h\|_1\le 1}
\Bigl\|
\bigl((I-P_U^\star)\mid_{L_0^1}\bigr)^{-1} h
\Bigr\|_1\le 
\sum_{t=0}^\infty (1-e^{-2\|U\|_\oplus})^t = e^{2\|U\|_\oplus} <+\infty.
\end{align}
Therefore, by the implicit function theorem for the Banach space (cf. \cite[Theorem 3.4.10]{krantz2013implicit}), there
exists a neighborhood of $U$ in
$L^\infty(\mu_\rf\otimes\mu_\rf)$ such that $U\mapsto f(U)$ is continuously Fr\'echet
differentiable on the neighborhood. Since $f(U)=p(U)-1$, $U\mapsto p(U)$ is also continuously Fr\'echet differentiable at $U$.

It remains to compute the derivative. Differentiating the identity
$F(f(U),U)=0$
in the direction $E$ gives
\begin{align*}
  Df(U)[E]
&=
-
\Bigl(D_fF(f(U),U)\mid_{L_0^1}\Bigr)^{-1}
D_UF(f(U),U)[E].
\end{align*}
Substituting \eqref{eq:D_f_D_U} and \eqref{eq:derivative_P_U} to the above equation, noting $f(U)+\bm 1=p(U)$, 
we have 
\begin{align*}
&Df(U)[E]\\
&=
\Bigl((I-P_U^\star)\mid_{L_0^1}\Bigr)^{-1}
\Bigl(
y\mapsto
\int \mu_\rf(dx)
p(U)(x)P_U(x,y)
\Bigl[
E(x,y)
-
\int \mu_\rf(dy')P_U(x,y')E(x,y')
\Bigr]
\Bigr).
\end{align*}
Finally, since $p(U)=f(U)+\bm 1$, we have
$Dp(U)[E]=Df(U)[E]$. 
This proves the claimed formula for the Fr\'echet derivative of $p_\mc(U)$. 
\end{proof}

\subsubsection{Fr\'{e}chet differentiability of NLHF}
Let $\Delta^\infty \subset L^\infty$ be the density simplex defined as 
$$
\Delta^\infty = \{p\in L^\infty: p(\cdot)\ge 0, \int \mu_\rf(dz) p(z)=1\}. 
$$ 
Notice that for any $p\in \Delta^\infty$, it induces a tilted probability measure $\mu(dy)=p(y) \mu_{\rf}(dy)$. We define the operator $\softmax$ taking values in $\Delta^\infty$ as
$$
\softmax: L^\infty \to \Delta^\infty, \quad f\mapsto \softmax(f)(x) = \frac{\exp(f(x))}{\int \mu_{\rf}(dx') \exp(f(x'))} 
$$
For each $p \in \Delta^\infty$, define the linear operator $J_{p}: L^2 \to L^2_0$ as $J_p=\text{Diag}(p)-pp^\star$ or more precisely
  \begin{align*}
      J_{p}: L^2 \to L^2_0, \quad &&f \mapsto J_p f(z)  = p(z) f(z) - p(z)\int \mu_{\rf}(dz') p(z') f(z')
  \end{align*}
where 
$L_0^2=\{h\in L^2: \int \mu_\rf(dz)h(z)=0\}\subset L^2$

\begin{lemma}
  For any $p\in \Delta^\infty$,  $J_{p}$ is bounded, self-adjoint, and positive semidefinite operator on $L^2$ such that 
  $$
  \|J_p\|_{2\to 2} = \lim_{\|h\|_2\le 2} \|J_p h\|_2 \le \|p\|_\infty
  $$
\end{lemma}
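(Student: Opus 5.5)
Throughout, fix $p\in\Delta^\infty$ and write $\langle f,g\rangle=\int\mu_\rf(dz)f(z)g(z)$. The plan is to treat the four properties in turn, using the decomposition $J_p=\mathrm{Diag}(p)-pp^\star$, where $pp^\star f = p\,\langle p,f\rangle$.

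\textbf{Boundedness and range.} Since $p\in L^\infty$, the multiplication operator $\mathrm{Diag}(p)$ is bounded on $L^2$ with norm at most $\|p\|_\infty$. Since $\|p\|_2\le\|p\|_\infty$, the rank-one operator $pp^\star$ is bounded with norm $\|p\|_2^2$. The range of $J_p$ lies in $L^2_0$ because
$$
\int\mu_\rf(dz)\,J_pf(z)=\langle p,f\rangle-\langle p,\bm 1\rangle\langle p,f\rangle=0,
$$
using $\int\mu_\rf(dz)\,p(z)=1$.

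\textbf{Self-adjointness and positivity.} For $f,g\in L^2$ we have
$$
\langle g,J_pf\rangle=\int\mu_\rf(dz)\,p(z)f(z)g(z)-\langle p,f\rangle\langle p,g\rangle,
$$
which is symmetric in $(f,g)$. Hence $J_p$ is self-adjoint. Taking $g=f$ gives
$$
\langle f,J_pf\rangle=\mathbb E_{\mu}[f^2]-(\mathbb E_{\mu}f)^2=\mathrm{Var}_{\mu}(f)\ge0,
$$
where $\mu(dz)=p(z)\mu_\rf(dz)$. Hence $J_p$ is positive semidefinite.

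\textbf{Norm bound.} This is the only step needing care, since the crude triangle inequality gives only $\|p\|_\infty+\|p\|_2^2$. I would use self-adjointness and positivity: for a bounded, self-adjoint, positive semidefinite operator, $\|J_p\|_{2\to2}=\sup_{\|f\|_2\le1}\langle f,J_pf\rangle$. Then
$$
\langle f,J_pf\rangle=\mathrm{Var}_\mu(f)\le\mathbb E_\mu[f^2]=\int\mu_\rf(dz)\,p(z)f(z)^2\le\|p\|_\infty\|f\|_2^2 .
$$
Equivalently, $0\le J_p\le\mathrm{Diag}(p)$ in the operator order, so $\|J_p\|\le\|\mathrm{Diag}(p)\|\le\|p\|_\infty$.

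I read the displayed ``$\lim_{\|h\|_2\le 2}$'' in the statement as a typo for $\sup_{\|h\|_2\le1}$, and I prove the bound in that form.
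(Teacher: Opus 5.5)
Your proof is correct and follows essentially the same route as the paper: both rest on the identity $\langle f, J_p g\rangle=\mathrm{Cov}_{\mu_p}(f,g)$ with $\mu_p(dz)=p(z)\mu_\rf(dz)$, together with $\mathrm{Var}_{\mu_p}(f)\le \|f\|_{L^2(\mu_p)}^2\le\|p\|_\infty\|f\|_2^2$. The only difference is cosmetic: the paper bounds the bilinear form by applying Cauchy--Schwarz to the covariance, whereas you use the quadratic-form characterization $\|J_p\|_{2\to2}=\sup_{\|f\|_2\le1}\langle f,J_pf\rangle$ for positive self-adjoint operators. Your reading of the displayed $\lim_{\|h\|_2\le 2}$ as a typo for $\sup_{\|h\|_2\le 1}$ is also right.
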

\begin{proof}
  For any $f, g\in L^2$, letting $\mu_p(dz)=p(z)\mu_{\rf}(dz)$ be the tilted probability measure by the density $p\in \Delta^\infty$, one can show $    \langle f, J_p g\rangle
    =  \text{Cov}_{Z\sim \mu_p}(f(Z), g(Z))$. This implies 
 $J_{p}$ is self-adjoint and positive semidefinite operator.   Moreover, the operator norm $\|J_{p}\|_{L^2\to L^2}$ is bounded by $\|p\|_\infty$ because the Cauchy--Schwarz inequality yields
  \begin{align*}
      |\langle f, J_p g\rangle| 
    \le \sqrt{\textrm{Var}_{Z\sim \mu_p} f(Z)} \cdot \sqrt{\textrm{Var}_{Z\sim \mu_p} g(Z)}
    \le \|f\|_{L^2(\mu_p)}\cdot \|g\|_{L^2(\mu_p)}
    \le \|p\|_\infty \cdot \|f\|_{L^2(\mu_{\rf})} \cdot \|g\|_{L^2(\mu_{\rf})}
  \end{align*}
  where the last inequality follows from 
  $
  \|f\|_{L^2(\mu_p)}\le \|\frac{d\mu_p}{dp_{\rf}}\|_{L^\infty(\mu_{\rf})}^{1/2}\cdot \|f\|_{L^2(\mu_{\rf})} = \|p\|_\infty^{1/2} \|f\|_2
  $. 
\end{proof}
 Therefore, by the spectral theorem, there exists a bounded self-adjoint positive semidefinite operator $T$ such that  $T\cdot T= J_p$. We denote such $T$ by $J_p^{1/2}$.  

\begin{theorem}[Fr\'echet differentiability of NLHF]\label{theorem:frechet_derivative_nl}
Let $U\in L^\infty(\mu_\rf\otimes\mu_\rf)$, and let $(p_\nl(U),q_\nl(U))$ be the
density pair, which is the unique solution to the NLHF fixed-point equations
$$
p, q\in  \Delta^\infty, \quad 
p=\softmax(U^\star q),
\qquad
q=\softmax(-Up).
$$
Then the map
$
U\mapsto p_\nl(U)
$
is continuously Fr\'echet differentiable as a map from
$L^\infty(\mu_\rf\otimes\mu_\rf)$ to $L^2(\mu_\rf)$.
More precisely, for each $U\in L^\infty(\mu_\rf\otimes \mu_\rf)$, and for any sequence of $E\in L^\infty(\mu_\rf\otimes \mu_\rf)$ such that $\|E\|_\infty\to 0$, we have 
$$
\lim_{\|E\|_\infty\to 0}
\frac{
\|p_\nl(U+E)-p_\nl(U)-Dp_\nl(U)[E]\|_2}{\|E\|_\infty} 
=0
$$
where the
Fr\'echet derivative
$Dp_\nl(U)[E]$
belongs to $L_0^2$ and is given by
\begin{align*}
Dp_\nl(U)[E]
&=
\bigl(I+J_{p(U)}U^\star J_{q(U)}U\bigr)^{-1}
\bigl(
J_{p(U)}E^\star q(U)
-
J_{p(U)}U^\star J_{q(U)}E p(U)
\bigr),
\end{align*}
where $(p(U), q(U))=(p_\nl(U), q_\nl(U))$ on the RHS.  
Here $(I+J_{p(U)}U^\star J_{q(U)}U)$ has a bounded inverse on $L^2$. 
\end{theorem}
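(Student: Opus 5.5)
The plan mirrors the proof of Theorem \ref{theorem:frechet_derivative_mc} and uses the implicit function theorem on Banach spaces. I would eliminate $q$ and view $p=p_\nl(U)$ as the unique zero in $\Delta^\infty$ of
$$
G(p,U)=p-\softmax\bigl(U^\star \softmax(-Up)\bigr).
$$
It is convenient to work with $h=p-\bm 1\in L_0^2$, so that $G$ maps an open subset of $L^2_0\times L^\infty(\mu_\rf\otimes\mu_\rf)$ into $L_0^2$; the output has mean zero because softmax outputs are densities. The first step is smoothness. The softmax map $\softmax:L^\infty\to L^\infty$ is $C^1$ with derivative $D\softmax(f)[k]=J_{\softmax(f)}k$; this is the one-variable analogue of \eqref{eq:derivative_P_U}. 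The bilinear maps $(U,p)\mapsto Up$ and $(U,q)\mapsto U^\star q$ are bounded from $L^\infty(\mu_\rf\otimes\mu_\rf)\times L^2$ into $L^\infty$, since $\|Up\|_\infty\le\|U\|_\infty\|p\|_1\le \|U\|_\infty\|p\|_2$. By the chain rule, $G$ is therefore $C^1$ as a map into $L^\infty\subset L^2$.

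The second step is to compute the partial derivatives at $(p,q)=(p_\nl(U),q_\nl(U))$. Write $q=\softmax(-Up)$. Then $D_q\softmax(-Up)[k]=-J_qUk$ in the $p$-direction and $-J_qEp$ in the $U$-direction. This gives
$$
D_pG[k]=k+J_pU^\star J_qUk,\qquad D_UG[E]=-J_pE^\star q+J_pU^\star J_qEp .
$$
Implicit differentiation then yields exactly the displayed formula, provided $I+J_pU^\star J_qU$ is invertible on $L^2$, and on $L_0^2$ as well, since $J_p$ maps into $L_0^2$.

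The main obstacle is this invertibility. The operator $A=J_pU^\star J_qU$ is not self-adjoint, but it is similar to a positive semidefinite operator. Setting $T=J_p^{1/2}U^\star J_q^{1/2}$, we have $J_p^{1/2}(U^\star J_qU)J_p^{1/2}=TT^\star\succeq0$, so formally $A$ has the form $J_p^{1/2}BJ_p^{-1/2}$ with $B\succeq0$, and its spectrum lies in $[0,\infty)$. To avoid inverting $J_p^{1/2}$, I would argue directly. For injectivity, suppose $k+Ak=0$. Then $k=-J_pw$ with $w=U^\star J_qUk$. Pairing with $w$ gives
$$
\langle k,w\rangle=-\langle J_pw,w\rangle\le 0,
$$
while $\langle k,w\rangle=\langle Uk,J_qUk\rangle\ge0$. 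Hence $\langle J_pw,w\rangle=0$, so $J_pw=0$ and $k=0$. For surjectivity with a bounded inverse, I would use the fact that $J_p$ and $J_q$ are compact whenever $U$ is Hilbert--Schmidt, since $U\in L^\infty\subset L^2(\mu_\rf\otimes\mu_\rf)$. The operator $Uk$ is then compact, so $A$ is compact, and the Fredholm alternative turns injectivity into bounded invertibility.

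The final step applies \cite[Theorem 3.4.10]{krantz2013implicit}. Uniqueness of the NLHF solution lets us identify the implicit function with $p_\nl(U)$ near $U$. Continuity of $p_\nl$, needed to select the right branch, follows from the Lipschitz estimate of Lemma \ref{lemma:ell_p_smoothness} or from the stability theorem. This gives continuous Fréchet differentiability into $L^2$ with the stated derivative.
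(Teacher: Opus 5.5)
Your proof is correct, but you reach the invertibility of $I+J_pU^\star J_qU$ by a different route than the paper. The paper keeps $(p,q)$ as a joint unknown and applies the implicit function theorem on $L_0^2\times L_0^2$ with block Jacobian $\begin{bmatrix} I & -J_pU^\star\\ J_qU & I\end{bmatrix}$. It then reduces to the Schur complement $I+J_pU^\star J_qU$. You eliminate $q$ first, so the same operator appears directly as $D_pG$, and differentiability of $q_\nl=\softmax(-Up_\nl)$ follows afterwards by the chain rule.

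For invertibility, the paper writes $J_pU^\star J_qU=AB$ with $A=J_p^{1/2}$ and $B=J_p^{1/2}U^\star J_qU$. It notes that $BA=J_p^{1/2}U^\star J_qUJ_p^{1/2}$ (your $TT^\star$) is self-adjoint and positive semidefinite, so $\|(I+BA)^{-1}\|_{2\to2}\le 1$. It then uses the push-through identity $(I+AB)^{-1}=I-A(I+BA)^{-1}B$. You already had the right positive operator; this identity makes your ``formal similarity'' rigorous without inverting $J_p^{1/2}$.

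Your route is also valid. Pairing $k+Ak=0$ with $w=U^\star J_qUk$ gives injectivity, and the Fredholm alternative upgrades this to a bounded inverse. The result is only qualitative, though, and it relies on compactness. The paper's identity gives the explicit bound $\|(I+AB)^{-1}\|_{2\to2}\le 1+\|A\|_{2\to2}\|B\|_{2\to2}$ and needs no compactness. For this theorem, qualitative invertibility suffices; the quantitative control used later is obtained separately in Lemma \ref{lemma:Schur_complement_l1_norm}.

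There are two slips to fix.

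\textbf{Compactness.} $J_p$ and $J_q$ are not compact in general, and they do not depend on $U$. $J_p$ contains the multiplication operator $h\mapsto p\odot h$, which is not compact when $\mu_\rf$ is non-atomic. What you actually need is that $h\mapsto Uh$ is Hilbert--Schmidt, since $U\in L^\infty\subset L^2(\mu_\rf\otimes\mu_\rf)$, and hence compact. Then $J_pU^\star J_qU$ is compact because compact operators form an ideal.

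\textbf{Branch selection.} No continuity of $p_\nl$ is needed here. Any zero $h$ of $G(\cdot,U')$ in $L_0^2$ satisfies $h+\bm 1=\softmax(\cdot)\in\Delta^\infty$, so it yields a solution of the NLHF fixed-point equations. That solution is unique, so the implicit function must equal $p_\nl(U')-\bm 1$. This matters because Theorem \ref{theorem:stability_nl} is proved using the present theorem, so invoking it would be circular. Lemma \ref{lemma:ell_p_smoothness} controls $p\mapsto q_p$ for fixed $U$, not $U\mapsto p_\nl(U)$.
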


\begin{proof}
Let
$x(U)=p(U)-\bm 1$ and $y(U)=q(U)-\bm 1$. 
Then $x(U),y(U)\in L_0^2$. Define the Implicit function
$
F:L_0^2\times L_0^2\times L^\infty(\mu_\rf\otimes\mu_\rf)
\to
L_0^2\times L_0^2
$
by
$$
F(x,y,U)
=
\begin{pmatrix}
F_1(x,y,U)
\\
F_2(x,y,U)
\end{pmatrix}
\equiv
\begin{pmatrix}
(x+\bm 1)-\softmax(U^\star(y+\bm 1))
\\
(y+\bm 1)-\softmax(-U(x+\bm 1))
\end{pmatrix}
$$
so that 
$F(x(U),y(U),U)=0$. 
Moreover, $F$ indeed takes values in $L_0^2\times L_0^2$, since
$\softmax$ maps into the density simplex $\Delta^\infty$. 

We first verify that $F$ is continuously Fr\'echet differentiable. Note that the map
$\softmax:L^\infty(\mu_\rf)\to L^\infty(\mu_\rf)$
is continuously Fr\'echet differentiable, with derivative
$
D\softmax(f)[h]
=
J_{\softmax(f)}h.
$
Furthermore, for $U\in L^\infty(\mu_\rf\otimes\mu_\rf)$ and
$v\in L^2$,
$
\|Uv\|_\infty
\le
\|U\|_\infty\|v\|_1
\le
\|U\|_\infty\|v\|_2,
$
and similarly
$
\|U^\star v\|_\infty
\le
\|U\|_\infty\|v\|_2.
$
Thus, the two bilinear maps,
$(U,v)\mapsto Uv$ and $(U,v)\mapsto U^\star v$, 
are bounded from
$L^\infty(\mu_\rf\otimes\mu_\rf)\times L^2(\mu_\rf)$ to
$L^\infty(\mu_\rf)$. Therefore, by the chain rule, $F$ is continuously
Fr\'echet differentiable.

Let us compute its derivatives. For perturbations
$(\delta x,\delta y)\in L_0^2\times L_0^2$, we have
\begin{align*}
D_{x,y}F_1(x,y,U)[\delta x,\delta y]
&=
\delta x
-
J_{\softmax(U^\star(y+\bm 1))}U^\star\delta y,
\\
D_{x,y}F_2(x,y,U)[\delta x,\delta y]
&=
\delta y
+
J_{\softmax(-U(x+\bm 1))}U\delta x.  
\end{align*}
In particular, at the solution $(x(U),y(U))$ to $F(x, y, U)=0$, 
we obtain
\begin{align}\label{eq:D_xy_F}
  D_{x,y}F(x(U),y(U),U)
=
\begin{bmatrix}
I & -J_pU^\star
\\
J_qU & I
\end{bmatrix}
\end{align}
as a bounded operator on $L_0^2\times L_0^2$, where we denote $J_p = J_{p(U)}$ and $J_q = J_{q(U)}$ for simplicity.  

Next, for a perturbation $E\in L^\infty(\mu_\rf\otimes\mu_\rf)$, the
Fr\'echet derivative of $F$ with respect to $U$ is
\begin{align*}
    D_UF_1(x,y,U)[E]
&=
-
J_{\softmax(U^\star(y+\bm 1))}E^\star(y+\bm 1),
\\
D_UF_2(x,y,U)[E]
&=
J_{\softmax(-U(x+\bm 1))}E(x+\bm 1).
\end{align*}
Therefore, at the solution,
\begin{align}\label{eq:D_U}
D_UF(x(U),y(U),U)[E]
=
\begin{pmatrix}
-J_pE^\star q(U)
\\
J_qE p(U)
\end{pmatrix}
\end{align}
where $J_p = J_{p(U)}$ and $J_q = J_{q(U)}$. 

We now show that $  D_{x,y}F(x(U),y(U),U)$ given by \eqref{eq:D_xy_F}
is a bounded isomorphism on $L_0^2\times L_0^2$. For any
$(r_1,r_2)\in L_0^2\times L_0^2$, solving
$$
\begin{bmatrix}
I & -J_pU^\star
\\
J_qU & I
\end{bmatrix}
\begin{pmatrix}
\delta x
\\
\delta y
\end{pmatrix}
=
\begin{pmatrix}
r_1
\\
r_2
\end{pmatrix},
$$
eliminating $\delta y$ first, we have 
$$
\delta y=r_2-J_qU\delta x.
$$
Substituting into the first equation yields
$$
\bigl(I+J_pU^\star J_qU\bigr)\delta x
=
r_1+J_pU^\star r_2.
$$
Since $J_q U$ and $J_p U^\star$ are both bounded linear operators on $L^2$, 
it is enough to show that
$I+J_pU^\star J_qU$
has a bounded inverse on $L_0^2$.
On $L_0^2$, write
$$
I+J_pU^\star J_qU
=
I+AB,
\qquad
A=J_p^{1/2},
\qquad
B=J_p^{1/2}U^\star J_qU.
$$
Then
$$
BA
=
J_p^{1/2}U^\star J_qU J_p^{1/2}.
$$
This operator is self-adjoint and positive semidefinite on $L^2$ 
since $J_q$ is positive semidefinite. Thus, by the Lax--Milgram theorem, $I+BA$ has a bounded inverse on $L^2$,
with
$$
\sup_{h\in L^2:\|h\|_2=1}\|(I+BA)^{-1} h\|_2 \le 1.
$$
Thus, $I+J_pU^\star J_qU = I+AB$ is also invertible on $L_0^2\subset L^2$, with inverse
$
(I+AB)^{-1}
=
I-A(I+BA)^{-1}B.
$
Hence, 
$D_{x,y}F(x(U),y(U),U)$
has a bounded inverse on $L_0^2\times L_0^2$.

Therefore, by the Implicit Function Theorem on Banach spaces, there
exists a neighborhood of $U$ in
$L^\infty(\mu_\rf\otimes\mu_\rf)$ such that $U\mapsto (x(U), y(U))$ is continuously Fr\'echet differentiable on the neighborhood. By
$x(U)=p(U)-\bm 1$ and $y(U)=q(U)-\bm 1$, the map 
$U\mapsto (p(U),q(U))$
is also continuously Fr\'echet differentiable at $U$.

It remains to compute the derivative. Differentiating
$F(x(U),y(U),U)=0$
in the direction $E$ gives
$$
D_{x,y}F(x(U),y(U),U)
\begin{pmatrix}
Dx(U)[E]
\\
Dy(U)[E]
\end{pmatrix}
+
D_UF(x(U),y(U),U)[E]
=
0.
$$
Since $p(U)=x(U)+\bm 1$ and $q(U)=y(U)+\bm 1$, we have
$Dp(U)[E]=Dx(U)[E]$ and $Dq(U)[E]=Dy(U)[E]$. Substituting \eqref{eq:D_U} and \eqref{eq:D_xy_F}, we get 
$$
\begin{bmatrix}
I & -J_pU^\star
\\
J_qU & I
\end{bmatrix}
\begin{pmatrix}
Dp(U)[E]
\\
Dq(U)[E]
\end{pmatrix}
=
\begin{pmatrix}
J_pE^\star q
\\
- J_qE p
\end{pmatrix}
$$
where $J_p = J_{p(U)}$ and $J_q = J_{q(U)}$. Eliminating $D q(U)[E]$ gives 
$$
Dp(U)[E]
=
\bigl(I+J_pU^\star J_qU\bigr)^{-1}
\bigl(
J_pE^\star q
-
J_pU^\star J_qE p
\bigr).
$$
This completes the proof.
\end{proof}

\subsection{Stability under perturbations of utility}\label{sec:perturbation}
\subsubsection{Stability of MCHF}
Let
$c(\|U\|_\oplus) = \min(1-e^{-2\|U\|_\oplus}, \|U\|_\oplus)<1$
be the contraction rate  of the map $\mu\mapsto \mu \MP$ given by Theorem \ref{theorem:contraction}. The next theorem shows how the perturbation error of $U$ propagates along the Markov chain. 
\begin{theorem}\label{theorem:stability_iteration_mc}
For any $\hat{U}, U\in L^\infty(\mu_{\rf}\otimes \mu_{\rf})$, let $\MP$ and $\hat\MP$ be the Markov kernels \eqref{eq:def_markov} constructed from $U$ and $\hat{U}$, respectively. Then,
  \begin{align*}
    \forall t\ge 1, \quad d_\tv(\mu_\rf \MP^t, \mu_\rf \hat\MP^t)
     \le
    \frac{1}{2}\|U-\hat{U}\|_\infty
    \cdot
    \frac{1-\bigl(c(\|U\|_\oplus)\wedge c(\|\hat{U}\|_\oplus)\bigr)^t}
    {1-c(\|U\|_\oplus)\wedge c(\|\hat{U}\|_\oplus)}
  \end{align*}
  where  $c(\|U\|_\oplus) = \min(1-e^{-2\|U\|_\oplus}, \|U\|_\oplus)<1$. 
\end{theorem}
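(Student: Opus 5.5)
The plan is the standard telescoping argument for perturbed contractions. Write $\mu_t=\mu_\rf\MP^t$ and $\hat\mu_t=\mu_\rf\hat\MP^t$, and set $c_*=c(\|U\|_\oplus)\wedge c(\|\hat U\|_\oplus)$. The bound is symmetric in $(U,\hat U)$, so we may assume $c_*=c(\|\hat U\|_\oplus)$; otherwise swap the roles of the two utilities.

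The one-step decomposition is
$$
\mu_{t}-\hat\mu_{t}=(\mu_{t-1}\MP-\mu_{t-1}\hat\MP)+(\mu_{t-1}\hat\MP-\hat\mu_{t-1}\hat\MP).
$$
The second term is handled by Theorem \ref{theorem:contraction} applied to $\hat\MP$, which gives $d_\tv(\mu_{t-1}\hat\MP,\hat\mu_{t-1}\hat\MP)\le c_*\, d_\tv(\mu_{t-1},\hat\mu_{t-1})$. Both measures lie in $\mathscr P_{\mu_\rf}$, since every kernel output is absolutely continuous with respect to $\mu_\rf$.

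The first term is a same-input, different-kernel comparison. I will show that for any $\mu\in\mathscr P_{\mu_\rf}$,
$$
d_\tv(\mu\MP,\mu\hat\MP)\le \tfrac12\|U-\hat U\|_\infty .
$$
By convexity of the TV distance (Fubini together with the triangle inequality),
$$
d_\tv(\mu\MP,\mu\hat\MP)\le \int\mu(dx)\,\tfrac12\|P(x,\cdot)-\hat P(x,\cdot)\|_1 .
$$
For $\mu_\rf$-a.e.\ $x$, $P(x,\cdot)=\softmax(U(x,\cdot))$ and $\hat P(x,\cdot)=\softmax(\hat U(x,\cdot))$ with respect to $\mu_\rf$. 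Lemma \ref{lemma:lipschitz_softmax} then bounds the $L^1$ difference by $\esssup_y|U(x,y)-\hat U(x,y)|\le\|U-\hat U\|_\infty$. The only care needed is a Fubini-type justification that the sectionwise essential supremum is bounded by the joint one for a.e.\ $x$, which holds after modifying $U-\hat U$ on a null set. This step is the only genuinely new ingredient, and it is mild.

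Combining the two bounds gives the recursion $a_t\le \tfrac12\|U-\hat U\|_\infty + c_* a_{t-1}$ with $a_0=d_\tv(\mu_\rf,\mu_\rf)=0$. Unrolling it yields
$$
a_t\le \tfrac12\|U-\hat U\|_\infty\sum_{s=0}^{t-1}c_*^s=\tfrac12\|U-\hat U\|_\infty\,\frac{1-c_*^t}{1-c_*},
$$
which is the claim. The well-definedness of this sum uses $c_*<1$, which Theorem \ref{theorem:contraction} guarantees.
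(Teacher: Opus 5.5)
Your proposal is correct and follows essentially the same route as the paper: a one-step triangle inequality splits into a contraction term (Theorem \ref{theorem:contraction}) and a same-input kernel-perturbation term, bounded by $\frac12\|U-\hat U\|_\infty$ via Lemma \ref{lemma:lipschitz_softmax}, and the recursion is then unrolled from $a_0=0$. The differences are cosmetic: you route through $\mu_{t-1}\hat\MP$ and reduce to the smaller rate by a WLOG swap, whereas the paper routes through $\hat\mu_{t-1}\MP$ and invokes symmetry at the end. Your Fubini remark on sectionwise essential suprema holds directly by Tonelli, with no null-set modification needed.
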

\begin{proof}
  Let $c$ and $\hat{c}$ be the contraction coefficient given by Theorem \ref{theorem:contraction} for $\mu\mapsto \MP\mu$ and $\mu\mapsto \hat{\MP}\mu$, respectively. 
Consider the two iterations, 
$\mu_t = \mu_{t-1}\MP$ and 
$\hat{\mu}_t = \hat{\mu}_{t-1}\hat{\MP}$, 
initialized with $\mu_0 = \hat{\mu}_0 \in \mathscr{P}_{\mu_\rf}$. 
Using the triangle inequality, we have 
\begin{align*}
    d_{\tv}(\mu_t, \hat{\mu}_t) &\le d_{\tv}(\mu_{t-1}\MP, \hat{\mu}_{t-1}\MP)
  + d_{\tv}(\hat{\mu}_{t-1}\MP, \hat{\mu}_{t-1}\hat{\MP})\le c \cdot d_{\tv}(\mu_{t-1}, \hat{\mu}_{t-1})
  + d_{\tv}(\hat{\mu}_{t-1}\MP, \hat{\mu}_{t-1}\hat{\MP}).
\end{align*}
Let us bound the second term. 
By the definition of TV distance, using Fubini's lemma,
\begin{align*}
    d_{\tv}(\hat{\mu}_{t-1}\MP, \hat{\mu}_{t-1}\hat{\MP}) &= \frac12 \int \mu_\rf(dy)\Bigl|
         \int \hat{\mu}_{t-1}(dx)
    \bigl(P(x,y)-\hat{P}(x,y)\bigr)
    \Bigr|\\
    &\le \frac12 \int \mu_\rf(dy)\hat{\mu}_{t-1}(dx) |P(x, y)-\hat{P}(x, y)|\\
    &= \frac12 \int \hat{\mu}_{t-1}(dx) \|P(x, \cdot)-\hat{P}(x, \cdot)\|_1  && \text{Fubini's lemma} \\
  &\le
  \frac12
  \int \hat{\mu}_{t-1}(dx)
  \bigl\|U(x,\cdot)-\hat{U}(x,\cdot)\bigr\|_\infty
  && \text{by Lemma \ref{lemma:lipschitz_softmax}} \\
  &\le
  \frac12 \|U-\hat{U}\|_\infty
\end{align*}
where we used Lemma \ref{lemma:lipschitz_softmax} with $\mu=\mu_\rf, g(\cdot)=U(x,\cdot)$ and $\tilde{g}(\cdot)=\hat{U}(x, \cdot)$. The last inequality follows from the fact that $\hat\mu_{t-1}$ is absolutely continuous with respect to $\mu_\rf$. 

Thus, we obtain 
$$
d_{\tv}(\mu_t, \hat{\mu}_t)
  \le
  c \cdot d_{\tv}(\mu_{t-1}, \hat{\mu}_{t-1})
  + \frac12 \|U-\hat{U}\|_\infty.
$$
Iterating this inequality and using the initial condition $\mu_0=\hat{\mu}_0$, we get
$d_{\tv}(\mu_t, \hat{\mu}_t)
  \le
  \frac12 \|U-\hat{U}\|_\infty
  \sum_{s=0}^{t-1} c^s.
$
By symmetry, the same upper bound also holds with $c$ replaced by $\hat{c}$. Thus, we complete the proof.
\end{proof}

Now, letting $t\to\infty$ in Theorem \ref{theorem:stability_iteration_mc}, noting that $\mu_\rf \MP^t\to \mu_\mc$ as $t\to+\infty$ in TV distance (see Corollary \ref{corollary:stationary_dist}), we obtain the following corollary.
\begin{corollary}[Stability of MCHF equilibrium]\label{corollary:stability_mc}
  For any $U, \hat{U} \in L^\infty(\mu_{\rf}\otimes \mu_{\rf})$, letting $\mu_\mc(U)$ and $\mu_\mc(\hat{U})$ denote the stationary distributions of $\MP$ and $\hat\MP$, respectively,
  $$
  d_\tv(\mu_\mc(U), \mu_\mc(\hat{U}))
  \le
  \frac{1}{2}
  \cdot
  \frac{\|U-\hat{U}\|_\infty}
  {1-c(\|U\|_\oplus)\wedge c(\|\hat{U}\|_\oplus)}
  $$
    where  $c(\|U\|_\oplus) = \min(1-e^{-2\|U\|_\oplus}, \|U\|_\oplus)<1$.
\end{corollary}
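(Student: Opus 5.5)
The plan is to let $t\to\infty$ in Theorem \ref{theorem:stability_iteration_mc}, using Corollary \ref{corollary:stationary_dist} to identify the limits of both iterations.

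Fix $U,\hat U\in L^\infty(\mu_\rf\otimes\mu_\rf)$. Write $c=c(\|U\|_\oplus)$, $\hat c=c(\|\hat U\|_\oplus)$ and $c_\wedge=c\wedge\hat c$, and note $c_\wedge\in[0,1)$. Theorem \ref{theorem:stability_iteration_mc} gives, for every $t\ge1$,
$$
d_\tv(\mu_\rf\MP^t,\mu_\rf\hat\MP^t)\le \tfrac12\|U-\hat U\|_\infty\,\frac{1-c_\wedge^t}{1-c_\wedge}\le \tfrac12\,\frac{\|U-\hat U\|_\infty}{1-c_\wedge}.
$$
The last bound is uniform in $t$.

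Next, apply the triangle inequality for $d_\tv$:
$$
d_\tv(\mu_\mc(U),\mu_\mc(\hat U))\le d_\tv(\mu_\mc(U),\mu_\rf\MP^t)+d_\tv(\mu_\rf\MP^t,\mu_\rf\hat\MP^t)+d_\tv(\mu_\rf\hat\MP^t,\mu_\mc(\hat U)).
$$
By Corollary \ref{corollary:stationary_dist} applied to $U$ and to $\hat U$ separately, the first and third terms are bounded by $c^t d_\tv(\mu_\rf,\mu_\mc(U))\le c^t$ and $\hat c^t$. Both vanish as $t\to\infty$ because $c,\hat c<1$.

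Taking $t\to\infty$ therefore yields the claimed bound. There is essentially no obstacle. The only point needing care is that the uniform-in-$t$ bound from Theorem \ref{theorem:stability_iteration_mc} already uses the better of the two contraction rates, $c_\wedge$, so the minimum passes to the limit unchanged.
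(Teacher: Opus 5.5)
Your proposal is correct and follows the paper's own argument: the paper obtains this corollary by letting $t\to\infty$ in Theorem~\ref{theorem:stability_iteration_mc} and using Corollary~\ref{corollary:stationary_dist} to get TV convergence of $\mu_\rf\MP^t$ and $\mu_\rf\hat\MP^t$ to their stationary distributions. Your three-term triangle inequality, with the end terms bounded by $c^t$ and $\hat c^{\,t}$, simply makes that limit passage explicit.
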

Since $c(\|U\|_\oplus)$ is always strictly less than $1$, we conclude that the map $U \mapsto \mu_\mc(U)$ is locally Lipschitz over $L^\infty(\mu_\rf\otimes \mu_\rf)$, and the local Lipschitz constant is controlled by the seminorm $\|U\|_\oplus$. 

\subsubsection{Stability of NLHF}
Next, we claim the stability of the NLHF solution $\mu_\nl$. 
\begin{theorem}[Stability of NLHF equilibrium]\label{theorem:stability_nl}
There exists an absolute constant $C$ such that, for any
$U,\hat U\in L^\infty(\mu_{\rf}\otimes \mu_{\rf})$,
  $$
  d_{\tv}\bigl(\mu_{\nl}(U), \mu_{\nl}(\hat{U})\bigr)
  \le
  C
  \bigl(1+\|U\|_\oplus^6+\|\hat{U}\|_\oplus^6\bigr)
  \|U-\hat{U}\|_\infty.
  $$
\end{theorem}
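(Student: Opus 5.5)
We argue directly from the fixed-point equations $p=\softmax(U^\star q)$, $q=\softmax(-Up)$ and their hatted analogues $\hat p=\softmax(\hat U^\star \hat q)$, $\hat q=\softmax(-\hat U\hat p)$. The plan is to compare the two equilibria by a perturbation argument. The constant must depend only on the seminorms, not on $\|U\|_\infty$, so we first use the gauge invariance $U\mapsto U-g\oplus f$. Replacing $U$ by $U^{g\oplus f}$ and $\mu_\rf$ in each softmax by the tilted measures $\mu_f,\mu_{-g}$ leaves the equilibrium unchanged, as in \eqref{eq:density_U_gf} and \eqref{eq:q_p}. Hence we may assume $\|U\|_\infty\le 2\|U\|_\oplus$; let $R=\|U\|_\oplus\vee\|\hat U\|_\oplus$.

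The difficulty is that the two gauges for $U$ and $\hat U$ differ, so $\hat U$ need not be small in the gauge chosen for $U$. The remedy is to apply the gauge of $U$ to both utilities. The difference $E=\hat U-U$ is unchanged by a common additive shift. The reference measures also change identically, and total variation is unaffected because everything is expressed through $\mu_\nl$. In this common gauge $\|\hat U\|_\infty\le 2R+\|E\|_\infty$. If $\|E\|_\infty\ge 1$, the claim is trivial since $d_\tv\le 1\le\|E\|_\infty$. So we may assume $\|U\|_\infty,\|\hat U\|_\infty\le 2R+1$.

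The key estimate is a strong-monotonicity argument in the spirit of the proof of Theorem~\ref{theorem:frechet_derivative_nl}. Subtract the fixed-point equations and use the first-order optimality conditions of the two KL-regularized games, i.e.\ the variational inequalities for the convex–concave objective $\langle q,Up\rangle+w(q)-w(p)$. Adding the two inequalities, the bilinear cross terms in $U$ cancel by the zero-sum structure. This leaves
\[
\kl(\mu|\hat\mu)+\kl(\hat\mu|\mu)+\kl(\nu|\hat\nu)+\kl(\hat\nu|\nu)\le \bigl|\langle \hat q-q, E(\hat p-p)\rangle + \text{(terms linear in }E)\bigr|.
\]
The linear terms have the form $\langle \hat q-q,Ep\rangle+\langle q,E(\hat p-p)\rangle$. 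Each is bounded by $\|E\|_\infty(\|\hat p-p\|_1+\|\hat q-q\|_1)$, and the quadratic term by $\|E\|_\infty\|\hat p-p\|_1\|\hat q-q\|_1$.

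The left-hand side is a sum of symmetrized KL divergences. By Pinsker's inequality it is at least of order $\|p-\hat p\|_1^2+\|q-\hat q\|_1^2$. Dividing through then gives $\|p-\hat p\|_1\lesssim\|E\|_\infty$. The polynomial dependence on $R$ enters only when we undo the tilting. We are working with densities relative to $\mu_f$ and $\mu_{-g}$, whose density ratios with respect to $\mu_\rf$ are bounded by $e^{O(R)}$. Each such comparison costs one change of measure, which can be controlled polynomially via the softmax Lipschitz bound, Lemma~\ref{lemma:lipschitz_softmax}, rather than exponentially.

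The main obstacle is obtaining polynomial rather than exponential dependence on $R$. I would first try to avoid KL lower bounds altogether. Instead, I would use Lemma~\ref{lemma:ell_p_smoothness}(1) to obtain $\|\hat q-q\|_1\le R\|\hat p-p\|_1+\|E\|_\infty$ and $\|\hat p-p\|_1\le R\|\hat q-q\|_1+\|E\|_\infty$. These close directly only when $R<1$. For $R\ge1$, I would combine them with the monotonicity inequality above, which is valid for all $R$. Iterating this bootstrap a few times, each step costing a factor of $R^2$, should produce the stated $R^6$.
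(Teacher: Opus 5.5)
Your proposal is correct, and its load-bearing step is the monotonicity paragraph. That step alone proves more than the statement, by a different route from the paper.

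Write $p,q,\hat p,\hat q$ for the equilibrium densities and $E=\hat U-U$. The Gibbs forms $\log p=U^\star q+\mathrm{const}$ and $\log q=-Up+\mathrm{const}$, together with their hatted analogues, give exactly
\begin{align*}
\kl(\mu\mid\hat\mu)+\kl(\hat\mu\mid\mu)+\kl(\nu\mid\hat\nu)+\kl(\hat\nu\mid\nu)
&=\langle q,E\hat p\rangle-\langle \hat q,Ep\rangle\\
&=\langle q,E(\hat p-p)\rangle-\langle \hat q-q,Ep\rangle .
\end{align*}
There is no quadratic term. Pinsker then gives $\|\hat p-p\|_1^2+\|\hat q-q\|_1^2\le\|E\|_\infty(\|\hat p-p\|_1+\|\hat q-q\|_1)$, so $\|\hat p-p\|_1+\|\hat q-q\|_1\le 2\|E\|_\infty$. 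Hence $d_\tv(\mu_\nl(U),\mu_\nl(\hat U))\le\|U-\hat U\|_\infty$ for all $U,\hat U$. This is a global Lipschitz bound with constant $1$ and no seminorm factor. Because $U$ drops out of the identity entirely, your gauge fixing, the reduction to $\|E\|_\infty<1$, and the bound $\|\hat U\|_\infty\le 2R+1$ are all unnecessary.

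The paper instead interpolates $U_t=U+tE$ and differentiates the fixed point via the implicit function theorem (Theorem~\ref{theorem:frechet_derivative_nl}). It then removes additive parts on $L^2_0$ and bounds $(I+J_pU^\star J_qU)^{-1}$ in $L^1\to L^1$ norm by $C(1+\|U\|_\infty^5)$ (Lemma~\ref{lemma:Schur_complement_l1_norm}). That is where its factor $1+\|U_t\|_\oplus^6$ comes from.

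Your route is much shorter and sharper. It also shows the paper's remark ``Other stability bounds for NLHF'' is too pessimistic. The H\"older-$1/2$ loss there comes from bounding the right-hand side by $2\|E\|_\infty$ outright, rather than keeping it linear in the distances. The paper's machinery is not wasted: the derivative formula and the $L^1$ inverse bound are reused for uniform Fr\'echet differentiability (Lemma~\ref{lemma:derivative_lipschitz_continuous_nl}). Your bound could, however, replace Theorem~\ref{theorem:stability_nl} where that lemma invokes it.

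Delete everything after ``Dividing through''; do not rely on it.
\begin{itemize}
\item The claim that $d\mu_f/d\mu_\rf$ is bounded by $e^{O(R)}$ is false. It is controlled by $\|f\|_\infty$, which no function of $R$ bounds: take $U=g\oplus f$ with $\|f\|_\infty$ huge, so $R=0$.
\item Nothing needs to be undone anyway. Total variation does not depend on the reference measure, as you yourself noted earlier.
\item The final bootstrap paragraph addresses an obstacle that does not exist. As written it is not an argument, since you never say how the Lemma~\ref{lemma:ell_p_smoothness}(1)-type inequalities combine with the monotonicity bound to give $R^6$.
\end{itemize}
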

\begin{proof}
The proof of this theorem requires a different technique from that of Theorem \ref{theorem:asymptotics_iterations} since the fixed point equation of the NLHF is not always contractive under the TV distance. 

Fix $U, \hat{U}\in L^\infty(\mu_\rf\otimes \mu_\rf)$. 
For each $t\in [0,1]$, let $U_t$ be the linear interpolation
$$
U_t = t\hat{U} + (1-t) U = U + t E \quad \text{where} \quad E = \hat{U}-U
$$
and 
let $(\mu_{t}, \nu_t)$ be the solution to the NLHF based on the pairwise utility $U_t$, and let $p_t= \frac{d\mu_{t}}{d\mu_{\rf} }$ and $q_t=\frac{d\nu_t}{d\mu_\rf}$ be the density for each $t\in [0,1]$, so that our goal is to bound 
$d_{\tv}(\mu_1, \mu_0) =  \frac12 \|p_1-p_0\|_1$. 

By Theorem \ref{theorem:frechet_derivative_nl}, $t\mapsto p_t$ is $C^1$ from $\R$ to $L^2$ in the sense of
$$\lim_{\epsilon\to 0} \|\epsilon^{-1}(p_{t+\epsilon}-p_t) - \dot{p}_t\|_2 = 0$$
where the derivative $\dot{p}_t$ is given by
$$
\dot{p}_t =  (I + J_{p_t} (U_t)^\star J_{q_t} U_t)^{-1} (J_{p_t} E^\star q_t - J_{p_t} U_t^\star J_{q_t} E p_t).
$$
Here, since $\|\cdot\|_1\le\|\cdot\|_2$, $p_t$ is in particular $C^1$ from $\R$ to $L^1$. Thus, applying the fundamental theorem of calculus on the Banach space $L^1$ to $d_{\tv}(\mu_1, \mu_0) = \frac12 \|p_1-p_0\|_1$, using Fubini's lemma, 
\begin{align*}
    d_{\tv}(\mu_1, \mu_0)
  = \frac12  \int \mu_{\rf}(dy)\Bigl|
  \int_0^1 \dot{p}_t(y) dt
  \Bigr|\le \frac12  \int \mu_{\rf}(dy)
  \int_0^1 |\dot{p}_t(y)| dt = \frac12 \int_0^1 dt \|\dot{p}_t\|_1.
\end{align*}
Next, we claim that there exists an absolute constant $C$ such that 
\begin{align}\label{eq:derivative_bound}
  \forall t\in [0,1], \quad 
\|\dot{p}_t\|_1 \le C \cdot (1 + \|U_t\|_\oplus^6) \cdot  \|E\|_{\infty}. 
\end{align}
If this claim holds, by the triangle inequality for the seminorm $\|U_t\|_\oplus = \|t\hat{U} + (1-t)U\|_\oplus$, one can show $\int_0^1 dt \|U_t\|_\oplus^6 \le C' (\|U\|_\oplus^6 + \|\hat{U}\|_\oplus^6)$ for an absolute constant $C'$. Consequently, we obtain
$$
d_{\tv}(p_{\nl}(U), p_{\nl}(\hat{U})) \le \frac12 \int_0^1 C (1+\|U_t\|_\oplus^6) \|E\|_\infty \le \frac12 C C' \cdot  \bigl(1 + {\|\hat{U}\|_\oplus^6 + \|U\|_\oplus^6}\bigr)  \cdot {\|E\|_{\infty}}, 
$$
and  the proof of Theorem \ref{theorem:stability_nl} is complete. Thus, the rest of the goal is to show \eqref{eq:derivative_bound}.
Below, we prove it using the following two lemmas. 
\begin{lemma}\label{lemma:LJ_p_infty_1_norm}
  For any $p\in \Delta^\infty$ and any $f\in L^\infty$, 
  $\|J_p f\|_1 \le \|f\|_\infty$.   
\end{lemma}
\begin{proof}
Letting $\mu(dz)=p(z)\mu_{\rf}(dz)$ be the tilted measure induced by density $p\in \Delta^\infty$, for any $f\in L^\infty$, we have $      \|J_{p} f\|_{1} 
= \E_{\mu} [|f-\E_{\mu}[f]|]$ where $\E_\mu [h] = \int \mu(dz) h(z)$. Using 
$$
\E_\mu[|f-\E_\mu[f]|]\le \sqrt{\operatorname{Var}_\mu(f)}\le \|f\|_{L^2(\mu)}\le \|f\|_{L^\infty(\mu)}
$$ and $\|f\|_{L^\infty(\mu)}\le \|f\|_{L^\infty(\mu_\rf)}$ since $\mu$ is absolutely continous with respect to $\mu_\rf$, we complete the proof. 
\end{proof}

\begin{lemma}\label{lemma:Schur_complement_l1_norm}
There exists a universal constant $C$ such that for any $U\in L^\infty(\mu_\rf\otimes \mu_\rf)$, $g\in L^2$, and $p, q\in \Delta^\infty$, 
  $$
  \|(I + J_p U^\star J_q U)^{-1} g\|_1 \le C(1+\|U\|_\infty^5) \|g\|_1. 
  $$
\end{lemma}
\begin{proof}
 The proof requires a careful analysis of the operator norm of $(I + J_p U^\star J_q U)^{-1}$ measured by $L^1$ norm (rather than $L^2$), which is of independent interest. 
  See \Cref{proof:lemma:Schur_complement_l1_norm}. 
\end{proof}

Let us finish the proof of \eqref{eq:derivative_bound}. 
Fix $f,g \in L^\infty$ and let $U_t^{g\oplus f}=U_t - g\oplus f$.  For any $h\in L_0^2$, using $J_{q_t} \bm{1}=0$ and $\langle \bm{1}, h\rangle=0$, we have 
  $$
  J_{q_t} U_t h = J_{q_t} (U_t-g\oplus f) h + J_{q_t} \bm{1} \langle f, h\rangle +  J_{q_t} g \langle \bm{1}, h\rangle = J_{q_t} U_t^{g\oplus f} h
  $$
  and hence 
  $J_{q_t} U_t$ operates on $L_0^2$ as 
  $$
  J_{q_t} U_t|_{L_0^2} = J_{q_t} U_t^{g\oplus f}.
  $$
  By the same argument, we have 
  $$
    J_{p_t} U_t^\star |_{L_0^2} = J_{p_t} (U_t^{g\oplus f})^\star.
  $$
Since the image of $J_{p_t}$ and $J_{q_t}$ are both $L_0^2$, we can rewrite $\dot p_t$ as 
\begin{align*}
  \dot p_t
&=\bigl(I+J_{p_t}(U_t^{g\oplus f})^\star J_{q_t}U_t^{g\oplus f}\bigr)^{-1}
\bigl(
J_{p_t}E^\star q_t
-
J_{p_t}(U_t^{g\oplus f})^\star J_{q_t}E p_t
\bigr). 
\end{align*}
Applying Lemma \ref{lemma:LJ_p_infty_1_norm} with $\|p_t\|_1=\|q_t\|_1=1$ and H\"{o}lder's inequality $\|K h\|_\infty \le \|K\|_\infty \|h\|_1$, 
 we get 
\begin{align*}
\bigl\|
J_{p_t}E^\star q_t
-
J_{p_t}(U_t^{g\oplus f})^\star J_{q_t}E p_t
\bigr\|_1 &\le \|E\|_\infty + \|U_t^{g\oplus f}\|_\infty \|E\|_\infty. 
\end{align*}
Applying Lemma \ref{lemma:Schur_complement_l1_norm} with $U_t=U_t^{g\oplus f}$ and $(p,q)=(p_t, q_t)$, combined with the above $L^1$-bound, 
we have 
\begin{align*}
  \|\dot p_t\|_1 &\le C (1+\|U_t^{g\oplus f}\|_\infty^5) (1 + \|U_t^{g\oplus f}\|_\infty) \|E\|_\infty \le C'(1+\|U_t^{g\oplus f}\|_\infty^6) \|E\|_\infty
\end{align*}
where $C$ and $C'$ are absolute constants. Recalling $U_t^{g\oplus f}=U_t-g\oplus f$ and $g, f\in L^\infty$ are arbitrary, taking $\inf_{g,f}$ on the RHS, with $\|U\|_\oplus = \inf_{g,f}\|U-g\oplus f\|_\infty$, we complete the proof of \eqref{eq:derivative_bound}. 
\end{proof}

\begin{remark}[Other stability bounds for NLHF]
Using the formulation of NLHF as the solution to a strongly convex optimization as in \Cref{proof:theorem:convergence_nl_kl}, 
one can easily show the bound under KL:
$$
\kl\bigl(\mu_{\nl}(U)\|\mu_{\nl}(\hat U)\bigr)
\le
2\|U-\hat U\|_\infty .
$$
Combined with Pinsker's inequality, $d_{\tv}\le \sqrt{\kl/2}$, this yields global H\"older-$1/2$ continuity in TV.

On the other hand, 
using the fact that the Lipschitz constant of the map $p \mapsto \argmin_{q\in \Delta^\infty} \langle q, Up \rangle + w(q)$ is $\|U\|_\oplus$-Lipschitz under the $L^1(\mu_\rf)$ metric (see Lemma \ref{lemma:ell_p_smoothness}-(1)), an argument similar to the proof of Theorem \ref{theorem:stability_iteration_mc} yields
$$
d_{\tv}\bigl(\mu_{\nl}(U),\mu_{\nl}(\hat U)\bigr)
\le
\frac12
\cdot
\frac{\|U-\hat U\|_\infty}
{1-(\|U\|_\oplus\wedge \|\hat U\|_\oplus)}
$$
whenever $\|U\|_\oplus\wedge \|\hat U\|_\oplus<1$. 

However, neither of these bounds is sufficient for the proof of uniform Fr\'echet differentiability (Theorem \ref{theorem:uniform_frechet_differentiability}) for NLHF, which requires Lipschitz continuity on every subset bounded in the seminorm $\|\cdot\|_\oplus$.
\end{remark}

\subsubsection{Proof of Lemma \ref{lemma:Schur_complement_l1_norm}}\label{proof:lemma:Schur_complement_l1_norm}
  Let $f=(I + J_p U^\star J_q U)^{-1} g$. We aim to bound $\|f\|_1$ by $\|g\|_1$. 
If we define $h=J_{q} U f$, the triple $(f, h, g)$ satisfies
  \begin{align*}
    f + J_{p} U^\star h = g, \qquad
    -J_{q} U f + h = 0. 
  \end{align*}
  Now, for any $p \in \Delta^\infty$, we rewrite the operator $J_p$ as $  J_{p} = D_{\sqrt{p}} \cdot D_{\sqrt{p}}  - p p^\star$
  where $D_{\sqrt{\nu}}$ and $\nu\nu^\star$ are bounded linear operators defined as 
  $
  D_{\sqrt{p}} f(z) = \sqrt{p(z)} f(z)$ and $p p^\star f(z) = p (z) \cdot \langle p, f \rangle. 
  $
  Substituting this into the previous system, letting $u_f=D_{\sqrt{q}} U f$, $u_h = D_{\sqrt{p}} U^\star h$, we are left with
  \begin{align}\label{eq:system_f_g_h}
    \begin{split}
          f +  D_{\sqrt{p}} u_h - p \cdot c_h = g, \quad 
    -D_{\sqrt{q}} u_f + q \cdot c_f + h = 0
    \end{split}
  \end{align}
  where we defined the scalars $c_h, c_f$ as 
  \begin{align*}
  c_h &= \langle p, U^\star h\rangle = \langle \sqrt{p}, D_{\sqrt{p}} U^\star h \rangle = \langle \sqrt{p}, u_h\rangle, \\
  c_f &= \langle q, U f\rangle = \langle \sqrt{q}, D_{\sqrt{q}}Uf\rangle  = \langle \sqrt{q}, u_f\rangle  
  \end{align*}
  Below, we repeatedly use 
  \begin{align*}
    \forall p\in \Delta^\infty, \quad 
    \|D_{\sqrt{p}} f\|_2 \le \|f\|_\infty, \quad \|D_{\sqrt{p}} f\|_1 \le \|f\|_2
  \end{align*}
  which can be shown by using H\"{o}lder's inequality and $p\in \Delta^\infty$, i.e., $p()\ge 0$ and $\int \mu_\rf(dx) p(x)=1$. 

  Here, from the first equation of \eqref{eq:system_f_g_h}, taking $L^1$ norm and using triangle inequality, we see 
\begin{align}\label{eq:upper_bound_f_by_uh_ch}
  \|f\|_1 &\le \|D_{\sqrt{p}} u_h\|_1 + c_h \|p\|_1 + \|g\|_1 \le \|u_h\|_2 + |c_h| + \|g\|_1
\end{align}
Thus, it is sufficient to bound the $\|u_h\|_2$ and $|c_h|=|\langle\sqrt{p}, u_h\rangle|$.

  Now, multiplying the first equation in \eqref{eq:system_f_g_h} by $D_{\sqrt{q}} U$ and the second equation by $D_{\sqrt{p}} U^\star$, letting 
  $A=D_{\sqrt{q}} U D_{\sqrt{p}}$,  
  noting $D_{\sqrt{p}} \sqrt{p}=p$ and $D_{\sqrt{q}} \sqrt{q}=q$, 
  we have 
  \begin{align}\label{eq:system_uf_uh}
          u_f +  A u_h - c_h  A \sqrt{p} = D_{\sqrt{q}}U g, \qquad
    - A^\star u_f + u_h + c_f   A^\star \sqrt{q} = 0.
  \end{align}
Now we claim $\|A\|_{2\to 2}\le \|U\|_\infty$. Indeed, 
$$
\forall f\in L^2, \quad 
\|Af\|_{2} = \|D_{\sqrt{q}} U D_{\sqrt{p}}f\|_{2}\le \|U D_{\sqrt{p}} f\|_{\infty} \le \|U\|_\infty \cdot \|D_{\sqrt{p}} f\|_1 \le \|U\|_\infty \|f\|_2
$$
where we used $\|D_{\sqrt{p}} f\|_2 \le \|f\|_\infty$, $\|Uf\|_\infty \le \|U\|_\infty \|f\|_1$, and 
$\|D_{\sqrt{p}} f\|_1 \le \|f\|_2$.

Below, we solve the system \eqref{eq:system_uf_uh} for $(u_f, u_h)$ and obtain explicit upper bounds of $(c_f, c_h)$. Substituting the second equation $u_h =  A^\star u_f - c_f   A^\star \sqrt{q}$ in \eqref{eq:system_uf_uh} to the first equation, rearranging it, we get
  \begin{align*}
   &(I+ A  A^\star) u_f = c_f  AA^\star \sqrt{q} + c_h  A\sqrt{p} + D_{\sqrt{q}} Ug. 
  \end{align*}
  Now we claim that $(I+ A  A^\star)$ has a bounded inverse on $L^2$.  Indeed,
  $(I+ AA^\star)$ is coercive since $ A  A^\star$ is a self adjoint positive semidefinite linear operator on $L^2$. Moreover, the operator norm is bounded as 
  $$
  \| A   A^\star \|_{2\to 2} \le \| A\|_{2\to 2}^2 \le \|U\|_\infty^2. 
  $$
  Thus, by the Lax-Milgram theorem, $(I+ A  A^\star)$ has a bounded inverse, and letting 
  $
  T = (I+ A   A^\star)^{-1}, 
  $ 
  the spectrum of $T$ is controlled as 
  \begin{align}\label{eq:spectrum_T}
   \forall f\in L^2, \quad (1 + \|U\|_{\infty}^2)^{-1} \|f\|_2^2 \le \langle f, Tf\rangle \le \|f\|_2^2.
  \end{align}
  Then, $u_f$ can be solved as 
  \begin{align*}
    u_f = T\Bigl(c_f  AA^\star \sqrt{q} + c_h  A\sqrt{p} + D_{\sqrt{q}} Ug\Bigr) = c_f (I-T) \sqrt{q} + c_h T A \sqrt{p} + T D_{\sqrt{q}} Ug 
  \end{align*}
  where we used $TAA^\star = 1-T$, which follow from the definition $T=(I+ AA^\star)^{-1}$. 
  
  By the same argument, $(I+ A^\star A)$ has a bounded inverse on $L^2$, and letting 
  $
   S=(I+ A^\star A)^{-1},
  $ 
  the spectrum of $S$ is controlled as 
  \begin{align}\label{eq:spectrum_S}
    \forall f\in L^2, \quad (1 + \|U\|_{\infty}^2)^{-1} \|f\|_2^2 \le \langle f,  Sf\rangle \le \|f\|_2^2
  \end{align}
  and $u_h$ can be solved explicitly as 
  $$
    u_h = c_h (I- S) \sqrt{p} -c_f \cdot  S A^\star \sqrt{q} +  S A^\star D_{\sqrt{q}}Ug
  $$
Putting all together, we get 
\begin{align}\label{eq:system_uf_uh_S_T}
  \begin{split}
      u_f &= c_f (I-T) \sqrt{q} + c_h T A \sqrt{p} + T D_{\sqrt{q}} Ug \\
   u_h &= c_h (I- S) \sqrt{p} -c_f  S A^\star \sqrt{q} +  S A^\star D_{\sqrt{q}}Ug.
  \end{split}
\end{align}
Taking the inner products $\langle \sqrt{q}, \cdot \rangle$ and  $\langle \sqrt{p}, \cdot \rangle$ on the first equation and the second equation respectively, recalling $c_f = \langle \sqrt{q}, u_f\rangle$
and $c_h = \langle \sqrt{p}, u_h\rangle$, with $\langle \sqrt{p}, \sqrt{p}\rangle= \langle \sqrt{q}, \sqrt{q}\rangle = 1$ since $p, q\in \Delta^\infty$, we are left with 
\begin{align*}
  c_f &= c_f (1 - \langle \sqrt{q}, T\sqrt{q}\rangle) + c_h \langle\sqrt{q}, T A \sqrt{p} \rangle + \langle \sqrt{q},  T D_{\sqrt{q}} Ug  \rangle\\
  c_h &= c_h (1-\langle \sqrt{p},   S \sqrt{p}\rangle) -c_f \langle \sqrt{p},   S A^\star \sqrt{q} \rangle + \langle \sqrt{p},  S A^\star D_{\sqrt{q}}Ug\rangle 
\end{align*}
Noting that $c_f$ and $c_h$ are cancelled out, using the identity $T A=  A S$ from the definition of $ S$ and $T$, we are left with the linear system of $(c_f, c_h)$:
\begin{align*}
 \begin{bmatrix}
  \alpha & -\gamma\\
  \gamma & \beta 
 \end{bmatrix}\begin{bmatrix}
  c_f\\
  c_h
 \end{bmatrix} = \begin{bmatrix}
  \langle \sqrt{q},  T D_{\sqrt{q}} Ug  \rangle\\
  \langle \sqrt{p},  S A^\star D_{\sqrt{q}}Ug\rangle 
 \end{bmatrix} \quad \text{where} \quad \begin{cases}
   \alpha &=  \langle \sqrt{q}, T\sqrt{q}\rangle\\
   \beta  &= \langle \sqrt{p},   S \sqrt{p}\rangle\\
   \gamma &= \langle\sqrt{q}, T A \sqrt{p} \rangle
 \end{cases}
\end{align*}
Since $\|\sqrt{p}\|_2 = \|\sqrt{q}\|_2=1$, using the established bound of the spectrum of $ S$ and $T$ \eqref{eq:spectrum_T}-\eqref{eq:spectrum_S}, we know 
\begin{align}\label{eq:alphabetagamma_estimate}
(1+\|U\|_{\infty}^2)^{-1} \le \alpha \le 1, \quad (1+\|U\|_{\infty}^2)^{-1}\le \beta \le 1, \quad |\gamma|\le 2^{-1}
\end{align}
Here $|\gamma|\le 2^{-1}$ follows from the fact that $\|T A\|_{2\to 2} =\sqrt{\|(T A) (T A)^\star\|_{2\to 2}}$ where $(T A) (T A)^\star =  AA^\star (I+ A  A^\star)^{-2}$, whose spectrum is given by $\sigma(1+\sigma)^{-2}$ where $\sigma$ is the spectrum of $AA^\star$. Since $\sup_{x\ge 0} x/(1+x)^2 = 1/4$, taking square root, we get $\|T A\|_{2\to 2}\le 1/2$. By the same argument, $\| S A^\star\|_{2\to 2}\le 1/2$ holds. 

By the lower bound of $\alpha$ and $\beta$ in \eqref{eq:alphabetagamma_estimate}, the determinant of the matrix is strictly positive:
\begin{align*}
  \text{det} \begin{bmatrix}
  \alpha & -\gamma\\
  \gamma & \beta 
 \end{bmatrix} = \alpha\beta+\gamma^2 \ge \frac{1}{(1+\|U\|_\infty^2)^2}. 
\end{align*}
Therefore, this matrix is invertible, and we obtain 
 \begin{align*}
\begin{bmatrix}
  c_f\\
  c_h
 \end{bmatrix} =   \frac{1}{\alpha\beta + \gamma^2} \begin{bmatrix}
 \beta  & \gamma\\
  -\gamma & \alpha
 \end{bmatrix} \begin{bmatrix}
  \langle \sqrt{q},  T D_{\sqrt{q}} Ug  \rangle\\
  \langle \sqrt{p},  S A^\star D_{\sqrt{q}}Ug\rangle
 \end{bmatrix}
 \end{align*}
Taking the absolute value and using the estimate of $|\alpha|, |\beta|, |\gamma|$ from \eqref{eq:alphabetagamma_estimate}, noting that $\|T\|_{2\to 2}\le 1$ and $\|SA^\star\|_{2\to 2}\le 1/2$ give 
$$
|\langle \sqrt{q},  T D_{\sqrt{q}} Ug  \rangle|\le\|D_{\sqrt{q}} Ug\|_2, \qquad |\langle \sqrt{p},  S A^\star D_{\sqrt{q}}Ug\rangle|\le 2^{-1}\|D_{\sqrt{q}}Ug\|_2,
$$ 
where 
$\|D_{\sqrt{q}} Ug\|_2 \le \|Ug\|_\infty \le \|U\|_\infty \|g\|_1$, we obtain 
 the following inequality for each coordinate:
\begin{align}\label{eq:upper_bound_cf_ch}
  \begin{bmatrix}
  |c_f|\\
  |c_h|
 \end{bmatrix} \le  (1+\|U\|_\infty^2)^2 \begin{bmatrix}
 1  & 2^{-1}\\
2^{-1} & 1
 \end{bmatrix} \begin{bmatrix}
 1\\
 2^{-1}
 \end{bmatrix} \cdot \|U\|_\infty \|g\|_1 = (1+\|U\|_\infty^2)^2 \|U\|_\infty \|g\|_1 \begin{bmatrix}
  5/4\\
  1
 \end{bmatrix}
\end{align}

Let us finish the proof. By the second equation of \eqref{eq:system_uf_uh_S_T}, 
\begin{align*}
   u_h &= c_h (I- S) \sqrt{p} -c_f  S A^\star \sqrt{q} +  S A^\star D_{\sqrt{q}}Ug. 
\end{align*}
Taking $\|\cdot\|_2$, using $\|I- S\|_{2\to 2}\le 1$ from \eqref{eq:spectrum_S}, $\| S A^\star\|_{2\to 2}\le 1/2$ and $  \|D_{\sqrt{q}} Ug\|_2\le \|U\|_\infty \|g\|_1$, we obtain
\begin{align*}
  \|u_h\|_2\le |c_h| + 2^{-1} |c_f| + 2^{-1} \|U\|_\infty \|g\|_1.
\end{align*}
Combined with the upper bound $  \|f\|_1 \le \|u_h\|_2 + |c_h| + \|g\|_1$ from \eqref{eq:upper_bound_f_by_uh_ch}, 
$$
\|f\|_1\le |c_h| + 2^{-1}|c_f| + 2^{-1} \|U\|_\infty \|g\|_1 + |c_h| + \|g\|_1 = [2^{-1}, 2]\begin{bmatrix}
  |c_f|\\
  |c_h|
\end{bmatrix} + (2^{-1}\|U\|_\infty  + 1)\|g\|_1
$$
Finally, substituting the upper bounds of $[|c_f|, |c_h|]$ in \eqref{eq:upper_bound_cf_ch}, with $[2^{-1}, 2] \cdot [5/4, 1]^\top = 21/8$, we obtain
$$
\|f\|_1 \le \Bigl(\frac{21}{8}(1+\|U\|_\infty^2)^2 \|U\|_\infty + 1 + \frac12 \|U\|_\infty\Bigr) \|g\|_1.
$$
Expanding the RHS, one can find an absolute constant $C$ such that $\|f\|_1 \le C (1+\|U\|_\infty^5)\|g\|_1$. Recalling $f=(I + J_p U^\star J_q U)^{-1} g$, the proof is complete. 


\subsection{Uniform Fr\'echet differentiability}\label{app:uniform_frechet_differentiability}
We have seen in \Cref{app:frechet_differentiability} that MCHF and NLHF are both Fr\'echet differentiable; there exists derivatives $D p_*(U)$ for $*\in \{\mc, \nl\}$ such that
\begin{align*}
  \forall U\in L^\infty(\mu_\rf\otimes\mu_\rf), \quad 
 \lim_{\|E\|_\infty\to 0}\frac{\|p_*(U+E)-p_*(U)-Dp_*(U)[E]\|_1}{\|E\|_\infty} = 0.
\end{align*}
Now we aim to make this convergence result uniform for all $U$ such that $\|U\|_\oplus\le R$ for a fixed constant $R$, i.e., we aim to show the above convergence with $\sup_{\|U\|_\oplus\le R}$ inside $\lim_{\|E\|_\infty\to 0}$. 

\begin{lemma}\label{lemma:J_p_lip}
$\|J_p-J_q\|_{\infty\to 1}= \sup_{\|f\|_\infty \le 1} \|(J_p-J_q) f\|_1 
\le 3\|p-q\|_1$ for any $p, q\in \Delta^\infty$
\end{lemma}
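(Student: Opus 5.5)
The plan is a direct decomposition of $J_p-J_q$ into a diagonal part and a rank-one part, each bounded separately in the $L^\infty\to L^1$ norm. Recall $J_p f = p\odot f - p\langle p,f\rangle$, where $\langle p,f\rangle=\int\mu_\rf(dz)p(z)f(z)$. For $f\in L^\infty$ with $\|f\|_\infty\le 1$, add and subtract $p\langle q,f\rangle$ in the rank-one term:
\begin{align*}
(J_p-J_q)f = (p-q)\odot f - p\,\langle p-q,f\rangle - (p-q)\,\langle q,f\rangle .
\end{align*}
This splits the difference into three pieces, each containing exactly one factor of $p-q$.

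Each piece is then bounded by the triangle inequality in $L^1$ together with Hölder's inequality.
\begin{itemize}
\item For the first piece, $\|(p-q)\odot f\|_1\le \|f\|_\infty\|p-q\|_1\le\|p-q\|_1$.
\item For the second piece, $\|p\|_1=1$ because $p\in\Delta^\infty$, so $\|p\,\langle p-q,f\rangle\|_1 = |\langle p-q,f\rangle|\le \|p-q\|_1\|f\|_\infty$.
\item For the third piece, $|\langle q,f\rangle|\le \|q\|_1\|f\|_\infty\le 1$, so $\|(p-q)\langle q,f\rangle\|_1\le\|p-q\|_1$.
\end{itemize}
Summing the three bounds gives $\|(J_p-J_q)f\|_1\le 3\|p-q\|_1$. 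Taking the supremum over $\|f\|_\infty\le1$ yields the claim.

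The only point needing care is that $J_p$ was introduced as an operator on $L^2$, while here it acts on $f\in L^\infty$. This causes no difficulty: $L^\infty\subset L^2$ because $\mu_\rf$ is a probability measure, and every quantity above is finite since $p,q\in L^\infty\subset L^1$. There is no genuine obstacle. The constant $3$ is simply the count of terms, and it could be improved to $2$ by centering $f$, but that is not needed here.
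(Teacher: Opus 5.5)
Your proof is correct and is essentially the paper's argument. The paper uses the mirror-image split $(J_p-J_q)f=(p-q)\odot f-(p-q)\langle p,f\rangle-q\langle p-q,f\rangle$, adding and subtracting $q\langle p,f\rangle$ rather than $p\langle q,f\rangle$, and then bounds each of the three terms by $\|p-q\|_1\|f\|_\infty$ via H\"older and $\|p\|_1=\|q\|_1=1$, exactly as you do.

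Your closing aside is loose, though the lemma does not depend on it. Replacing $f$ by $f-c$ still leaves each of the three terms bounded only by $\|p-q\|_1$ when $\|f\|_\infty\le 1$, so centering alone does not give the constant $2$. The constant $2$ does hold, but it needs a finer argument, for example via the representation $\langle g,(J_p-J_q)f\rangle=\operatorname{Cov}_p(f,g)-\operatorname{Cov}_q(f,g)$.
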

\begin{proof}
For any $f\in L^\infty$, writing $J_p f= p\odot f - p \cdot \langle p, f\rangle$, 
\begin{align*}
(J_p-J_q)f
&=
(p-q)\odot f
-
(p-q)\cdot \langle p, f\rangle - 
q \cdot \langle p-q, f\rangle
\end{align*}
Using H\"{o}lder's inequality with $\|p\|_1=\|q\|_1 =1$, one can show that the $L^1$ norm of each term on RHS is bounded by $\|p-q\|_1\|f\|_\infty$. Thus, by the triangle inequality, complete the proof. 
\end{proof}

\subsubsection{MCHF}

\begin{lemma}\label{lemma:derivative_lipschitz_continuous_mc}
  The Fr\'echet derivative $D p_\mc(U)$ is locally Lipschitz in the sense that for all $U, V\in L^\infty$, 
  $$
    \sup_{\|E\|_\infty \le 1}\|D p_\mc(U)[E] - D p_\mc(V)[E]\|_1 \le 5 \cdot \exp(2\|U\|_\oplus + 2\|V\|_\oplus)\cdot \|U-V\|_\infty
  $$
\end{lemma}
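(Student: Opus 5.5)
We use the explicit formula of Theorem \ref{theorem:frechet_derivative_mc}: $Dp_\mc(U)[E]=R_U\, b_U[E]$, where $R_U=((I-P_U^\star)|_{L_0^1})^{-1}$ and
$$
b_U[E](y)=\int\mu_\rf(dx)\,p_U(x)\,DP_U[E](x,y), \qquad p_U=p_\mc(U),
$$
with $DP_U[E]$ given by \eqref{eq:derivative_P_U}. We split the difference as
$$
R_Ub_U[E]-R_Vb_V[E]=(R_U-R_V)b_U[E]+R_V\bigl(b_U[E]-b_V[E]\bigr),
$$
and bound each term in $L^1$ separately. The ingredients are the resolvent bound \eqref{eq:I_P_inverse_operator_norm}, $\|R_U\|_{L_0^1\to L_0^1}\le e^{2\|U\|_\oplus}$, and its analogue for $V$. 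We also use the resolvent identity
$$
R_U-R_V=R_U(P_U^\star-P_V^\star)R_V \quad\text{on } L_0^1 .
$$
Note that $P_U^\star-P_V^\star$ maps $L_0^1$ into $L_0^1$ and in fact all of $L^1$ into $L_0^1$, since both kernels have unit row sums.

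Step 1 collects the elementary bounds. First, $\|(P_U^\star-P_V^\star)h\|_1\le\|U-V\|_\infty\|h\|_1$. This follows from Fubini and Lemma \ref{lemma:lipschitz_softmax} applied row-wise, as in the proof of Theorem \ref{theorem:stability_iteration_mc}. Second, $\|b_U[E]\|_1\le\|E\|_\infty$. Indeed, for each $x$, $\int\mu_\rf(dy)|DP_U[E](x,y)|=\E_{P_U(x,\cdot)}|E-\E E|\le\|E\|_\infty$, in the spirit of Lemma \ref{lemma:LJ_p_infty_1_norm}; integrating against $p_U\,d\mu_\rf$ gives the claim. Third, stability of the stationary density: Corollary \ref{corollary:stability_mc} gives $\|p_U-p_V\|_1\le\|U-V\|_\infty/(1-c)$. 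Since $1-c\ge e^{-2\|U\|_\oplus}\vee e^{-2\|V\|_\oplus}$, this is at most $e^{2\|U\|_\oplus}\|U-V\|_\infty$. The same bound with $e^{2\|V\|_\oplus}$ also holds, so we may pick whichever exponent is convenient.

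Step 2 bounds the first term by
$$
\|R_U\|\,\|U-V\|_\infty\,\|R_V\|\,\|b_U[E]\|_1\le e^{2\|U\|_\oplus+2\|V\|_\oplus}\|U-V\|_\infty\|E\|_\infty .
$$
For the second term we write
$$
b_U-b_V=\int\mu_\rf(dx)(p_U-p_V)(x)\,DP_U[E](x,\cdot)+\int\mu_\rf(dx)\,p_V(x)\bigl(DP_U[E]-DP_V[E]\bigr)(x,\cdot).
$$
The first piece is at most $\|p_U-p_V\|_1\|E\|_\infty$. For the second piece, fix $x$ and view $DP_U[E](x,\cdot)=J_{P_U(x,\cdot)}E(x,\cdot)$ as the softmax Jacobian. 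Lemma \ref{lemma:J_p_lip} then gives the row-wise bound $3\|P_U(x,\cdot)-P_V(x,\cdot)\|_1\|E\|_\infty\le3\|U-V\|_\infty\|E\|_\infty$. Multiplying by $\|R_V\|\le e^{2\|V\|_\oplus}$ and using the stationary-density bound with the $e^{2\|U\|_\oplus}$ exponent, the second term is at most
$$
e^{2\|V\|_\oplus}\bigl(e^{2\|U\|_\oplus}+3\bigr)\|U-V\|_\infty\|E\|_\infty .
$$
Summing the two terms and using $e^{2\|V\|_\oplus}\cdot 3\le 3e^{2\|U\|_\oplus+2\|V\|_\oplus}$ yields the constant $1+1+3=5$.

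The main bookkeeping obstacle is making sure every operator acts on $L_0^1$, so that the seminorm-based resolvent bound applies rather than a bound in terms of $\|U\|_\infty$. For this we check two facts. First, $b_U[E]\in L_0^1$, because $\int\mu_\rf(dy)\,DP_U[E](x,y)=0$ for each $x$. Second, $(P_U^\star-P_V^\star)R_V(\cdot)\in L_0^1$, which holds because both kernels have unit row sums. We also need Lemma \ref{lemma:J_p_lip} to apply row-wise with $p=P_U(x,\cdot)$ and $q=P_V(x,\cdot)$, which is fine since both are densities in $\Delta^\infty$.
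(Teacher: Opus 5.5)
Your proof is correct and follows the paper's argument. It uses the same ingredients: the resolvent identity with the Neumann-series bound $e^{2\|\cdot\|_\oplus}$, the row-wise softmax Lipschitz bound for $P_U^\star-P_V^\star$, and the same two-piece split of the source term, handled by Corollary \ref{corollary:stability_mc} and Lemma \ref{lemma:J_p_lip}, which gives the constant $1+1+3=5$. The only difference is immaterial: you pair the resolvent difference with $b_U[E]$ and apply $R_V$ to $b_U[E]-b_V[E]$, whereas the paper pairs it with $h_{V,E}$ and applies $(A_U|_{L_0^1})^{-1}$ to $h_{U,E}-h_{V,E}$.
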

\begin{proof}
Let $A_U$ and $h_U$ as 
$$
A_U = I-P_U^\star, \quad h_{U, E}(y) = \int \mu_\rf(dx)
p_\mc(U)(x) [J_{P_U(x, \cdot)} E(x, \cdot)](y)
$$
so that $Dp_\mc(U)[E]
= \bigl(A_U\mid_{L_0^1}\bigr)^{-1} h_{U, E}$. 
We use the decomposition
\begin{align*}
Dp(U)[E]-Dp(V)[E]
=
\bigl((A_U|_{L_0^1})^{-1}-(A_V|_{L_0^1})^{-1}\bigr)h_{V,E}
+
(A_U|_{L_0^1})^{-1}(h_{U,E}-h_{V,E})
\end{align*}
First, we note 
\begin{align*}
(A_U|_{L_0^1})^{-1}-(A_V|_{L_0^1})^{-1}
=
(A_U|_{L_0^1})^{-1}(A_V-A_U)(A_V|_{L_0^1})^{-1}.
\end{align*}
Recall $\|(A_U\mid_{L_0^1})^{-1}\|_{L_0^1\to L_0^1}\le e^{2\|U\|_\oplus}$ from \eqref{eq:I_P_inverse_operator_norm}, and $\|A_U-A_V\|_{1\to 1}
=\|P_V^\star-P_U^\star\|_{1\to 1}$. Here, for all $h\in L^1$, 
\begin{align*}
  \|(P_V^\star -  P_U^\star) h\|_1  &= \|\int \mu_\rf(dx) h(x)(P_V(x, \cdot)-P_U(x, \cdot))\|_1\le \|h\|_1 \esssup_x \|P_V(x, \cdot)-P_U(x, \cdot)\|_1.
\end{align*}
Then, using Lemma \ref{lemma:lipschitz_softmax}, we have 
$$
\esssup_x \|P_V(x, \cdot)-P_U(x, \cdot)\|_1 \le \esssup_x \|V(x, \cdot)-U(x, \cdot)\|_\infty = \|U-V\|_\infty
$$
and hence $\|A_U-A_V\|_{1\to 1}
=
\|P_V^\star-P_U^\star\|_{1\to 1} 
\le
\|U-V\|_\infty$. 
Therefore, 
we obtain
\begin{align*}
\Bigl\|
\bigl((A_U|_{L_0^1})^{-1}-(A_V|_{L_0^1})^{-1}\bigr)\mid_{L_0^1}
\Bigr\|_{1\to 1}
\le
\exp(2\|U\|_\oplus)
\exp(2\|V\|_\oplus)
\|U-V\|_\infty
\end{align*}

Next, we bound $h_{V,E}$. Using $\|J_p\|_{\infty\to 1}\le 1$ from Lemma \ref{lemma:LJ_p_infty_1_norm}, whenever $\|E\|_\infty \le 1$, 
\begin{align*}
\|h_{V,E}\|_1
\le
\int \mu_\rf(dx)p_\mc(V)(x)
\|J_{P_V(x,\cdot)}E(x,\cdot)\|_1 \le \esssup_x \|J_{P_V(x,\cdot)}E(x,\cdot)\|_1
\le 1.
\end{align*}
Hence, 
\begin{align*}
\bigl\|
\bigl((A_U|_{L_0^1})^{-1}-(A_V|_{L_0^1})^{-1}\bigr)h_{V,E}
\bigr\|_1 \le
\exp(2\|U\|_\oplus+2\|V\|_\oplus)
\|U-V\|_\infty .
\end{align*}

It remains to bound $h_{U,E}-h_{V,E}$. We write
\begin{align*}
h_{U,E}-h_{V,E}
&=
\int \mu_\rf(dx)
\{p_\mc(U)(x)-p_\mc(V)(x)\}
J_{P_U(x,\cdot)}E(x,\cdot) \\
&+
\int \mu_\rf(dx)
p_\mc(V)(x)
\{J_{P_U(x,\cdot)}-J_{P_V(x,\cdot)}\}E(x,\cdot).
\end{align*}
For the first term, using again $\esssup_x \|J_{P_U(x,\cdot)}E(x,\cdot)\|_1\le \|E\|_\infty\le 1$, 
we get
\begin{align*}
\Bigl\|
\int \mu_\rf(dx)
\{p_\mc(U)(x)-p_\mc(V)(x)\}
J_{P_U(x,\cdot)}E(x,\cdot)
\Bigr\|_1
\le
\|p_\mc(U)-p_\mc(V)\|_1.
\end{align*}
By Corollary \ref{corollary:stability_mc},
\begin{align*}
\|p_\mc(U)-p_\mc(V)\|_1
\le
\exp\{2(\|U\|_\oplus\wedge\|V\|_\oplus)\}
\|U-V\|_\infty.
\end{align*}
Thus, the first term is bounded by $\exp\{2(\|U\|_\oplus\wedge\|V\|_\oplus)\}
\|U-V\|_\infty$.
For the second term, using 
$
\|J_p-J_q\|_{\infty\to 1}
\le
3\|p-q\|_1$ in Lemma \ref{lemma:J_p_lip}, 
\begin{align*}
\Bigl\|
\int \mu_\rf(dx)
p_\mc(V)(x)
\{J_{P_U(x,\cdot)}-J_{P_V(x,\cdot)}\}E(x,\cdot)
\Bigr\|_1 
&\le 3 \esssup_x \|P_U(x,\cdot)-P_V(x,\cdot)\|_1
\end{align*}
Using again $\esssup_x \|P_U(x,\cdot)-P_V(x,\cdot)\|_1\le \|U-V\|_\infty$, the second term is bounded by $3\|U-V\|_\infty$. 
Combining the two estimates gives
\begin{align*}
\|h_{U,E}-h_{V,E}\|_1
\le
\exp\{2(\|U\|_\oplus\wedge\|V\|_\oplus)\}
\|U-V\|_\infty
+
3\|U-V\|_\infty. 
\end{align*}
Absorbing constants, we may write
\begin{align*}
\|h_{U,E}-h_{V,E}\|_1
\le
4\exp\{2(\|U\|_\oplus\wedge\|V\|_\oplus)\}
\|U-V\|_\infty.
\end{align*}
Therefore,
\begin{align*}
\|(A_U|_{L_0^1})^{-1}(h_{U,E}-h_{V,E})\|_1
&\le
\exp(2\|U\|_\oplus)
\|h_{U,E}-h_{V,E}\|_1 \le
4\exp(2\|U\|_\oplus+2\|V\|_\oplus)
\|U-V\|_\infty.
\end{align*}

Putting the two bounds together, for every $\|E\|_\infty\le 1$,
\begin{align*}
\|Dp_\mc(U)[E]-Dp_\mc(V)[E]\|_1
\le
(1+4) \exp(2\|U\|_\oplus+2\|V\|_\oplus)
\|U-V\|_\infty.
\end{align*}
Taking the supremum over $\|E\|_\infty\le1$ completes the proof.
\end{proof}

\begin{corollary}[Uniform Fr\'echet differentiability for MCHF]\label{corollary:uniform_frechet_differentiability_mc}
For any constant $R\ge 0$,
\begin{align*}
\lim_{\|E\|_\infty\to 0}
\sup_{\|U\|_\oplus\le R}
\frac{
\|p_\mc(U+E)-p_\mc(U)-Dp_\mc(U)[E]\|_1
}{
\|E\|_\infty
}
=0.
\end{align*}
\end{corollary}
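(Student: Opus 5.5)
The plan is to reduce uniform Fréchet differentiability to the uniform Lipschitz continuity of the derivative established in Lemma \ref{lemma:derivative_lipschitz_continuous_mc}, via the mean value theorem in integral form. Fix $R\ge 0$, $U$ with $\|U\|_\oplus\le R$, and $E$ with $\|E\|_\infty\le 1$. For $s\in[0,1]$ set $U_s=U+sE$. By Theorem \ref{theorem:frechet_derivative_mc}, the map $U\mapsto p_\mc(U)$ is continuously Fréchet differentiable from $L^\infty(\mu_\rf\otimes\mu_\rf)$ to $L^1$. Hence $s\mapsto p_\mc(U_s)$ is $C^1$ from $[0,1]$ into $L^1$, with derivative $Dp_\mc(U_s)[E]$. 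The fundamental theorem of calculus in the Banach space $L^1$ then gives
\begin{align*}
p_\mc(U+E)-p_\mc(U)-Dp_\mc(U)[E]=\int_0^1\bigl(Dp_\mc(U_s)[E]-Dp_\mc(U)[E]\bigr)\,ds .
\end{align*}

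Next I take $L^1$ norms and apply Lemma \ref{lemma:derivative_lipschitz_continuous_mc}, which by linearity in $E$ reads $\|Dp_\mc(U_s)[E]-Dp_\mc(U)[E]\|_1\le 5e^{2\|U_s\|_\oplus+2\|U\|_\oplus}\|U_s-U\|_\infty\|E\|_\infty$. Since $\|\cdot\|_\oplus$ is a seminorm dominated by $\|\cdot\|_\infty$, we have $\|U_s\|_\oplus\le \|U\|_\oplus+s\|E\|_\infty\le R+1$. Moreover $\|U_s-U\|_\infty=s\|E\|_\infty$. Integrating over $s$ gives
\begin{align*}
\|p_\mc(U+E)-p_\mc(U)-Dp_\mc(U)[E]\|_1\le \tfrac52 e^{4R+2}\|E\|_\infty^2 .
\end{align*}
Dividing by $\|E\|_\infty$, the supremum over $\|U\|_\oplus\le R$ is at most $\tfrac52 e^{4R+2}\|E\|_\infty$, which tends to $0$. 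This is in fact a quantitative remainder bound, stronger than the stated limit.

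The only delicate point is justifying the integral identity: the Bochner integral in $L^1$ requires continuity of $s\mapsto Dp_\mc(U_s)[E]$ in $L^1$. This continuity follows directly from Lemma \ref{lemma:derivative_lipschitz_continuous_mc} itself (or from the $C^1$ statement of Theorem \ref{theorem:frechet_derivative_mc}). As an alternative that avoids vector-valued integration, I can pair with arbitrary $\phi\in L^\infty$ and apply the scalar mean value theorem to $s\mapsto\langle\phi,p_\mc(U_s)\rangle$, then take the supremum over $\|\phi\|_\infty\le1$.
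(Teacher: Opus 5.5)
Your proposal is correct and follows the paper's proof essentially verbatim. It applies the fundamental theorem of calculus in $L^1$ along $s\mapsto U+sE$, uses the Lipschitz bound of Lemma \ref{lemma:derivative_lipschitz_continuous_mc} together with $\|U+sE\|_\oplus\le\|U\|_\oplus+s\|E\|_\infty$, and integrates in $s$ to obtain an $O(\|E\|_\infty^2)$ remainder uniform over $\|U\|_\oplus\le R$. The only cosmetic difference is that you restrict to $\|E\|_\infty\le 1$ and get the constant $\tfrac52 e^{4R+2}$, whereas the paper keeps $\tfrac52\exp(4R+2\|E\|_\infty)$.
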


\begin{proof}
Fix $U\in L^\infty(\mu_\rf\otimes\mu_\rf)$. Since $t \mapsto p_\mc(U+tE)$ is $C^1$ from $\R$ to $L^1$ by Theorem \ref{theorem:frechet_derivative_mc},
the fundamental theorem of calculus in Banach spaces gives
\begin{align*}
p_\mc(U+E)-p_\mc(U)-Dp_\mc(U)[E]
=
\int_0^1
\{Dp_\mc(U+tE)-Dp_\mc(U)\}[E]dt .
\end{align*}
Taking the $L^1$-norm yields
\begin{align*}
\|p_\mc(U+E)-p_\mc(U)-Dp_\mc(U)[E]\|_1 \le 
\int_0^1
\|Dp_\mc(U+tE)[E]-Dp_\mc(U)[E]\|_1dt.
\end{align*}
Applying
Lemma \ref{lemma:derivative_lipschitz_continuous_mc} with $V=U+tE$, we obtain
\begin{align*}
\|Dp_\mc(U+tE)[E]-Dp_\mc(U)[E]\|_1
&\le
5 \|E\|_\infty
\exp\bigl(2\|U+tE\|_\oplus+2\|U\|_\oplus\bigr)
\|tE\|_\infty\\
&\le 5 t
\exp\bigl(4\|U\|_\oplus +2\|E\|_\infty\bigr)
\|E\|_\infty^2, 
\end{align*}
for all $t\in [0,1]$. Here, the second inequality follows from the triangle inequality for the seminorm $\|\cdot\|_\oplus$ and the property $\|\cdot\|_\oplus \le \|\cdot\|_\infty$. 
Hence, taking the integration $\int_0^1 dt$, using $\int_0^1 t dt = 1/2$, we get 
\begin{align*}
\|p_\mc(U+E)-p_\mc(U)-Dp_\mc(U)[E]\|_1
&\le(5/2)\cdot 
\exp\bigl(4\|U\|_\oplus+2\|E\|_\infty\bigr)
\|E\|_\infty^2 .
\end{align*}
Therefore, 
\begin{align*}
\sup_{\|U\|_\oplus\le R}
\frac{
\|p_\mc(U+E)-p_\mc(U)-Dp_\mc(U)[E]\|_1
}{
\|E\|_\infty
}
\le
\frac{5}{2}
\exp\bigl(4R+2\|E\|_\infty\bigr)
\|E\|_\infty .
\end{align*}
Letting $\|E\|_\infty\to0$ proves the claim.
\end{proof}

\subsubsection{NLHF}
\begin{lemma}\label{lemma:derivative_lipschitz_continuous_nl}
There exists a universal constant $c > 0$ such that for any $U, V\in L^\infty(\mu_\rf\otimes\mu_\rf)$, 
\begin{align*}
&\sup_{\|E\|_\infty\le 1}
\|Dp_\nl(U)[E]-Dp_\nl(V)[E]\|_1\le 
c (1 +\|U\|_\oplus^c + \|V\|_\oplus^c)\|U-V\|_\infty.
\end{align*}
\end{lemma}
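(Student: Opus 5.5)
The plan mirrors the proof of Lemma \ref{lemma:derivative_lipschitz_continuous_mc}, with the MCHF Neumann-series bound replaced by the $L^1$ resolvent bound of Lemma \ref{lemma:Schur_complement_l1_norm}. The key device is the one used in the proof of \eqref{eq:derivative_bound}. On $L_0^2$ we may replace $U$ by $U^{g\oplus f}=U-g\oplus f$ inside every product $J_qU$ and $J_pU^\star$. Hence the polynomial factors in $\|U\|_\infty$ become factors in $\|U\|_\oplus$ after taking $\inf_{g,f}$.

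We write $Dp_\nl(U)[E]=M_U^{-1}b_{U,E}$ with
$$M_U=I+J_{p_U}U^\star J_{q_U}U,\qquad b_{U,E}=J_{p_U}E^\star q_U-J_{p_U}U^\star J_{q_U}Ep_U,$$
where $(p_U,q_U)=(p_\nl(U),q_\nl(U))$. We then decompose
$$Dp_\nl(U)[E]-Dp_\nl(V)[E]=M_U^{-1}(M_V-M_U)M_V^{-1}b_{V,E}+M_U^{-1}(b_{U,E}-b_{V,E}).$$
All vectors involved ($b$, and the images of $J_p$) lie in $L_0^2$.

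Now fix a single pair $(g,f)$ and subtract the same $g\oplus f$ from both $U$ and $V$. This is legitimate because the operators only act on $L_0^2$, and $U^{g\oplus f}-V^{g\oplus f}=U-V$. Lemma \ref{lemma:Schur_complement_l1_norm} then gives $\|M_U^{-1}\|_{1\to1}\le C(1+\|U^{g\oplus f}\|_\infty^5)$. Lemma \ref{lemma:LJ_p_infty_1_norm} together with Hölder's inequality gives $\|b_{V,E}\|_1\le(1+\|V^{g\oplus f}\|_\infty)\|E\|_\infty$. We choose $(g,f)$ nearly optimal for $U$ and use $\|V\|_\oplus$-type bounds via $\|V^{g\oplus f}\|_\infty\le\|U^{g\oplus f}\|_\infty+\|U-V\|_\infty$. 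The case $\|U-V\|_\infty\ge1$ is trivial: bound each derivative separately, using the proof of \eqref{eq:derivative_bound} in the form $\|Dp_\nl(U)[E]\|_1\le C(1+\|U\|_\oplus^6)\|E\|_\infty$. So we may assume $\|U-V\|_\infty\le1$, in which case $\|V^{g\oplus f}\|_\infty\lesssim 1+\|U\|_\oplus$.

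The difference terms are estimated in $L^\infty\to L^1$ or $L^1\to L^1$ norms. We expand $M_V-M_U$ telescopically into pieces of three kinds:
\begin{itemize}
\item $(J_{p_V}-J_{p_U})(\cdot)$, controlled by Lemma \ref{lemma:J_p_lip} through $\|p_U-p_V\|_1$;
\item $(J_{q_V}-J_{q_U})(\cdot)$, controlled in the same way through $\|q_U-q_V\|_1$;
\item $J(U-V)(\cdot)$, controlled by Hölder's inequality via $\|U-V\|_\infty$.
\end{itemize}
Each application uses $\|Kh\|_\infty\le\|K\|_\infty\|h\|_1$ to pass from $L^1$ to $L^\infty$ before a $J$ brings us back to $L^1$. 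For the distances $\|p_U-p_V\|_1$ and $\|q_U-q_V\|_1$ we invoke Theorem \ref{theorem:stability_nl}, which gives $\lesssim(1+\|U\|_\oplus^6+\|V\|_\oplus^6)\|U-V\|_\infty$. The bound for $q$ follows from that for $p$ and Lemma \ref{lemma:ell_p_smoothness}-(1) (together with the Lipschitz continuity of softmax in $U$). The difference $b_{U,E}-b_{V,E}$ is handled identically. Multiplying all factors gives a polynomial in $\|U\|_\oplus,\|V\|_\oplus$ times $\|U-V\|_\infty$. Every polynomial of degree at most $d$ is bounded by $c(1+x^c+y^c)$ for a large universal $c$, which yields the stated form.

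The main obstacle is making the chain of operator norms close in $L^1$. The factor $M_V^{-1}b_{V,E}$ is only known in $L^1$ by Lemma \ref{lemma:Schur_complement_l1_norm}, so every perturbation piece of $M_V-M_U$ must map $L^1$ to $L^1$. This works because each piece begins with a $U$ or $V$ operator, which maps $L^1\to L^\infty$, and ends with a $J$, which maps $L^\infty\to L^1$. One must also check carefully that the $g\oplus f$ substitution is valid on the range of every intermediate vector, i.e.\ that each vector has mean zero. Here we use $J_p\mathbf 1=0$ and the fact that the range of $J_p$ lies in $L_0$.
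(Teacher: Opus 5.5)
Your proposal is correct and follows essentially the same route as the paper. The paper uses the same decomposition $M_U^{-1}(M_V-M_U)M_V^{-1}b_{V,E}+M_U^{-1}(b_{U,E}-b_{V,E})$ and the $L^1$ resolvent bound of Lemma \ref{lemma:Schur_complement_l1_norm}. It also uses the $J$-bounds of Lemmas \ref{lemma:LJ_p_infty_1_norm} and \ref{lemma:J_p_lip}, made $\|\cdot\|_\oplus$-sensitive by removing additive parts on mean-zero inputs, and Theorem \ref{theorem:stability_nl} for the density differences. The differences are cosmetic. The paper takes the infimum over $(g,f)$ separately in each factor bound, so it needs no single near-optimal pair and no case split on $\|U-V\|_\infty$ (yours is harmless). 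It also gets $\|q_U-q_V\|_1$ by applying Theorem \ref{theorem:stability_nl} to $-U^\star$, rather than through Lemma \ref{lemma:ell_p_smoothness}-(1) plus Lipschitzness of softmax as you do; both are valid.
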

\begin{proof}
The proof follows the same structure as that of Lemma \ref{lemma:derivative_lipschitz_continuous_mc}. 
Write
$p_U=p_{\nl}(U)$, $q_U=q_{\nl}(U)$ 
and define
$$
A_U
=
I+J_{p_U}U^\star J_{q_U}U,
\qquad
h_{U,E}
=
J_{p_U}E^\star q_U
-
J_{p_U}U^\star J_{q_U}E p_U .
$$
By Theorem \ref{theorem:frechet_derivative_nl}, 
$
Dp_{\nl}(U)[E]=A_U^{-1}h_{U,E},
$
so we decompose
\begin{align*}
Dp_{\nl}(U)[E]-Dp_{\nl}(V)[E]
&=
A_U^{-1}(A_V-A_U)A_V^{-1}h_{V,E}
+
A_U^{-1}(h_{U,E}-h_{V,E}),
\end{align*}
where we used the identity
$
A_U^{-1}-A_V^{-1}
=
A_U^{-1}(A_V-A_U)A_V^{-1}.
$

We now collect the bounds needed for the two terms. First, by Lemma \ref{lemma:LJ_p_infty_1_norm} and Lemma \ref{lemma:J_p_lip}, for any $K\in L^\infty(\mu_\rf\otimes \mu_\rf)$ and any $p, q\in \Delta^\infty$, we have 
\begin{align*}
  \forall h\in L^1, \quad \|J_p K h\|_1 &\le \|K h\|_\infty \le \|K\|_\infty \|h\|_1 \\
  \forall h\in L^1, \quad \|(J_p-J_q) K h\|_1 &\le 3 \|p-q\|_1 \|Kh\|_\infty \le 3 \|p-q\|_1  \|K\|_\infty \|h\|_1. 
\end{align*}
Moreover, if $h\in L_0^1$, with additive parts removed
as in the proof of Theorem \ref{theorem:stability_nl}, the above bounds hold with $\|K\|_\infty$ replaced by $\|K\|_\oplus$:
\begin{align*}
  \forall h\in L_0^1, \quad \|J_p K h\|_1 \le \|K\|_\oplus \|h\|_1, \quad \|(J_p-J_q) K h\|_1 \le 3 \|p-q\|_1 \|K\|_\oplus \|h\|_1,
\end{align*}
Second, by Theorem \ref{theorem:stability_nl}, noting that the same bound applied to the dual variable $\mu_\nl$ by replacing $U$ by $-U^\star$ in the proof, 
there exists a universal constant $c>0$ such that
$$
\|p_U-p_V\|_1+\|q_U-q_V\|_1
\le
c\bigl(1+\|U\|_\oplus^6+\|V\|_\oplus^6\bigr)\|U-V\|_\infty.
$$
Third, by the $L^1$-bound of the inverse operator from Lemma \ref{lemma:Schur_complement_l1_norm}, with additive parts removed
as in the proof of Theorem \ref{theorem:stability_nl}, 
$$
\forall h\in L_0^1, \quad
\|A_U^{-1}h\|_1
\le
c(1+\|U\|_\oplus^5)\|h\|_1,
\qquad
\|A_V^{-1}h\|_1
\le
c(1+\|V\|_\oplus^5)\|h\|_1. 
$$
Notice that $A_U$ maps $L^1_0$ into itself
because $J_{p_U}$ always outputs a mean-zero function. Also,
$h_{U,E}$ and $h_{V,E}$ belong to $L^1_0$, since they are in
the ranges of $J_{p_U}$ and $J_{p_V}$, respectively.

Given these bounds, we next claim that
$$
\|(A_U-A_V)\mid_{L_0^1}\|_{1\to 1}
\le
c\bigl(1+\|U\|_\oplus^c+\|V\|_\oplus^c\bigr)\|U-V\|_\infty .
$$
Indeed, for any $h\in L^1_0$,
$$
(A_U-A_V)h
=
J_{p_U}U^\star J_{q_U}Uh
-
J_{p_V}V^\star J_{q_V}Vh .
$$
Expanding this difference into the four natural terms and using the bounds mentioned above, we get
$$
\forall h\in L_0^1, \quad 
\|(A_U-A_V)h\|_1
\le
c\bigl(1+\|U\|_\oplus^c+\|V\|_\oplus^c\bigr)
\|U-V\|_\infty\|h\|_1. 
$$
Similarly,
$$
\|h_{V,E}\|_1
\le \|J_{p_V}E^\star q_V\|_1 +
\|J_{p_V}V^\star J_{q_V}E p_V\|_1\le  
(1+\|V\|_\oplus)\|E\|_\infty ,
$$
and
$$
\|h_{U,E}-h_{V,E}\|_1
\le
c\bigl(1+\|U\|_\oplus^c+\|V\|_\oplus^c\bigr)
\|U-V\|_\infty\|E\|_\infty .
$$
The second bound follows by expanding
\begin{align*}
h_{U,E}-h_{V,E}
&=
\{J_{p_U}E^\star q_U-J_{p_V}E^\star q_V\}
-
\{J_{p_U}U^\star J_{q_U}E p_U
-
J_{p_V}V^\star J_{q_V}E p_V\},
\end{align*}
and applying the same ingredients used to prove the bound on $\|(A_U-A_V)\mid_{L_0^1}\|_{1\to 1}$.

Combining these estimates, we obtain
\begin{align*}
&\|Dp_{\nl}(U)[E]-Dp_{\nl}(V)[E]\|_1\\
&\le
\|(A_U\mid_{L_0^1})^{-1}\|_{1\to 1} \Bigl(
\|(A_U-A_V)\mid_{L_0^1}\|_{1\to 1}
\|(A_V\mid_{L_0^1})^{-1}\|_{1\to 1} \|h_{V,E}\|_1  
+
\|h_{U,E}-h_{V,E}\|_1 \Bigr)\\
&\le
c\bigl(1+\|U\|_\oplus^c+\|V\|_\oplus^c\bigr)
\|U-V\|_\infty\|E\|_\infty.
\end{align*}
Taking the supremum over $\|E\|_\infty\le 1$ completes the proof for $D p_\nl$. 
\end{proof}

\begin{corollary}[Uniform Fr\'echet differentiability for NLHF]\label{corollary:uniform_frechet_differentiability_nl}
  For any fixed constant $R\ge 0$, 
$$
\lim_{\|E\|_\infty\to 0} \sup_{\|U\|_\oplus \le R}
\frac{
\|p_\nl(U+E)-p_\nl(U)-Dp_\nl(U)[E]\|_1}{\|E\|_\infty} 
= 0
$$
\end{corollary}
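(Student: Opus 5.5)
The argument follows the MCHF case, Corollary \ref{corollary:uniform_frechet_differentiability_mc}, with Lemma \ref{lemma:derivative_lipschitz_continuous_nl} replacing Lemma \ref{lemma:derivative_lipschitz_continuous_mc}.

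\textbf{Step 1 (integral remainder).} Fix $U$ with $\|U\|_\oplus\le R$ and $E$ with $\|E\|_\infty\le 1$. By Theorem \ref{theorem:frechet_derivative_nl}, the map $t\mapsto p_\nl(U+tE)$ is $C^1$ from $\R$ into $L^2$. Since $\|\cdot\|_1\le\|\cdot\|_2$, it is also $C^1$ into $L^1$, with derivative $Dp_\nl(U+tE)[E]$. The fundamental theorem of calculus in the Banach space $L^1$ then gives
$$
p_\nl(U+E)-p_\nl(U)-Dp_\nl(U)[E]=\int_0^1\bigl\{Dp_\nl(U+tE)-Dp_\nl(U)\bigr\}[E]\,dt .
$$
Taking $L^1$ norms moves the norm inside the integral.

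\textbf{Step 2 (Lipschitz bound on the derivative).} Apply Lemma \ref{lemma:derivative_lipschitz_continuous_nl} with $V=U+tE$. The lemma is stated for $\|E\|_\infty\le1$; by linearity of $E\mapsto Dp_\nl(\cdot)[E]$ we rescale to get
$$
\|Dp_\nl(U+tE)[E]-Dp_\nl(U)[E]\|_1\le c\bigl(1+\|U\|_\oplus^c+\|U+tE\|_\oplus^c\bigr)\,t\,\|E\|_\infty^2 .
$$
The triangle inequality for $\|\cdot\|_\oplus$, together with $\|\cdot\|_\oplus\le\|\cdot\|_\infty$, gives $\|U+tE\|_\oplus\le R+\|E\|_\infty\le R+1$. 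Integrating in $t$ therefore yields
$$
\|p_\nl(U+E)-p_\nl(U)-Dp_\nl(U)[E]\|_1\le \tfrac{c}{2}\bigl(1+R^c+(R+1)^c\bigr)\|E\|_\infty^2 .
$$
This bound is uniform over $\|U\|_\oplus\le R$. Dividing by $\|E\|_\infty$ and letting $\|E\|_\infty\to0$ proves the corollary.

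\textbf{Main obstacle.} The only delicate point is justifying Step 1. It requires that $t\mapsto Dp_\nl(U+tE)[E]$ is continuous into $L^1$ and that $Dp_\nl$ is the same object whether viewed in $L^2$ or in $L^1$. Both follow from the continuous Fr\'echet differentiability into $L^2$ given by Theorem \ref{theorem:frechet_derivative_nl}, combined with the continuous embedding $L^2\hookrightarrow L^1$ (as $\mu_\rf$ is a probability measure). Continuity can also be read off directly from Lemma \ref{lemma:derivative_lipschitz_continuous_nl}. Everything else reduces to the already established Lipschitz estimate, whose constant depends only on the seminorms and is therefore uniform on the $\|\cdot\|_\oplus$-ball.
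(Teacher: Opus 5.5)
Your proposal is correct and is the argument the paper intends. The paper omits this proof, saying it follows exactly as in Corollary \ref{corollary:uniform_frechet_differentiability_mc} by combining the fundamental theorem of calculus along $t\mapsto p_\nl(U+tE)$ with Lemma \ref{lemma:derivative_lipschitz_continuous_nl}, which is what your Steps 1--2 do (including the passage from $L^2$ to $L^1$ via $\|\cdot\|_1\le\|\cdot\|_2$, as in the proof of Theorem \ref{theorem:stability_nl}), and restricting to $\|E\|_\infty\le 1$ is harmless since only the limit $\|E\|_\infty\to 0$ matters.
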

\begin{proof}
  This corollary follows by the exactly same argument as for the MCHF; apply the fundamental theorem of calculus with Lemma \ref{lemma:derivative_lipschitz_continuous_nl}. Thus, we omit the proof. 
\end{proof}

\section{Alignment dynamics of MCHF and NLHF}\label{sec:asymptotics_general_U}
For $*\in \{\mc, \nl, \rl\}$, let $p_{*} = \frac{d\mu_*}{d\mu_{\rf}}$ be the density with respect to $\mu_\rf$. 

\subsection{Proof of Proposition \ref{proposition:derivative_additive_antisymmetric}}
We prove a general version of Proposition \ref{proposition:derivative_additive_antisymmetric} for general base point and perturbation (not necessarily antisymmetric):
\begin{proposition}\label{proposition:derivative_additive}
  For any $g, f\in L^\infty$ and $E\in L^\infty(\mu_\rf\otimes \mu_\rf)$,
  \begin{align*}
    D p_\mc(g\oplus f)[E] &= J_{p_\rl(f)} E^\star p_{\rl}(f)\\
    D p_\nl(g\oplus f)[E] &= J_{p_\rl(f)} E^\star p_{\rl}(-g)
  \end{align*}
  where $J_p h = p\odot h - p \langle p, h \rangle$. 
\end{proposition}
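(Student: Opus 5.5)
We plug the base point $U=g\oplus f$ into the explicit derivative formulas of Theorems \ref{theorem:frechet_derivative_mc} and \ref{theorem:frechet_derivative_nl}. The idea is that at an additive utility every operator in those formulas collapses to a rank-one or annihilating operator on mean-zero functions. The first step is to identify the equilibria at $U=g\oplus f$.

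\textbf{Equilibria at the additive base point.} The Markov density is $P_U(x,y)=e^{g(x)+f(y)}/\int\mu_\rf(dy')e^{g(x)+f(y')}=p_\rl(f)(y)$. It does not depend on $x$, so $P_U^\star h=p_\rl(f)\langle \bm 1,h\rangle$ and $p_\mc(U)=p_\rl(f)$. For NLHF, $U^\star q(y)=\langle q,g\rangle+f(y)$, hence $p=\softmax(U^\star q)=p_\rl(f)$. Similarly $Up(x)=g(x)+\langle p,f\rangle$, hence $q=\softmax(-Up)=p_\rl(-g)$. By uniqueness of the fixed point, $(p_\nl,q_\nl)=(p_\rl(f),p_\rl(-g))$.

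\textbf{MCHF.} On $L_0^1$ we have $P_U^\star h=0$, so $((I-P_U^\star)|_{L_0^1})^{-1}=I$. Since $P_U(x,\cdot)=p_\rl(f)$ for every $x$, the bracket in Theorem \ref{theorem:frechet_derivative_mc} reads
\[
\int\mu_\rf(dx)\,p_\rl(f)(x)\,p_\rl(f)(y)\Bigl[E(x,y)-\int\mu_\rf(dy')\,p_\rl(f)(y')E(x,y')\Bigr].
\]
The first part is $p_\rl(f)(y)\,E^\star p_\rl(f)(y)$. The second part is $p_\rl(f)(y)\langle p_\rl(f),E^\star p_\rl(f)\rangle$. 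Together they give $J_{p_\rl(f)}E^\star p_\rl(f)$. This source term lies in $L_0^1$, so applying the identity inverse is legitimate.

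\textbf{NLHF.} Write $p=p_\rl(f)$ and $q=p_\rl(-g)$. The key observation is that $J_qU$ annihilates $L_0^2$. Indeed, for $h\in L_0^2$ we have $Uh(x)=g(x)\langle\bm 1,h\rangle+\langle f,h\rangle=\langle f,h\rangle$, a constant, and $J_q$ kills constants. This is exactly the argument $J_{q_t}U_t|_{L_0^2}=J_{q_t}U_t^{g\oplus f}$ from the proof of Theorem \ref{theorem:stability_nl}, with $U^{g\oplus f}=0$.

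This observation gives two simplifications. First, $I+J_pU^\star J_qU$ restricted to $L_0^2$ is the identity. Second, the term $J_pU^\star J_qEp$ vanishes, because $J_qEp$ has mean zero and $J_pU^\star$ also annihilates mean-zero functions: $U^\star k(y)=\langle k,g\rangle+f(y)\langle\bm 1,k\rangle$ is constant when $k\in L_0^2$. It follows that $Dp_\nl(U)[E]=J_pE^\star q=J_{p_\rl(f)}E^\star p_\rl(-g)$.

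\textbf{Main obstacle.} The one delicate point is in the NLHF step. The inverse $(I+J_pU^\star J_qU)^{-1}$ is taken on $L^2$, so we need the argument it acts on to lie in $L_0^2$; only then does "identity on $L_0^2$" give the inverse. This holds because both terms of $h_{U,E}$ lie in the range of $J_p$. The rest is bookkeeping.

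Finally, Proposition \ref{proposition:derivative_additive_antisymmetric} is the special case $g=-f$, where $p_\rl(-g)=p_\rl(f)$.
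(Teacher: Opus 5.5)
Your proposal is correct and follows the paper's proof essentially step for step. At $U=g\oplus f$ the kernel becomes $P_U(x,\cdot)=p_\rl(f)$, so $((I-P_U^\star)|_{L_0^1})^{-1}=I$ and the source term reduces to $J_{p_\rl(f)}E^\star p_\rl(f)$. For NLHF the equilibrium is $(p_\rl(f),p_\rl(-g))$, and both $J_qU$ and $J_pU^\star$ annihilate $L_0^2$; this collapses the Schur-complement inverse to the identity and kills the second source term.

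One small caveat concerns only your closing remark. Deducing Proposition~\ref{proposition:derivative_additive_antisymmetric} also requires the antisymmetry of $E$, which gives $\langle p,E^\star p\rangle=0$ and hence $J_pE^\star p=p\odot E^\star p$. Setting $g=-f$ alone is not enough.
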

Note that  Proposition \ref{proposition:derivative_additive_antisymmetric} immediately follows from Proposition \ref{proposition:derivative_additive}. Indeed, if $E$ is antisymmetric, we have that for any density $p\in \Delta^\infty$, 
$$
J_p E^\star p = p \odot E^\star p - p \langle p, E^\star p\rangle =  p \odot E^\star p 
$$
since $\langle p, E^\star p\rangle=0$ by the antisymmetry of $E$. Therefore, assuming the antisymmetry of $E$ and substituting $g=-f$ in Proposition \ref{proposition:derivative_additive}, we obtain 
$D p_*((-f)\oplus f) = p_\rf(f) \odot E^\star p_\rl(f)$ for each $*\in\{\mc, \nl\}$, 
thereby completing the proof of Proposition \ref{proposition:derivative_additive_antisymmetric}.

Below we prove Proposition \ref{proposition:derivative_additive}. 
Recall the derivative formula by Theorem \ref{theorem:frechet_derivative_mc}:
\begin{align*}
  &Dp_\mc(U)[E]\\
  &=
\Bigl((I-P_U^\star)\mid_{L_0^1}\Bigr)^{-1}
\biggl(
y\mapsto
\int \mu_\rf(dx)
p_\mc(U)(x)P_U(x,y)
\biggl[
E(x,y)
-
\int \mu_\rf(dy')P_U(x,y')E(x,y')
\biggr]
\biggr).
\end{align*}
When $U=g\oplus f$, 
\begin{align*}
  P_{g\oplus f}(x, y) = \frac{\exp(g(x) +  f(y))}{\int \mu_\rf(dy') \exp(g(x) + f(y'))} = \frac{\exp( f(y))}{\int \mu_\rf(dy') \exp( f(y'))} = p_\rl(f)(y)
\end{align*}
and $p_\mc(g\oplus f)=p_\rl(f)$.
Note that $P_{g\oplus f}^\star h= 0$ for any $h\in L_0^1$ since 
$$
P_{g\oplus f}^\star h(y) = \int \mu_\rf(dx) h(x) P_{g\oplus f}(x, y) = \Bigl(\int \mu_\rf(dx) h(x) \Bigr) p_\rl(f)(y) = 0. 
$$
Thus, we have
$$((I - P_{g\oplus f}^\star)\mid_{L_0^1})^{-1} = I.$$
On the other hand, 
\begin{align*}
&\int \mu_\rf(dx)
p_\mc(g\oplus f)(x)P_{g\oplus f}(x,y)
\biggl[
E(x,y)
-
\int \mu_\rf(dy')P_{g\oplus f}(x,y')E(x,y')
\biggr]\\
&=\int \mu_\rf(dx)
p_\rl(f)(x) p_\rl(f)(y)
\biggl[
E(x,y)
-
\int \mu_\rf(dy') p_\rl(f)(y')E(x,y')
\biggr]\\
 &= p_\rl(f)(y) \Bigl([E^\star p_\rl(f)](y) - \int\mu_\rf(dy')[E^\star p_\rl(f)](y') \Bigr)\\
 &= [J_{p_\rl(f)} E^\star p_\rl(f)](y)
\end{align*}
Therefore, we get 
$Dp_\mc(U)[E] = J_{p_\rl(f)} E^\star p_\rl(f)$, 
and the proof is complete for $\mc$. 

For the NLHF, recall the derivative formula from Theorem \ref{theorem:frechet_derivative_nl}:
$$
Dp_\nl(U)[E]
=
\bigl(I+J_{p(U)}U^\star J_{q(U)}U\bigr)^{-1}
\bigl(
J_{p(U)}E^\star q(U)
-
J_{p(U)}U^\star J_{q(U)}E p(U)
\bigr)
$$
where $p(U)=p_\nl(U)$ and $q(U)=q_\nl(U)$ on the RHS. 
Note $p_\nl(g\oplus f)=p_\rl(f)$ and $q_\nl(g\oplus f) =p_\rl(-g)$.  

Now, for any $h\in L_0^2 = \{h\in L^2: \int\mu_\rf(dy) h(y)=0\}$, 
\begin{align*}
((f\oplus g) h )(x) &= \int \mu_\rf(dy) (f\oplus g)(x, y) h(y)
= \int \mu_\rf(dy) g(y) h(y).
\end{align*}
Thus, $(f\oplus g) h$ is a constant. Since $J_{p(U)}$ is a mean-subtracting operator, we get $J_{p_0} (f\oplus g) h = 0$ and hence
\begin{align*}
  J_{p(U)} (f\oplus g) \mid_{L_0^2} = 0. 
\end{align*}
By the same argument, we also get 
$
  J_{q(U)} (g\oplus f) \mid_{L_0^2} = 0.
$
Since the image space of $J_{p(U)}$ and $J_{q(U)}$ are both $L_0^2$, 
when $U=g\oplus f$, we get 
\begin{align*}
\bigl(I+J_{p(U)}U^\star J_{q(U)}U\bigr)^{-1} = I, \qquad
  J_{p(U)}E^\star q(U)
-
J_{p(U)}U^\star J_{q(U)}E p(U) &=  J_{p(U)} E^\star q(U).
\end{align*}
Finally,  substituting $p_\nl(g\oplus f)=p_\rl(f)$ and $q_\nl(g\oplus f) =p_\rl(-g)$, we obtain
$Dp_\nl(g\oplus f)[E] = J_{p_\rl(f)} E^\star p_\rl(-g)$ and the proof is complete. 

\begin{remark}
Proposition \ref{proposition:derivative_additive} yields a generalization of Corollary \ref{corollary:asymptotics_equilibrium} to a general $U\in L^\infty(\mu_\rf\otimes \mu_\rf)$, not necessarily antisymmetric. Indeed, for any $L^2$ solution $(\hat{g}, \hat{f})\in \argmin_{g,f}\|U-g\oplus f\|_2$, noting that $\|U-\hat{g}\oplus \hat{f}\|_\infty \asymp \|U\|_\oplus$ from Proposition \ref{proposition:L2_estimate}, applying the uniform Fr\'echet differentiability results in
 Corollary \ref{corollary:uniform_frechet_differentiability_mc} and \ref{corollary:uniform_frechet_differentiability_nl}, together with the derivative formula in Proposition \ref{proposition:derivative_additive}, we obtain
\begin{align*}
        \|p_{\mc}(U) - p_{\rl}(\hat f) - J_{p_\rl(\hat f)} (U-\hat g\oplus \hat f)^\star p_{\rl}(\hat f)\|_{L^1(\mu_{\rf})} &= o(\|U\|_\oplus)\\
        \|p_{\nl}(U) - p_{\rl}(\hat f) - J_{p_\rl(\hat f)} (U-\hat g\oplus \hat f)^\star p_{\rl}(-\hat g)\|_{L^1(\mu_{\rf})} &= o(\|U\|_\oplus)
\end{align*}
for any sequence of $U\in L^\infty(\mu_\rf\otimes\mu_\rf)$ such that $\|U\|_\oplus \to 0$. 

Thus, MCHF and NLHF differ in their first order term by $p_\rl(\hat{f})$ and $p_\rl(-\hat{g})$. Notice that they coincide when $U$ is antisymmetric, since $\hat{g}=-\hat{f}$. This recovers Corollary \ref{corollary:asymptotics_equilibrium} in the antisymmetric case. 
\end{remark}

\subsection{Proof of Theorem \ref{theorem:asymptotics_iterations}}
Assume $U$ is antisymmetric. 
Let us show the claim for $t=1$ first. Noting that $d\mu_\rf/d\mu_\rf = \bm{1}$, using the notation in \Cref{app:frechet_differentiability}, 
  we can write $p^1_\mc= \frac{d\mu_\mc^1}{d\mu_\rf}$ and $p^1_\nl=\frac{d \mu_\nl^1}{d\mu_\rf}$ as 
  \begin{align*}
 p_\mc^1(U) = P_U^\star \bm{1} = \int \mu_\rf(dx)
\softmax(U(x,\cdot)), \quad p_\nl^1(U) = \softmax(U^\star \bm{1}).
  \end{align*}
  Then, by the exactly same argument we used in \Cref{app:frechet_differentiability} and \ref{app:uniform_frechet_differentiability}, we can show that for any constant $R\ge 0$, 
  \begin{align}\label{eq:unifrom_differentiability_t1}
         \lim_{\|E\|_\infty\to 0}
\sup_{\|U\|_\oplus\le R}
\frac{
\|p_*^1(U+E)-p_*^1(U)-Dp_*^1(U)[E]\|_1
}{
\|E\|_\infty
}
=0,
  \end{align}
where 
$$
D p_\mc^1(U)[E] = \int \mu_\rf(dx) J_{\softmax(U(x, \cdot))} E(x, \cdot), \quad D p_\nl^1(U)[E] = J_{\softmax(U^\star 1)} E^\star \bm1.
$$
Here, when $U=g\oplus f$, we have  $\softmax(U(x, \cdot))=\softmax(U^\star \bm1) = p_\rl(f)$ and hence 
 the Fr\'echet derivatives collapse into the following form for each $*\in \{\mc, \nl\}$:
$$
D p_*^1 (g\oplus f)[E] = J_{p_\rl(f)} E^\star \bm{1}.
$$
Now, substituting $E=U-(-\hat{f})\oplus \hat{f}$ and $(g, f)=(-\hat{f}, \hat{f})$ with $\hat{f}(\cdot)=\int \mu_\rf(dx) U(x, \cdot) = U^\star \bm{1}$, we have
$$
(U-(-\hat{f})\oplus \hat{f})^\star \bm1 = U^\star \bm1 - ((-\hat{f})\oplus \hat{f})^\star \bm1 = \hat{f} - \hat{f} = 0 
$$
where the second equation follows from 
$((-\hat{f})\oplus \hat{f})^\star \bm1 = \hat{f}(\cdot) - \langle\bm1, \hat{f}\rangle$ and $ \langle\bm1, \hat{f}\rangle = \langle\bm1, U^\star 1\rangle=0$ by the antisymmetry of $U$. Thus, when $U$ is antisymmetric, the derivative at $(-\hat{f})\oplus\hat{f}$ to the direction $U-(-\hat{f})\oplus \hat{f}$
is $0$:
$$
D p_*^1 ((-\hat{f})\oplus \hat{f})[U-(-\hat{f}\oplus \hat{f})] = J_{p_\rl(\hat{f})} (U-(-\hat{f})\oplus \hat{f})^\star \bm1  = J_{p_\rl(\hat{f})} 0 = 0. 
$$
Substituting this to \eqref{eq:unifrom_differentiability_t1}, using $\|U-(-\hat{f})\oplus \hat{f}\|_\infty \asymp \|U\|_\oplus$ from Proposition \ref{proposition:L2_estimate_antisymmetric} and $p_*^1((-\hat{f})\oplus \hat{f})=p_\rl(\hat{f})$, we obtain 
$$
\|p_*^1(U) - p_\rl(\hat{f}) - 0\|_1 = o(\|U\|_\oplus). 
$$
This completes the proof for $t=1$. 

For $\sup_{t\ge 2}$, we compare the iterates with their equilibrium. By Corollary \ref{corollary:stationary_dist} and Theorem \ref{theorem:convergence_nl_tv}, for each $*\in\{\mc,\nl\}$, we have
$$
\forall t\ge 1,\qquad
\|p_*^t(U)-p_*(U)\|_1
\le
\|U\|_\oplus^t \|\bm 1-p_*(U)\|_1
\le
2\|U\|_\oplus^t .
$$
Here the second inequality follows from the triangle inequality,
$
\|\bm 1-p_*(U)\|_1
\le
\|\bm 1\|_1+\|p_*(U)\|_1
=
2$. 
For MCHF, this estimate follows directly from Corollary \ref{corollary:stationary_dist}, since the contraction coefficient satisfies
$c(U)\le \|U\|_\oplus$. For NLHF, since $U$ is antisymmetric, the two equilibrium marginals coincide, that is, $\mu_\nl=\nu_\nl$. Moreover, our single iterate $p_\nl^t$ can be identified with the two-stage iterates $(\mu_t,\nu_t)$ in Theorem \ref{theorem:convergence_nl_tv} through
$
p_\nl^{2t}
=
\frac{d\mu_t}{d\mu_\rf}$ and $
p_\nl^{2t+1}
=
\frac{d\nu_t}{d\mu_\rf}.
$
Hence, the contraction estimate of Theorem \ref{theorem:convergence_nl_tv} applies to the present iteration as well.

Therefore, for any sequence of antisymmetric utilities \(U\) such that \(\|U\|_\oplus\to0\), 
$$
\sup_{t\ge 2}
\|p_*^t(U)-p_*(U)\|_1
=
O(\|U\|_\oplus^2)
=
o(\|U\|_\oplus).
$$
Combining this estimate with Corollary \ref{corollary:asymptotics_equilibrium} and applying the triangle inequality completes the proof for $\sup_{t\ge 2}$.

\section{Analysis of inference-time refinement via hitting times}\label{sec:inference_time_refinement}
We formulate inference-time refinement as the expected stopping time until the user is satisfied. 
Let us fix an antisymmetric utility function $U$ and a reference distribution $\mu_\rf$. For each $\gamma\ge 0$, define the Markov kernel
  $$
  \MP_\gamma(x, dy)
  =
  \frac{\exp(\gamma U(x, y)) \mu_\rf(dy)}
  {\int \exp(\gamma U(x, y'))\mu_\rf(dy')}.
  $$
  Now, for each measurable set $A$ such that $\mu_\rf(A)>0$, 
  let $T_\gamma(x, A)$ denote the expected hitting time from state $x$ to the set $A$ under the Markov kernel $\MP_\gamma$, with $T_\gamma(x, A)=0$ whenever $x\in A$. We further define
  $$
      T_\gamma(A) = \int \mu_\rf(dx) (1+T_\gamma(x, A)) = 1 + \int_{A^c} \mu_\rf(dx) T_\gamma(x, A)
  $$
  Thus, $T_\gamma(A)$ is the expected time to hit $A$ when the initial state is drawn from $\mu_\rf$, with one additional initial sampling step included.

  When $\gamma=0$, the Markov chain samples independently from $\mu_\rf$ at every step. Hence,  we have the explicit formula
  $$
      T_0(x, A)
      =
      \begin{cases}
        0, & x\in A,\\
        \frac{1}{\mu_\rf(A)}, & x\notin A,
      \end{cases}
      \qquad
      T_0(A)=\frac{1}{\mu_\rf(A)}. 
  $$
  Our goal is to characterize conditions on $U$, for any measurable set $A$ such that $\mu_\rf(A)>0$, under which there exists small $\gamma>0$ such that  
  $$
    T_\gamma(A) < T_0(A). 
  $$
  We address this question by calculating the derivatives of $T_\gamma$ at $\gamma=0$.

  \begin{theorem}\label{theorem:hitting_time}
    Assume that $U\in L^\infty(\mu_\rf\otimes \mu_\rf)$ is antisymmetric. 
    Fix a measurable set $A$ such that $\mu_\rf(A)>0$. 
    Then, $\gamma\mapsto T_\gamma(A)$ is $C^2$ on $\R$. In particular, the first derivative at $\gamma=0$ is given by
    $$
        \dot{T}_0(A)
        =
        -\frac{\mu_\rf(A^c)}{\mu_\rf(A)} \int \mu_\rf(dx) s(x, A), 
    $$
    where $s(x, A)$ is the average preference from $x$ to $A$:
    $$
      s(x, A) = \frac{1}{\mu_\rf(A)}\int_A \mu_\rf(dy) U(x, y).
    $$
    We further assume that $\int\mu_\rf(dy) U(x, y)=0$ for all $x$ (in which case $\dot{T}_0(A)=0$ for any $A$ such that $\mu_\rf(A)>0$). 
      Then, the second derivative at $\gamma=0$ can be written as 
      \begin{align*}
        \ddot{T}_0(A) = \frac{1}{\mu_\rf(A)} \int_{A^c} \mu_\rf(dx)\bigl(
       e(x, \mathcal{X}) -e(x, A) - 2 \mu_\rf(A)s(x, A)^2
        \bigr)
      \end{align*}
      where $e(x, B)$ for $B\in \{A, \mathcal{X}\}$ is the average energy defined as 
      $$
         e(x, B) = \frac{1}{\mu_\rf(B)}\int_B \mu_\rf(dy) U(x, y)^2.
      $$
  \end{theorem}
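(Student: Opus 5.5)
The plan is to write the hitting time as the solution of a linear system on $A^c$, differentiate that system in $\gamma$, and evaluate at $\gamma=0$, where it becomes rank one. For $h_\gamma(x)=T_\gamma(x,A)$ on $A^c$, first-step analysis gives
\[
h_\gamma = \bm 1 + Q_\gamma h_\gamma,
\qquad
Q_\gamma h(x)=\int_{A^c}\mu_\rf(dy)\,P_\gamma(x,y)h(y),
\]
as an equation in $L^\infty(A^c,\mu_\rf)$. Here $P_\gamma(x,y)=e^{\gamma U(x,y)}/\int\mu_\rf(dy')e^{\gamma U(x,y')}$. We then have $T_\gamma(A)=1+\int_{A^c}\mu_\rf(dx)h_\gamma(x)$.

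The first step is invertibility, uniformly for $\gamma$ in compact sets. Since $P_\gamma(x,y)\ge e^{-2|\gamma|\|U\|_\infty}$, we get
\[
\|Q_\gamma\|_{\infty\to\infty}\le 1-\mu_\rf(A)e^{-2|\gamma|\|U\|_\infty}<1 .
\]
This is the same minorization idea as in Lemma~\ref{lemma:minorization}. Hence $I-Q_\gamma$ has a bounded inverse given by a Neumann series, so $h_\gamma=(I-Q_\gamma)^{-1}\bm 1$ is well defined and coincides with the expected hitting time.

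The second step is smoothness. The map $\gamma\mapsto P_\gamma$ is real-analytic into $L^\infty(\mu_\rf\otimes\mu_\rf)$, by the same softmax computation as \eqref{eq:derivative_P_U}. Therefore $\gamma\mapsto Q_\gamma$ is analytic in operator norm on $L^\infty(A^c)$. Inversion is analytic on the invertible operators, so $\gamma\mapsto h_\gamma$, and hence $\gamma\mapsto T_\gamma(A)$, is $C^2$ (indeed $C^\infty$). Equivalently, one can invoke the implicit function theorem as in Theorem~\ref{theorem:frechet_derivative_mc}. Differentiating the fixed-point identity gives
\[
\dot h = (I-Q)^{-1}\dot Q h,
\qquad
\ddot h=(I-Q)^{-1}\bigl(\ddot Q h + 2\dot Q\dot h\bigr).
\]

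The third step is the evaluation at $\gamma=0$, where everything is explicit:
\begin{itemize}
\item $Q_0 h=\int_{A^c}h\,d\mu_\rf$ is a constant function, and $h_0=\mu_\rf(A)^{-1}$.
\item $(I-Q_0)^{-1}g=g+\mu_\rf(A)^{-1}\int_{A^c}g\,d\mu_\rf$, so that $\int_{A^c}(I-Q_0)^{-1}g=\mu_\rf(A)^{-1}\int_{A^c}g$.
\item With $\bar U(x)=\int\mu_\rf(dy)U(x,y)$, we have $\dot P_0=U-\bar U$ and $\ddot P_0=(U-\bar U)^2-\int\mu_\rf(dy')(U(x,y')-\bar U(x))^2$.
\end{itemize}
Since $\int_{\mathcal X}\dot P_0(x,y)\,\mu_\rf(dy)=0$, we get $\dot Q_0 h_0(x)=-\bigl(s(x,A)-\bar U(x)\bigr)$. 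This yields
\[
\dot T_0(A)=-\mu_\rf(A)^{-1}\int_{A^c}\bigl(s-\bar U\bigr)\,d\mu_\rf .
\]
To match the stated form, we use antisymmetry. It gives $\int_{\mathcal X}\bar U\,d\mu_\rf=0$ and $\int_A s(x,A)\,\mu_\rf(dx)=0$, and also
\[
\int_{A^c}\bar U\,d\mu_\rf=-\int_A\bar U\,d\mu_\rf=\int_{A^c}\mu_\rf(A)s\,d\mu_\rf .
\]
Combining these should give $-\tfrac{\mu_\rf(A^c)}{\mu_\rf(A)}\int\mu_\rf(dx)\,s(x,A)$.

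For the second derivative, under $\bar U\equiv 0$, we compute
\begin{itemize}
\item $\ddot Q_0 h_0$, using $\int_{A^c}U^2=\mu_\rf(\mathcal X)e(x,\mathcal X)-\mu_\rf(A)e(x,A)$;
\item $\dot h_0=(I-Q_0)^{-1}(-s)$, which is explicit;
\item $2\dot Q_0\dot h_0$, where the cross term produces the $-2\mu_\rf(A)s(x,A)^2$ contribution through $\int_{A^c}U(x,y)s(y,A)\,\mu_\rf(dy)$ together with antisymmetry.
\end{itemize}
Integrating over $A^c$ with the rule $\int_{A^c}(I-Q_0)^{-1}g=\mu_\rf(A)^{-1}\int_{A^c}g$ then gives $\ddot T_0(A)$.

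I expect the main obstacle to be this last bookkeeping, in particular the cross term $\int_{A^c}\mu_\rf(dx)\int_{A^c}\mu_\rf(dy)\,U(x,y)\,s(y,A)$. It has to be rewritten, via antisymmetry (swapping $x,y$) and the mean-zero assumption, as an integral of $s(x,A)^2$ over $A^c$. I would first try exactly this swap: $U(x,y)=-U(y,x)$ turns $\int_{A^c}U(x,y)\,dx$ into $\int_A U(y,x)\,dx$ up to sign, which equals $\mu_\rf(A)s(y,A)$.
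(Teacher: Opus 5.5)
Your proposal is correct and follows essentially the same route as the paper. Both arguments use the first-step linear system on $L^\infty(A^c,\mu_\rf)$, a minorization bound giving $\|Q_\gamma\|<1$, smoothness via the implicit function theorem (or analytic inversion), differentiation of the fixed-point identity, and evaluation at $\gamma=0$. Your explicit rank-one inverse $(I-Q_0)^{-1}g=g+\mu_\rf(A)^{-1}\int_{A^c}g\,d\mu_\rf$ is just a repackaging of the paper's step of integrating the differentiated equations over $A^c$. Your minorization bound with $|\gamma|$ has the small advantage of also covering $\gamma<0$, which the claimed $C^2$ regularity on $\R$ requires.
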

By the first order derivative formula, we can decrease the hitting time to $A$ whenever $\int \mu_\rf(dx) s(x, A)$ is positive. 
    This is intuitive: a positive value of this statistic means that the set $A$ is attractive on average. 

    For the second case, where $\int\mu_\rf(dy) U(x, y)=0$ for all $x$,  we have $\dot{T}_0(A)=0$ by the first order condition. In this case, NLHF equilibrium $\mu_\nl$ is reduced to $\mu_\rf$, meaning that marginal distributional alignment by NLHF alone cannot change hitting time to any set. In contrast, the Markov chain can still improve inference-time refinement through its transient dynamics. 
    
    Let us simplicity the second order condition. By Jensen's inequality, we have $\mu_\rf(A)s(x, A)^2\le e(x, A)$, and hence  $\ddot{T}_0(A)$ can be estimated as 
    \begin{align*}
      \frac{1}{\mu_\rf(A)} \int_{A^c}\mu_\rf(dx) \Bigl(-3 e(x, A) + e(x, \mathcal{X})\Bigr) \le \ddot{T}_0(A) \le       \frac{1}{\mu_\rf(A)} \int_{A^c}\mu_\rf(dx) \Bigl(-e(x, A) + e(x, \mathcal{X})\Bigr). 
    \end{align*}
    Therefore, the ratio 
        $$
        r(A) \equiv \frac{\int_{A^c} \mu_\rf(dx) e(x, A)}{\int_{A^c}\mu_\rf(dx) e(x, \mathcal{X})},
        $$
        which measures the \emph{relative energy} from $A^c$ to $A$, 
        characterizes the second order behavior in the sense that:
        \begin{align*}
       r(A)> 1 \Rightarrow \ddot{T}_0(A) < 0, \qquad  r(A)< 3^{-1}\Rightarrow \ddot{T}_0(A)>0
      \end{align*}

    Note that the second derivative formula can be further simplified when $\mu_\rf$ is a discrete measure and the target set $A$ is a singleton. 
    Let us consider the discrete case and take $\mu_\rf=\text{Unif}[n]$, and assume $\sum_y U(x, y)=0$ for all $x\in [n]$. Consider the hitting time to a single state, $A=\{y\}$. In this case, $\dot{T}_0(\{y\})=0$. For the second derivative, noting $s(x, \{y\}) = U(x, y)$ and $e(x, \{y\}) = U(x, y)^2$, we have 
    \begin{align*}
              \ddot{T}_0(\{y\}) &= \frac{1}{1/n}\sum_{x\ne y} \frac1n \Bigl(
          -U(x, y)^2 + \frac1n \sum_{y'}U(x, y')^2 - \frac2n U(x, y)^2
        \Bigr)\\
        &= -\Bigl(1+\frac3n\Bigr) \sum_x U(x, y)^2 + \frac1n\sum_{x,y'}U(x,y')^2 && \text{by $U(y, y)=0$ and $U(x, y)^2=U(y, x)^2$}
    \end{align*}
    Now that the insight is much clearer; we can decrease the hitting time to $y$ in second order whenever $\sum_x U(x, y)^2 \ge (n+3)^{-1}\|U\|_F^2$, that is, the energy sum at column $y$ is sufficiently larger than the total energy (the Frobenius norm of $U$). 

    The example of rock-paper-scissors game in \Cref{subsec:coupling_hitting_time}, corresponds to this case where $n=3$,  $\sum_x U(x, y)=0$ and $\sum_x U_{x,y}^2=2$ for all $y$. In this case, $\ddot{T}_0(\{y\}) = - (1+1) 2 + 2 < 0$, meaning that we can decrease the hitting time to any state for a finite $\gamma$.

\begin{remark}
In Theorem \ref{theorem:hitting_time}, we only state the second-derivative formula under the simplifying assumption that the row sum vanishes. This is mainly for readability: one can derive an explicit formula of the second derivative for general $U$. More generally, one can compute higher-order derivatives; the $r$-th derivative is an $r$-th order polynomial in $U$. However, the resulting expressions may be less interpretable for $r\ge 3$.
The current analysis also focuses on the local behavior around $\gamma=0$. At this point, we do not yet have a complete picture of the global behavior over $\gamma\in[0,\infty)$ (see \Cref{fig:global_local}). 
    \begin{figure}[htpb]
    \centering
    \includegraphics[width=0.68\linewidth]{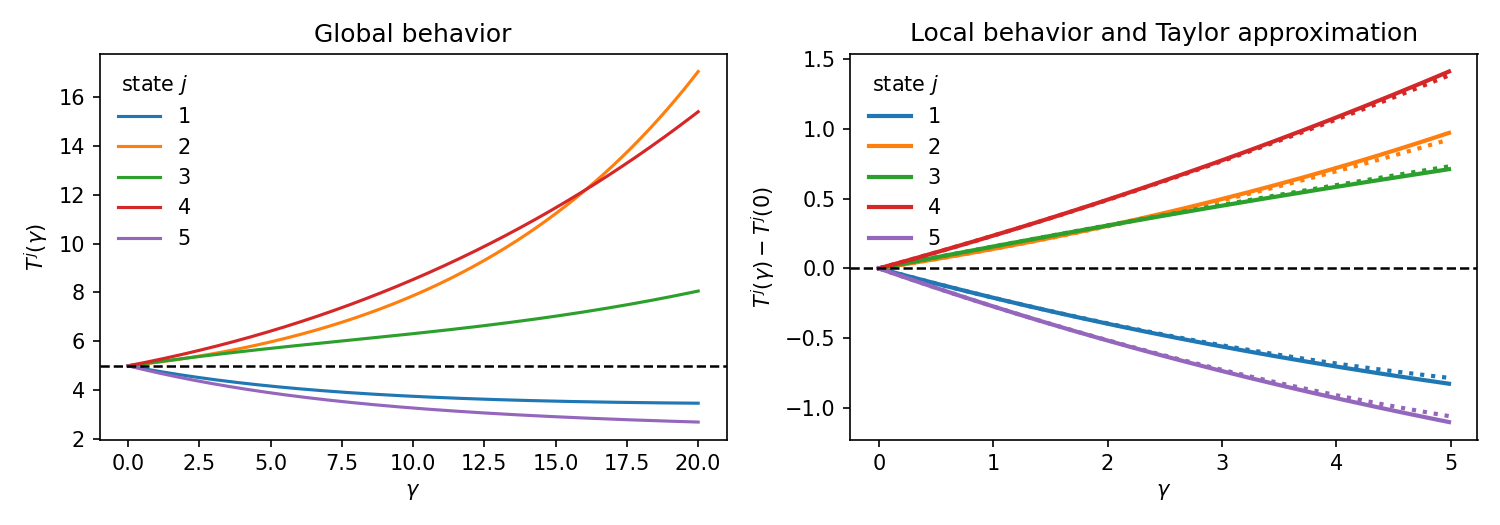}
    \includegraphics[width=0.3\linewidth]{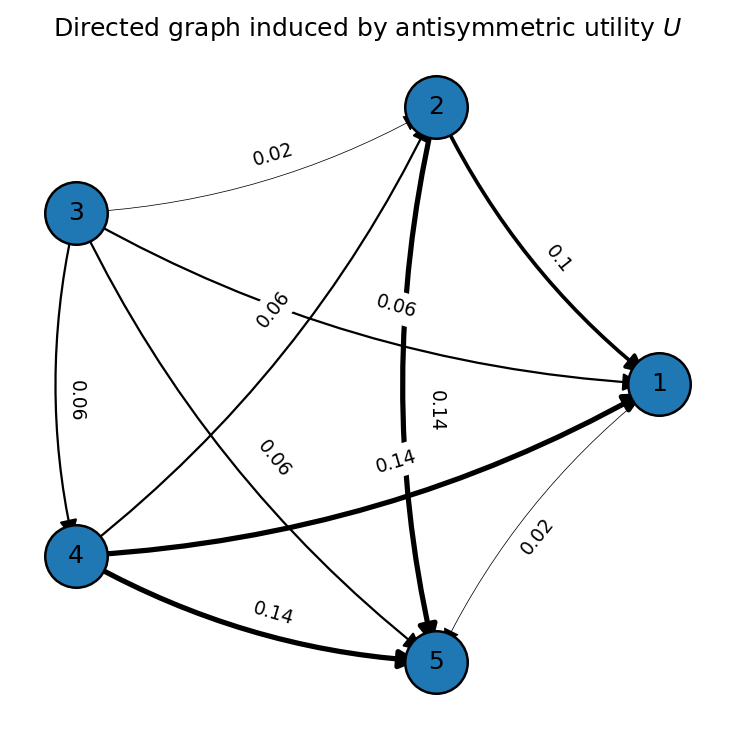}
    \caption{
    Numerical simulation of hitting time. 
    The left panel shows the global behavior of $T_\gamma(\{j\})$, while the middle panel shows the local difference $T_\gamma(\{j\})-T_0(\{j\})$ together with the second-order Taylor approximations at $\gamma=0$ shown as dotted curves. The right panel visualizes the antisymmetric utility $U$ as a directed weighted graph.
    }
    \label{fig:global_local}
  \end{figure}
  \end{remark}

\subsection{Proof of Theorem \ref{theorem:hitting_time}}
Fix measurable set $A$ such that $\mu_\rf(A)>0$. Write $T_\gamma(x)=T_\gamma(x,A)$ and $t_\gamma=T_\gamma(A)$ so that
$$
    t_\gamma=1+\int_{A^c}T_\gamma(x)\mu_\rf(dx).
$$
By the Markov property, for $x\in A^c$, $T_\gamma$ satisfies 
$$
    T_\gamma(x)
    =
    1+\int_{A^c}P_\gamma(x,y)T_\gamma(y)\mu_\rf(dy)\quad \text{where} \quad  P_\gamma(x,y)
    =
    \frac{\exp(\gamma U(x,y))}
    {\int \exp(\gamma U(x,z))\mu_\rf(dz)}.
$$
Define the subset $L^\infty(A^c, \mu_\rf)$ of $L^\infty(\mu_\rf)$ as 
$$L^\infty(A^c, \mu_\rf)=\{f\in L^\infty(\mu_\rf): f(x)=0\text{ for all $x\in A$}\}$$
and define the linear operator $P_\gamma$ on $L^\infty(A^c,\mu_\rf)$ as 
$
P_\gamma h(x)=\int_{A^c}P_\gamma(x,y)h(y)\mu_\rf(dy).
$
Then, 
$$
    T_\gamma-P_\gamma T_\gamma-\mathbf 1=0  \quad \text{on } L^\infty(A^c, \mu_\rf).
$$
Now we claim that the operator norm of $P_\gamma$ on $L^\infty(A^c, \mu_\rf)$ is strictly less than $1$. For all $h\in L^\infty(A^c, \mu_\rf)$, 
\begin{align*}
 \frac{\|P_\gamma h\|_\infty}{\|h\|_\infty}
    &\le
 \esssup_x \int_{A^c}P_\gamma(x,y)\mu_\rf(dy)= \esssup_x \frac{\int_{A^c} e^{\gamma U(x, y)}\mu_\rf(dy)}{\int_{A^c} e^{\gamma U(x, y)}\mu_\rf(dy) + \int_{A} e^{\gamma U(x, y)}\mu_\rf(dy)}.
\end{align*}
Combined with the minorization/maximization argument, 
\begin{align*}
  \essinf_x  \int_{A} e^{\gamma U(x, y)}\mu_\rf(dy) \ge \mu(A)  e^{-\gamma \|U\|_\infty}, \qquad 
  \esssup_x \int_{A^c} \exp^{\gamma U(x, y)} \mu_\rf(dy) &\le \mu(A^c)e^{\gamma \|U\|_\infty}, 
\end{align*}
and using the assumption $\mu(A)>0$, we obtain
\begin{align*}
     \sup_{h\in L^\infty(A^c, \mu_\rf)} \frac{\|P_\gamma h\|_\infty}{\|h\|_\infty} \le \frac{\mu(A^c)  e^{\gamma \|U\|_\infty}}{\mu(A^c)  e^{\gamma \|U\|_\infty} + \mu(A^c)e^{-\gamma \|U\|_\infty}} = \frac{\mu(A^c)}{\mu(A^c) + \mu(A) e^{-2\gamma\|U\|_\infty}} < 1, 
\end{align*}
Therefore, the linear operator $I-P_\gamma$ has the bounded inverse $\sum_{t=0}^\infty P_\gamma^t$ on $L^\infty(A^c, \mu_\rf)$. 

Since $\gamma\mapsto P_\gamma$ is $C^2$ as a bounded operator on $L^\infty(A^c,\mu_\rf)$, the implicit function theorem on the Banach space $L^\infty(A^c, \mu_\rf)$ implies that $\gamma\mapsto T_\gamma$ is $C^2$ from $\R$ to $L^\infty(A^c, \mu_\rf)$. Hence, $t_\gamma=1+\int_{A^c}T_\gamma(x)\mu_\rf(dx)$ is also $C^2$. 
Furthermore, differentiating
$T_\gamma-P_\gamma T_\gamma-\mathbf 1=0$
once and twice gives
$$
    (I-P_\gamma)\dot T_\gamma=\dot P_\gamma T_\gamma,
    \qquad
    (I-P_\gamma)\ddot T_\gamma
    =
    \ddot P_\gamma T_\gamma+2\dot P_\gamma\dot T_\gamma. 
$$

We first compute $\dot t_0=\int_{A^c}\dot{T}_0(x)\mu_\rf(dx)$. Note  $T_0(x)=\mu_\rf(A)^{-1}$ on $A^c$ and 
$$
P_0(x, y)=1, \quad  \dot P_0(x,y)
    =
    U(x,y)-\int U(x,z)\mu_\rf(dz).
$$
Thus, we have $P_0 \dot T_0(x) = \int_{A^c} \mu_\rf(dy) \dot{T}_0(y) = \dot{t}_0$ (constant). 
Therefore, integrating the first derivative equation at $\gamma=0$ over $A^c$ gives
$$
   (1-\mu_\rf(A^c)) \dot t_0
    =
    \int_{A^c}\dot P_0T_0(x)\mu_\rf(dx) =
    \frac{1}{\mu_\rf(A)}
    \int_{A^c}\int_{A^c}
    \biggl(
        U(x,y)-\int U(x,z)\mu_\rf(dz)
    \biggr)
    \mu_\rf(dy)\mu_\rf(dx).
$$
By antisymmetry,
$\int_{A^c}\int_{A^c}U(x,y)\mu_\rf(dy)\mu_\rf(dx)=0$ and hence 
\begin{align*}
    \mu_\rf(A) \dot t_0
    &=
    -\frac{\mu_\rf(A^c)}{\mu_\rf(A)}
    \int_{A^c}\int U(x,z)\mu_\rf(dz)\mu_\rf(dx) \\
    &=
    \frac{\mu_\rf(A^c)}{\mu_\rf(A)}
    \int_A\int U(x,z)\mu_\rf(dz)\mu_\rf(dx) \\
    &=
    -\mu_\rf(A^c) \int s(z,A)\mu_\rf(dz).
\end{align*}
Dividing by $\mu_\rf(A)>0$, we complete the proof of the first derivative. 

Next, we compute $\ddot t_0=\int_{A^c}\ddot{T}_0(x)\mu_\rf(dx)$ under the assumption that the row mean is zero, that is, 
$\int U(x,y)\mu_\rf(dy)=0$ for all $x$. 
Then $\dot P_0(x,y)=U(x,y)$ and $\dot t_0=0$. Hence, for $x\in A^c$,
$$
    \dot T_0(x)
    =
    \dot P_0T_0(x) 
    =
    \frac{1}{\mu_\rf(A)}
    \int_{A^c}U(x,y)\mu_\rf(dy)
    =
    -s(x,A).
$$
Integrating the second derivative equation at $\gamma=0$ over $A^c$ gives
$$
    \mu_\rf(A)\ddot t_0
    =
    \int_{A^c}\ddot P_0T_0(x)\mu_\rf(dx)
    +
    2\int_{A^c}\dot P_0\dot T_0(x)\mu_\rf(dx).
$$
The second term is
$$
    \int_{A^c}\dot P_0\dot T_0(x)\mu_\rf(dx)
    =
    -\int_{A^c}\int_{A^c}
    U(x,y)s(y,A)\mu_\rf(dy)\mu_\rf(dx) 
    =
    -\mu_\rf(A)\int_{A^c}s(y,A)^2\mu_\rf(dy),
$$
where we used antisymmetry and the row-mean-zero assumption. Also,
$$
    \ddot P_0(x,y)
    =
    U(x,y)^2-\int U(x,z)^2\mu_\rf(dz) =  U(x,y)^2-e(x,\mathcal{X})
$$
so, with $\int \mu_\rf(dy) \ddot P_0(x,y) = 0$, 
$$
    \ddot P_0T_0(x)
    =
    \frac{1}{\mu_\rf(A)}
    \int_{A^c}\ddot P_0(x,y)\mu_\rf(dy)  =  -\frac{1}{\mu_\rf(A)}
    \int_{A}\ddot P_0(x,y)\mu_\rf(dy) =
  -e(x,A) +  e(x,\mathcal{X}).
$$
Therefore, 
$$
    \mu_\rf(A)\ddot t_0
    =
    \int_{A^c}
    \bigl(e(x,\mathcal X)-e(x,A)\bigr)
    \mu_\rf(dx)
    -
    2\mu_\rf(A)\int_{A^c}s(x,A)^2\mu_\rf(dx). 
$$
Dividing by $\mu_\rf(A)$, we complete the proof. 

\end{document}